\documentclass[a4paper,11pt]{article}

\usepackage{xr}
\makeatletter

\newcommand*{\addFileDependency}[1]{
\typeout{(#1)}
\@addtofilelist{#1}
\IfFileExists{#1}{}{\typeout{No file #1.}}
}\makeatother

\usepackage{xr-hyper}
\usepackage{mattsstyle}
\usepackage{tcolorbox}
\usepackage{soul}
\usepackage{bm}
\usepackage{comment}
\usepackage{xfrac}
\usepackage{float}
\usepackage{subcaption}
\usepackage{tikz}
\usepackage{tikz-cd}
\usepackage{makecell}

\usepackage{booktabs,tabularx,ragged2e}
\newcolumntype{Y}{>{\RaggedRight\arraybackslash}X}
\usepackage{adjustbox} 

\def\Lp#1{\mathrm{L}^{#1}}
\def\Hk#1{\mathrm{H}^{#1}}

\def\Wkp#1#2{\mathrm{W}^{#1,#2}}
\def\Hk#1{\mathrm{H}^{#1}}
\def\spaceBar{\, | \,}

\DeclareMathOperator{\range}{ran}

\long\def\aw#1{%
  \ifmmode
    {\color{darkgreen}#1}%
  \else
    {\color{darkgreen}#1}%
  \fi
}

\def\commentOut#1{}

\definecolor{ETHBlue}{RGB}{33,116,174}	
\definecolor{ETHGold}{RGB}{255,209,0}    

\definecolor{UCLADarkerBlue}{RGB}{0,85,135} 
\definecolor{UCLADarkestBlue}{RGB}{0,59,92} 

\definecolor{UCLADarkerGold}{RGB}{255,199,44} 
\definecolor{UCLADarkestGold}{RGB}{255,184,28} 

\definecolor{UCLAPurple}{RGB}{130,55,255} 

\allowdisplaybreaks

\usepackage{pifont}
\title{Kernel Methods for Learning Operators with \\ Multiple Inputs and Outputs}
\author[a]{Adrien Weihs\thanks{Corresponding author. Email: \texttt{weihs@math.ucla.edu}.}}
\author[b]{Chunyang Liao \thanks{This work was initiated while the author was at the University of California Los Angeles.}}
\author[c]{Jingmin Sun}
\author[a]{Hayden Schaeffer}

\affil[a]{Department of Mathematics,\protect\\ University of California Los Angeles,\protect\\ Los Angeles, CA 90095, USA. \vspace{\baselineskip}}
\affil[b]{Department of Mathematical Sciences, \protect\\ University of Arkansas, \protect\\ Fayetteville, AR, 72701, USA. \vspace{\baselineskip}}
\affil[c]{Department of Applied Mathematics and Statistics,\protect\\ Johns Hopkins University,\protect\\ Baltimore, MD 21218 , USA. \vspace{\baselineskip}}

\usepackage{xspace}

\newcommand{\KernelMO}{KernelMO\xspace}
\newcommand{\KernelMOOV}{KernelMO-OV\xspace}
\newcommand{\KernelMOPS}{KernelMO-PS\xspace}

\newcommand{\KernelO}{KernelO\xspace}

\usepackage{pgfplots}
\pgfplotsset{compat=1.18}
\usepgfplotslibrary{colormaps}

\date{ }

\begin{document}

\maketitle

\begin{abstract}
\noindent Learning mappings between infinite-dimensional objects is a central challenge in scientific machine learning. We introduce a general kernel-based encoder-decoder framework for operator learning that separates observation, representation, learning, and reconstruction. We develop this framework for multi-input, multi-output operator learning, where operators map between products of potentially distinct function spaces. Our approximation theory shows that, although the number of inputs and outputs can increase, the convergence rate is governed by the most challenging constituent approximation problem rather than the overall problem dimension. The framework leads to practical kernel methods with closed-form training and inference, combining mathematical tractability with computational efficiency. 
We further specialize the approach to multiple operator learning by introducing \KernelMO, a family of kernel methods with complementary operator-valued and product-space formulations.
Across five families of parametric partial differential equations, the proposed methods achieve competitive or state-of-the-art predictive accuracy while reducing training and inference costs relative to neural operator architectures and deep learning based models, offering an efficient and lightweight alternative.\footnote{An implementation of \KernelMO, together with the code used to reproduce all numerical experiments, is available at \url{https://github.com/liaochunyang/kernelMO}.}

\end{abstract}

\keywords{Multiple operator learning, operator learning, kernel methods, encoder–decoder methods, operator-valued kernels, scientific machine learning, parametric partial differential equations.}

\subjclass{46E22, 65D15, 41A05}

\section{Introduction}

\begin{figure}
\centering
\begin{tikzpicture}
\begin{axis}[
    hide axis,
    axis equal image,
    clip=false,
    xmin=-5.3, xmax=5.3,
    ymin=-4.4, ymax=5.1,
    scale only axis,
    width=9cm,
    height=7.4cm
]

\addplot [
    patch,
    shader=interp,
    mesh/color input=explicit,
    mesh/colorspace explicit color output=rgb,
    data cs=polar,
] coordinates {
    (90,4)  [color=UCLAPurple!80]
    (210,4) [color=red!80]
    (-30,4) [color=UCLADarkerBlue]
};

\node[text=black, font=\bfseries\small, align=center] at (axis cs:0,4.6) {Theory};
\node[text=black, font=\bfseries\small, align=center] at (axis cs:-3,-2.8) {Empirical performance};
\node[text=black, font=\bfseries\small, align=center] at (axis cs:3.7,-2.8) {Scalability};

\addplot[
    only marks,
    mark=*,
    mark size=3pt,
    black
] coordinates {
    (-0,-1.6)   
    (-0.0,0)    
    (-1.5,0.6)    
};

\node[text=white, font=\bfseries\footnotesize, align=center] at (axis cs:1.4,-1.45) {Foundation\\models};
\node[text=white, font=\bfseries\footnotesize] at (axis cs:0.8,-0.05) {MNO};
\node[text=white, font=\bfseries\footnotesize, align=center] at (axis cs:-0.4,1.1) {Kernel\\methods};

\end{axis}
\end{tikzpicture}
\caption{
Conceptual illustration of the trade-offs between theoretical guarantees, empirical performance, and scalability in learning operators with multiple inputs and outputs. Foundation models prioritize scalability across large collections of operators, dedicated neural architectures such as MNO seek a balance between scalability and predictive performance, while the kernel-based methods proposed in this work target the moderate-data regime by emphasizing theoretical guarantees together with competitive predictive accuracy. 
}
\label{fig:tradeoff}
\end{figure}

Learning families of related operators between function spaces is an increasingly important problem in scientific machine learning, arising from parametric partial differential equations, inverse problems, and multi-fidelity simulation \cite{weihs2026generalizationboundsstatisticalguarantees}. This setting, commonly referred to as \emph{multiple operator learning} or \emph{multi-task operator learning}, seeks to learn a map
\[
G:W\longrightarrow \{G[\alpha]:U\to V\}_{\alpha\in W},
\]
which assigns to each parameter (or task) \(\alpha\in W\) an operator
\(
G[\alpha]:U\to V,
\)
where \(W\), \(U\), and \(V\) are typically function spaces. 
A variety of approaches have been proposed for multiple operator learning, see for example \cite{sun2025foundation,liu2024prose,mccabe2023multiple,cao2024vicon,zhang2024modno,liu2025bcat,ye2025pdeformer,Jollie_2025,herde2024poseidon,weihs2025MNO,wang2026opinfllmparametricpdesolving}. As illustrated conceptually in Figure~\ref{fig:tradeoff}, existing methods can be viewed as occupying different positions in the trade-off between scalability, empirical performance, and theoretical guarantees. Large-scale foundation models leverage massive datasets to learn highly expressive representations across diverse families of PDEs and scientific simulations. These models have demonstrated remarkable empirical performance and excellent scalability, although their theoretical understanding remains comparatively limited. Dedicated architectures, such as Multiple Neural Operators (MNO) \cite{weihs2025MNO,weihs2026generalizationboundsstatisticalguarantees,weihs2026multipleneuraloperatorsachieve}, instead exploit the structure of the learning problem to achieve strong empirical performance while benefiting from increasing theoretical support.

Many scientific computing applications; however, are not constrained by scalability to large training datasets. Instead, they prioritize predictive accuracy, mathematical guarantees, and computational efficiency in moderate-data settings. This motivates the development of learning architectures tailored to this regime by utilizing the general encoder-decoder formulation illustrated in Figure~\ref{fig:cd-general-form}: inputs and outputs are first encoded into latent spaces, a surrogate is learned between these representations, and the prediction is subsequently decoded to the original output space.
A key feature of this framework is that the surrogate can be chosen independently of the encoders and decoders: whereas existing approaches predominantly employ deep neural networks for this purpose \cite{deepOnet,mionet,lanthalerPCAnet}, we instead utilize kernel-based surrogate models. 

Importantly, the proposed kernel-based encoder-decoder approach naturally extends beyond classical multiple operator learning to general multi-input, multi-output operator learning problems, i.e., learning maps of the form
\begin{equation} \label{eq:intro:multiMap}
    G:X_1\times\cdots\times X_m
\longrightarrow
Y_1\times\cdots\times Y_n,
\end{equation}
where each \(X_i\) and \(Y_j\) may be a distinct function space. We show that our approach is mathematically tractable in this generalized setting: it admits rigorous approximation guarantees, scales favorably with the numbers of input and output tasks, and leads to highly efficient computational implementations.

\subsection{Main Contributions}

The main contributions of this work can be summarized as follows. 

\begin{enumerate}
    \item \textbf{A general kernel framework for encoder--decoder learning.}
    We develop a general kernel-based framework for learning maps between Hilbert spaces within an encoder--decoder architecture. Specifically, in Theorem~\ref{thm:induced-kernel-original-spaces}, we show that every kernel defined on the latent surrogate space induces a corresponding kernel on the original input and output spaces. In addition, in Corollary~\ref{cor:induced-measurement-space-equivalence}, we establish an equivalence between learning in the latent reproducing kernel Hilbert space and learning with the induced kernel on the original spaces. Together, these results provide a rigorous theoretical foundation for kernel-based encoder--decoder learning.

\item \textbf{An encoder--decoder error decomposition.}
We derive an encoder--decoder error decomposition in Theorem \ref{thm:encoder-decoder-learning-error} of the form
\[
\text{Approximation error}
\;\lesssim\;
\text{Representation error}
+
\text{Learning error}
+
\text{Data consistency error},
\]
thereby separating the contributions of the encoder--decoder architecture, the learned surrogate, and the measurement process.

    \item \textbf{Approximation theory for multi-input, multi-output operator learning.}
    We specialize this framework to generalized multi-input, multi-output operator learning on product Hilbert spaces of functions. For a map as in \eqref{eq:intro:multiMap}, we establish rigorous approximation guarantees for the proposed kernel-based encoder--decoder architecture in Theorem~\ref{thm:total-product-sobolev-error}:
    \begin{align*}
        \text{Approximation error}
\;&\lesssim\;
\max_{1 \leq i \leq m} \{\text{Input reconstruction error}_i\} + \max_{1 \leq j \leq n}\{\text{Output reconstruction error}_j\} \\
&\qquad +
\text{Learning error}
+
\text{Data consistency error}.
    \end{align*}
    Importantly, these guarantees scale favorably with the numbers of input and output tasks: the approximation rate is governed by the most challenging individual task rather than deteriorating with the overall problem dimension.

    \item \textbf{Kernel formulations for multiple operator learning.}
In Section~\ref{sec:multiple}, we instantiate the proposed framework for multiple operator learning through the \KernelMO family of methods. Depending on the representation of the target map, this yields the operator-valued formulation \KernelMOOV, which directly approximates
\(
G:W\to\{G[\alpha]:U\rightarrow V\}_{\alpha\in W},
\)
or the product-space formulation \KernelMOPS, which instead approximates the equivalent map
\(
G:W\times U\rightarrow V.
\)
Both formulations inherit the approximation guarantees of the general framework while exhibiting different computational complexities and scalability with respect to the number of training samples.

    \item \textbf{Comprehensive empirical evaluation.}
    We validate the proposed methods on a diverse collection of multiple operator learning benchmarks arising from parametric PDEs in Section \ref{sec:experiments}. Our experiments demonstrate competitive predictive accuracy, substantial reductions in training and inference time compared with state-of-the-art neural operator architectures, and illustrate the practical advantages of kernel-based encoder--decoder learning in moderate-data regimes.
    
\end{enumerate}

\subsection{Related Works}

\paragraph{Multi-operator learning}

Multiple operator learning arises in a variety of settings where one seeks to approximate not a single operator, but an entire collection of related input--output mappings. Such collections may emerge naturally from parameterized models, changing geometries, varying boundary conditions, or different governing equations. Alternatively, they may be constructed deliberately to enable information sharing across related learning tasks, with the goal of improving sample efficiency, robustness, and generalization. These viewpoints have motivated a rapidly expanding literature on multiple operator learning and related frameworks; see, for example, \cite{sun2025foundation,liu2024prose,mccabe2023multiple,yang2023incontext,yang2023prompting,cao2024vicon,zhang2024modno,zhang2024d2no,liu2025bcat,ye2025pdeformer,zhang2025probabilistic,Jollie_2025,herde2024poseidon,bacho2025operatorlearningmachineprecision,weihs2025MNO,weihs2026generalizationboundsstatisticalguarantees,wang2026opinfllmparametricpdesolving, zhu2025pi, zhang2025deeponet, sun2024lemon}. Recently, \cite{yang2026generalizationguaranteesmultiinputneural} extended the analysis to the general multi-input, multi-output operator learning framework.

From a modeling perspective, existing approaches can largely be grouped into two categories. The first treats each operator as an independent learning problem, training a separate surrogate for every member of the collection. While simple, this approach cannot exploit relationships between operators and therefore offers limited transfer to new tasks. The second instead represents the collection as a single parameterized operator family, in which an auxiliary variable identifies the desired operator. This variable may encode, for example, physical parameters, a task identifier, the governing equations, or even textual descriptions of the problem \cite{sun2025foundation,liu2024prose,yang2023prompting,negrini2025multimodal,liu2024prosefd,weihs2025MNO,weihs2026generalizationboundsstatisticalguarantees,weihs2026multipleneuraloperatorsachieve}. By conditioning on such auxiliary information, a single model can share information across related operators and often exhibits improved transfer, zero-shot capabilities, and out-of-distribution generalization. This structure forms the basis of most recent work on PDE foundation models.

\paragraph{Kernel-based surrogate modeling}

Kernel-based methods have long served as a fundamental framework for surrogate modeling due to their strong approximation property and rigorous theoretical foundations.
By representing the target function in a reproducing kernel Hilbert space (RKHS), these methods provide flexible, nonparametric approximations from scatter data while naturally supporting interpolation, regularization, and uncertainty quantification. 
Classical kernel surrogates include kernel interpolation, kernel ridge regression, and Gaussian process regression, all of which have found widespread use in scientific computing; see, for example, \cite{Stuart2010Inverse,Harlim2020Kernel,CHEN20211kernelPDE,Hou2023Sparse,BATLLE2025Error,jalalian2025dataefficientkernelmethodslearning}.

These ideas have been extended to operator learning, where the objective is to learn nonlinear mappings between function spaces rather than finite-dimensional input-output pairs \cite{jin2023minimax,BATLLE2024112549, bacho2025operatorlearningmachineprecision, zhangBelnet, zhang2025discretization, MORA2025operator,kempf2026kernelbasedoperatorlearningerror}.
By extending reproducing kernel Hilbert space (RKHS) techniques from finite-dimensional regression to mappings between function spaces, it provides a principled framework for learning nonlinear operators while admitting rigorous analyses of approximation accuracy, stability, and generalization.  
Numerical studies have demonstrated that kernel-based operator learning models can achieve competitive performance compared with neural networks-based operator learning methods. These advantages make kernel operator learning both a mathematically principled framework for constructing surrogates of complex solution operators and an important benchmark for evaluating the accuracy and generalization capability.

Although kernel operator learning has a theoretical foundation, simple implementation, and competitive performance, existing methods rely on computationally expensive kernel representation. 
This motivates the development of scalable kernel surrogate models that retain the theoretical advantages of kernel-based methods while providing computational efficiency; see, for example, \cite{Nelson2024operatorRFM,Liao2025Cauchy,Yu2026Regularized}.

\paragraph{Organization of the paper} The remainder of the paper is organized as follows. In Section \ref{sec:back}, we introduce the assumptions and mathematical framework underlying our analysis. Section \ref{sec:main} presents the main theoretical results, including the general encoder--decoder framework, its approximation theory for multi-input, multi-output operator learning, and its specialization to the \KernelMO family of methods for multiple operator learning. In Section \ref{sec:experiments}, we numerically evaluate the proposed methods across a range of multiple operator learning problems. Proofs of all theoretical results are collected in Appendix \ref{sec:proofs}.

\section{Background} \label{sec:back}

In this section, we review the mathematical background required throughout the paper. We begin by introducing the framework of minimum-norm recovery from bounded linear measurements, which provides a canonical reconstruction procedure and forms the basis for the encoder--decoder construction developed in subsequent sections. We then briefly recall the theory of operator-valued reproducing kernel Hilbert spaces, which serves as the learning space for the latent surrogate operators considered in this work.

\subsection{General Notation}

For a space $X$, the map $\Id_X$ denotes the identity map on $X$. We write $\cL(X)$ for the set of bounded linear operators mapping $X$ to $X$. For a map $L$, we denote its domain by $D(L)$, its adjoint by $L^*$, its kernel by $\ker(L)$, and its range by $\range(L)$. We say that a map is boundedly invertible if it is bijective and its inverse is bounded. For $x\in\mathbb{R}$, we write
\(
(x)_+ := \max\{x,0\}
\)
for its positive part. For $x\in X$ and $r>0$, we denote by
\(
B_X(x,r):=\{y\in X:\|x-y\|_X<r\}
\)
the open ball of radius $r$ centered at $x$, and write $B_X(r):=B_X(0,r)$. For an open set $\Omega\subset\mathbb{R}^d$, we denote by $\Wkp{s}{p}(\Omega)$ the Sobolev space of order $s\ge0$ and integrability exponent $1\le p\le\infty$. Its seminorm is denoted by
\(
|\cdot|_{\Wkp{s}{p}(\Omega)},
\)
and its norm by
\(
\|\cdot\|_{\Wkp{s}{p}(\Omega)}.
\)
When $p=2$, we write
\(
\Hk{s}(\Omega):=\Wkp{s}{2}(\Omega).
\)

\subsection{Optimal Recovery from Linear Measurements} \label{sec:optimalRecovery}

In this section, we introduce minimum-norm recovery from bounded linear measurements in a Hilbert-space setting. This provides a canonical way of reconstructing functions from observations. We allow the observation space to be an arbitrary Hilbert space, thereby encompassing finite collections of measurements as well as infinite-dimensional observation spaces. Typical examples include local averages \cite{Babenko10092023}, integral functionals, and Fourier coefficients \cite{magaril_il_yaev_2025_tnfsf-8yb65,K_Yu_Osipenko_2001}. In operator learning, the most common setting is a finite collection of pointwise measurements of the input and output functions \cite{BATLLE2024112549}. The following result gives explicit formulas for the reconstructed function. For completeness, we provide a proof in Section~\ref{sec:proof:back}.

\begin{theorem}[Minimum-norm recovery]\label{thm:minimumNorm}
    Let \(X\) and \(Z\) be Hilbert spaces and let
\(
    L:X\to Z
\)
be a bounded linear measurement operator. Given a measurement vector
\(S\in Z\) and $\gamma >0$, consider the minimum-norm recovery problem
\[
    \overline{x}(S)
    =
    \argmin_{x\in X}
    \{\|x\|_X \spaceBar Lx=S\}.
\]
and the regularized recovery problem
\[
    \overline{x}_\gamma(S)
    =
    \argmin_{x\in X}
    \left\{
        \|x\|_X^2+\gamma^{-1}\|Lx-S\|_Z^2
    \right\}.
\]
\begin{enumerate}
    \item If $L$ is surjective, the unique minimizer $\overline{x}(S)$ is given by 
\begin{equation}
    \label{eq:master-minimum-norm-formula}
    \overline{x}(S)
    =
    L^\ast(LL^\ast)^{-1}S.
\end{equation}
\item The unique minimizer $\overline{x}_\gamma(S)$ is given by
\begin{equation}
    \label{eq:master-regularized-minimum-norm-formula}
    \overline{x}_\gamma(S)
    =
    L^\ast(LL^\ast+\gamma I_Z)^{-1}S.
\end{equation}
\end{enumerate}

\end{theorem}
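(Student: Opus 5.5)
For part 1, the plan is to first show that $LL^\ast\in\cL(Z)$ is boundedly invertible whenever $L$ is surjective. By the open mapping theorem, surjectivity of $L$ gives a constant $c>0$ with $\|L^\ast z\|_X\ge c\|z\|_Z$ for all $z\in Z$; the standard closed-range argument yields this. Then $\langle LL^\ast z,z\rangle_Z=\|L^\ast z\|_X^2\ge c^2\|z\|_Z^2$, so $LL^\ast$ is self-adjoint and coercive. By Lax--Milgram it is therefore bijective with bounded inverse. This coercivity step is the only genuinely non-algebraic point, and I expect it to be the main obstacle.

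Next I verify that $x_0:=L^\ast(LL^\ast)^{-1}S$ is the minimizer. It is feasible, since $Lx_0=S$, and it lies in $\range(L^\ast)\subset\ker(L)^\perp$. The feasible set is the closed affine space $x_0+\ker(L)$. For any feasible $x=x_0+h$ with $h\in\ker(L)$, Pythagoras gives $\|x\|_X^2=\|x_0\|_X^2+\|h\|_X^2$. Hence $x_0$ is the unique minimizer, which is \eqref{eq:master-minimum-norm-formula}.

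For part 2, the functional $J(x)=\|x\|_X^2+\gamma^{-1}\|Lx-S\|_Z^2$ is continuous, strictly convex and coercive on $X$, so a unique minimizer exists. I will characterize it by the first-order condition: differentiating gives $\gamma x+L^\ast(Lx-S)=0$, that is,
\[
(L^\ast L+\gamma \Id_X)x=L^\ast S .
\]
The operator $L^\ast L+\gamma\Id_X$ is coercive with constant $\gamma$, hence boundedly invertible, so $\overline{x}_\gamma(S)=(L^\ast L+\gamma\Id_X)^{-1}L^\ast S$.

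To reach \eqref{eq:master-regularized-minimum-norm-formula}, I use the push-through identity $L^\ast(LL^\ast+\gamma I_Z)=(L^\ast L+\gamma\Id_X)L^\ast$. Here $LL^\ast+\gamma I_Z$ is invertible by the same coercivity argument, now with no surjectivity needed. Applying $(L^\ast L+\gamma\Id_X)^{-1}$ on the left and $(LL^\ast+\gamma I_Z)^{-1}$ on the right gives $(L^\ast L+\gamma\Id_X)^{-1}L^\ast=L^\ast(LL^\ast+\gamma I_Z)^{-1}$, which is the stated formula.
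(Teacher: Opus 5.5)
Your proposal is correct and follows essentially the same route as the paper: in part 1 you obtain that $L^\ast$ is bounded below from surjectivity (the paper derives this by restricting $L$ to $(\ker L)^\perp$ and applying the bounded inverse theorem, which is your open-mapping/closed-range step), then use coercivity of $LL^\ast$ with Lax--Milgram and the inclusion $\range(L^\ast)\subset(\ker L)^\perp$. In part 2 the paper likewise derives the normal equation $(L^\ast L+\gamma I_X)x=L^\ast S$ and checks that $L^\ast(LL^\ast+\gamma I_Z)^{-1}S$ solves it, which is your push-through identity applied to $S$; your explicit existence argument via coercivity of $J$ makes rigorous a point the paper leaves implicit.
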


In the encoder--decoder framework developed in Section \ref{sec:general}, the measurement operator \(L\) of Theorem \ref{thm:minimumNorm} will play the role of an encoder, while the corresponding minimum-norm or regularized recovery map will provide a natural choice of decoder.

\subsection{Reproducing Kernel Hilbert Spaces}

In this section, we review the necessary background on kernels and RKHSs. In particular, RKHSs are of interest because their structure equips the general encoder--decoder framework with mathematical structure and leads to well-posed learning problems (see Theorem \ref{thm:induced-kernel-original-spaces} and Corollary \ref{cor:induced-measurement-space-equivalence}). Moreover, RKHSs that embed continuously into Sobolev spaces (see Remark~\ref{rem:RKHSEmbedding}), provide a setting in which quantitative reconstruction error estimates can be derived, as developed in Section~\ref{sec:mainMulti}.

We first define an operator-valued kernel and then the corresponding function-valued RKHS, which were introduced in \cite{hachem2016operator}. Let $\cX$ and $\cY$ be separable Hilbert spaces endowed with the inner products $\langle\cdot,\cdot\rangle_\cX$ and $\langle\cdot,\cdot\rangle_\cY$. 

\begin{mydef}[Operator-valued Kernel]
We call $K:\cX\times\cX\to \cL(\cY)$ an operator-valued kernel if 
\begin{enumerate}
\item $K$ is Hermitian, i.e. $K(x,x') = K(x',x)^\top$ for all $x,x'\in\cX$, writing $A^\top$ for the adjoint of the operator $A$ with respect to $\langle\cdot,\cdot\rangle_\cY$.
\item $K$ is non-negative, i.e. for all $m\in \bbN$ and any set of points $\{(x^{(i)},y^{(i)}\}_{i=1}^m \subset \cX\times\cY$ it holds that $\sum_{i,j=1}^m\langle y^{(i)},K(x^{(i)},x^{(j)})y^{(j)}\rangle_\cY \geq 0$
\end{enumerate}
\end{mydef}
We call $K$ non-degenerate if $\sum_{i,j=1}^m\langle y^{(i)},K(x^{(i)},x^{(j)})y^{(j)}\rangle_\cY = 0$ implies $y^{(i)}=0$ for all $i$ whenever $x^{(i)}\neq x^{(j)}$ for all $i\neq j$.

\begin{mydef}[Function-valued Reproducing Kernel Hilbert Space]
A Hilbert space $\cH$ of functions from $\cX$ to $\cY$ is called a function-valued reproducing kernel Hilbert space if
there is a nonnegative $\cL(\cY)$-valued kernel $K$ on $\cX\times\cX$ such that:
\begin{enumerate}
\item the function $z\mapsto K(w,z)g$ belongs to $\cH$, for every $z,w\in\cX$ and $g\in\cY$,
\item for every $H\in\cH$, $w\in\cX$, and $g\in\cY$, $\langle H, K(w,\cdot)\rangle_\cH = \langle H(w),g\rangle_\cY$.
\end{enumerate}
\end{mydef}
There exists a bijection between non-negative kernel and RKHS, which is summarized in the following theorem \cite[Theorem 2]{hachem2016operator}.
\begin{theorem}[Bijection between function-valued RKHS and operator-valued kernel]
A $\cL(\cY)$-valued kernel $K$ on $\cX\times\cX$ is the reproducing kernel of some Hilbert space $\cH$, if and only if it is non-negative.    
\end{theorem}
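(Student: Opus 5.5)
The plan is to prove the two directions separately. The direction ``reproducing kernel $\Rightarrow$ non-negative'' is immediate. The converse direction will be obtained by a Moore--Aronszajn type completion argument, adapted to the operator-valued setting.

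\emph{Necessity.} Suppose $K$ is the reproducing kernel of some function-valued RKHS $\cH$. For $x\in\cX$ and $g\in\cY$, write $K_x g := K(x,\cdot)g\in\cH$. The reproducing property gives
\[
\langle K_{x'}g', K_x g\rangle_\cH = \langle K(x,x')g',g\rangle_\cY .
\]
Taking $H=\sum_j K_{x^{(j)}}y^{(j)}$ and expanding $\|H\|_\cH^2$ with this identity yields
\[
\|H\|_\cH^2=\sum_{i,j}\langle y^{(i)},K(x^{(i)},x^{(j)})y^{(j)}\rangle_\cY .
\]
This quantity is $\ge 0$, which is exactly non-negativity. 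The same identity, combined with the symmetry of the inner product, also gives the Hermitian property. (Up to conventions on which slot is conjugated, one should verify that the resulting identity matches $K(x,x')=K(x',x)^\top$.)

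\emph{Sufficiency.} Let $K$ be non-negative. Define the pre-Hilbert space
\[
\cH_0:=\operatorname{span}\{K(x,\cdot)g : x\in\cX,\ g\in\cY\},
\]
a space of functions $\cX\to\cY$. Equip it with the bilinear form
\[
\Big\langle \sum_i K(x^{(i)},\cdot)g_i,\ \sum_j K(z^{(j)},\cdot)h_j\Big\rangle_0:=\sum_{i,j}\langle K(z^{(j)},x^{(i)})g_i,h_j\rangle_\cY .
\]
The key steps, in order, are as follows.
\begin{enumerate}
\item Show that the form is well defined, i.e.\ independent of the chosen representation. It can be rewritten as $\sum_i\langle H(x^{(i)}),g_i\rangle$ or, symmetrically, in terms of the other function's values, so it depends only on the functions themselves. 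The Hermitian property is what makes the two rewritings agree.
\item Show that the form is symmetric and positive semidefinite, using the Hermitian property and non-negativity.
\item Establish the reproducing property on $\cH_0$, namely $\langle H,K(w,\cdot)g\rangle_0=\langle H(w),g\rangle_\cY$. Combined with Cauchy--Schwarz this gives
\[
|\langle H(w),g\rangle_\cY|\le \|H\|_0\,\langle K(w,w)g,g\rangle^{1/2}.
\]
Hence $\|H\|_0=0$ forces $H\equiv0$, so the form is a genuine inner product, and point evaluations are bounded.
\item Complete $\cH_0$ and identify the completion with a space of functions. A Cauchy sequence $(H_n)$ in $\cH_0$ converges pointwise in $\cY$ by the bound in the previous step, so each element of the completion can be identified with the limit function. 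The reproducing property then extends by continuity.
\end{enumerate}

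I expect the main obstacle to be step~4, and specifically the injectivity of the identification. One must show that if $(H_n)$ is Cauchy in $\cH_0$ and $H_n\to0$ pointwise, then $\|H_n\|_0\to0$. The standard argument applies here: fix $N$ large, so that
\[
\|H_n\|_0^2\le |\langle H_n,H_N\rangle_0|+\varepsilon .
\]
Since $H_N$ is a finite combination $\sum_j K(x^{(j)},\cdot)g_j$, the reproducing property gives
\[
\langle H_n,H_N\rangle_0=\sum_j\langle H_n(x^{(j)}),g_j\rangle_\cY ,
\]
and this tends to $0$ as $n\to\infty$ by pointwise convergence. Hence $\|H_n\|_0\to0$, the identification is injective, and the completion is a function-valued RKHS with reproducing kernel $K$. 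Finally, the two constructions give the bijection: the RKHS is unique for a given kernel, because $\cH_0$ must be dense in any RKHS having $K$ as its kernel, as any element orthogonal to all $K(x,\cdot)g$ vanishes pointwise.
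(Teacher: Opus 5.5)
Your route is the classical Aronszajn completion. The paper gives no proof of its own here; it only cites \cite{hachem2016operator}, and that source proves the result by the same construction. The step you set aside with ``up to conventions'' is not cosmetic, however, and has to be settled before the argument is correct.

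You, like the paper, take $K(x,\cdot)g$ to be the function $z\mapsto K(x,z)g$. With that reading the reproducing property gives $\langle K(x',\cdot)g',K(x,\cdot)g\rangle_{\mathcal H}=\langle K(x',x)g',g\rangle_{\mathcal Y}$, not $\langle K(x,x')g',g\rangle_{\mathcal Y}$. Your form $\langle\cdot,\cdot\rangle_0$ also fails the reproducing property in step~3. For $H=\sum_i K(x^{(i)},\cdot)g_i$ one has $H(w)=\sum_i K(x^{(i)},w)g_i$, whereas $\langle H,K(w,\cdot)g\rangle_0=\sum_i\langle K(w,x^{(i)})g_i,g\rangle_{\mathcal Y}$. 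Both of your formulas are instead correct for sections $z\mapsto K(z,x)g$.

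For operator-valued kernels the two readings are genuinely inequivalent. With sections $z\mapsto K(x,z)g$, the Gram form of $\sum_j K(x^{(j)},\cdot)y^{(j)}$ is $\sum_{i,j}\langle y^{(i)},K(x^{(j)},x^{(i)})y^{(j)}\rangle_{\mathcal Y}$. This is the non-negativity form with every block transposed in place, and such partial transposition does not preserve positivity. Concretely, take $\mathcal X=\{a,b\}$, $\mathcal Y=\mathbb R^2$ and the Hermitian kernel $K(a,a)=e_1e_1^\top$, $K(b,b)=e_2e_2^\top$, $K(a,b)=e_1e_2^\top$, $K(b,a)=e_2e_1^\top$. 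Its non-negativity form is $(y^{(a)}_1+y^{(b)}_2)^2\ge 0$. Yet $y^{(a)}=e_2$, $y^{(b)}=-e_1$ would force $\|K(a,\cdot)y^{(a)}+K(b,\cdot)y^{(b)}\|_{\mathcal H}^2=-2$. So no Hilbert space has this $K$ as reproducing kernel in the literal sense of the paper's definitions, and the statement as written fails in the sufficiency direction.

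The equivalence holds only for a matched pair of conventions:
\begin{itemize}
\item sections $K(\cdot,x)g$ together with $\sum_{i,j}\langle y^{(i)},K(x^{(i)},x^{(j)})y^{(j)}\rangle_{\mathcal Y}\ge 0$, or
\item sections $K(x,\cdot)g$ together with $\sum_{i,j}\langle K(x^{(i)},x^{(j)})y^{(i)},y^{(j)}\rangle_{\mathcal Y}\ge 0$.
\end{itemize}
Fix one of these at the outset; your formulas already realize the first. Then steps 1--4, your injectivity argument for the completion, and the uniqueness remark all go through.
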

The above theorem is parallel to the theorem of bijection between scalar-valued kernels and RKHS, which was first established by Aronszajn in \cite{Aronszajn1950RKHS}.

We note that the classical scalar- and vector-valued RKHS can be viewed as special cases of the function-valued RKHS. In particular, when we take the output space $\cY=\bbR$, the kernel function $K:\cX\times\cX\to \cL(\cY)$ reduces to a scalar-valued function $K:\cX\times\cX\to\bbR$, which is exactly the classical scalar-valued kernel function \cite{Foucart2022DS}.  
When the output space is $\cY=\bbR^d$, then bounded linear operators mapping $\bbR^d$ to $\bbR^d$ are simply all $d\times d$ matrices, i.e. $\cL(\cY) = \bbR^{d\times d}$. Therefore, we obtain a matrix-valued kernel function $K:\cX\times\cX\to \bbR^{d\times d}$ corresponding to the classical vector-valued RKHS setting \cite{alvarez2012vectorkernel}.
Scalar- and matrix-valued kernels are widely used in scalar- and vector-valued function approximation problems \cite{Wendland2004Scatter,alvarez2012vectorkernel}.
In practice, a simple choice of a matrix-valued kernel is the block-diagonal kernel
\begin{equation*}
K(x,x') = S(x,x') \Id_{d\times d}, 
\end{equation*}
where $S(x,x'):\cX\times\cX\to\bbR$ is a scalar-valued kernel. 
The block-diagonal matrix-valued kernel also implies that we can separately approximate each component of a vector-valued function.

\begin{remark}[RKHS interpolation as minimum-norm recovery]
Minimum-norm recovery from Theorem \ref{thm:minimumNorm} encompasses classical interpolation in reproducing kernel Hilbert spaces. When the ambient Hilbert space is an operator-valued RKHS and the observations consist of pointwise evaluations, the minimum-norm recovery problem is precisely the standard RKHS interpolation problem and the minimum-norm interpolant is therefore given by the classical vector-valued kernel interpolant (see the proof of Corollary \ref{cor:induced-measurement-space-equivalence}).
\end{remark}

\section{Main Results} \label{sec:main}

In this section, we develop the main theoretical framework of the paper. We first establish an abstract encoder--decoder formulation together with a decomposition of the prediction error into representation, learning, and data consistency terms. We then derive approximation guarantees for the multi-input, multi-output operator learning problem and characterize its approximation complexity in terms of the approximation complexities of the individual input-output operator learning tasks. Finally, we specialize the general framework to multiple operator learning, yielding practical learning algorithms. 

\subsection{Assumptions}

We review the assumptions used throughout our results. 

\begin{assumptions} We make the following assumption on our spaces.

\begin{enumerate}[label=\textbf{S.\arabic*}]

\item The spaces
\(
\mathcal H
\bigl(
J,
(\Omega_j)_{j=1}^J,
(n_j)_{j=1}^J,
(s_j)_{j=1}^J,
(p_j)_{j=1}^J,
(t_j)_{j=1}^J,
(q_j)_{j=1}^J,
(A_j)_{j=1}^J
\bigr)
\)
and \(
X
\bigl(
J,
(\Omega_j)_{j=1}^J, \allowbreak
(t_j)_{j=1}^J, \allowbreak
(q_j)_{j=1}^J
\bigr)
\)
are function sets such that
\begin{enumerate}
\item 
    \(
    \Omega_j\subset\mathbb R^{n_j}
    \)
    is a bounded domain with Lipschitz boundary;
\item the space
\(
\mathcal H
=
\prod_{j=1}^J\mathcal H_j
\)
is the Hilbert product of spaces \(\mathcal H_j\) of real-valued functions on
\(\Omega_j\), each satisfying the continuous embedding
\(
\mathcal H_j
\hookrightarrow
\Wkp{s_j}{p_j}(\Omega_j),
\)
and is equipped with the canonical product norm
\(
\|u\|_{\mathcal H}^2
=
\sum_{j=1}^J
\|u_j\|_{\mathcal H_j}^2.
\)
    \item the space X is given by \(
    X = \prod_{j=1}^J
\Wkp{t_j}{q_j}(\Omega_j)
    \)
    and is equipped with the the product norm
\(
\|u\|_X^2
=
\sum_{j=1}^J
\|u_j\|_{\Wkp{t_j}{q_j}(\Omega_j)}^2.
\)

    \item for each \(j=1,\ldots,J\), \(p_j,q_j\in[1,\infty]\) and
    \[
    s_j\ge n_j
    \quad\text{if }p_j=1,
    \qquad
    s_j>\frac{n_j}{p_j}
    \quad\text{if }1<p_j<\infty,
    \qquad
    s_j\in\mathbb N^*
    \quad\text{if }p_j=\infty;
    \]

    \item for each \(j=1,\ldots,J\), defining
    \(
    \ell_{0,j}
    :=
    s_j
    -
    n_j
    \left(
    \frac{1}{p_j}-\frac{1}{q_j}
    \right)_+,
    \)
    we have
    \[
    t_j\in\mathbb N_0,
    \qquad
    0\le t_j<\ell_{0,j};
    \] 
    \item for each \(j=1,\ldots,J\), let
    \(
    A_j
    =
    \{x_j^1,\ldots,x_j^{m_j}\}
    \subset\Omega_j
    \)
    be a finite sampling set with fill distance
    \[
    h_j
    :=
    \sup_{x\in\Omega_j}
    \min_{a\in A_j}
    \|x-a\|_2;
    \]
\end{enumerate}

\label{assumption:productSobolevSpace}

\end{enumerate}

Since
\(
s_j-t_j
>
n_j
\left(
\frac{1}{p_j}-\frac{1}{q_j}
\right)_+,
\)
the Sobolev embedding theorem \cite{adams2003sobolev}, together with the finite-measure
embedding when \(q_j<p_j\), yields
\(
\Wkp{s_j}{p_j}(\Omega_j)
\hookrightarrow
\Wkp{t_j}{q_j}(\Omega_j)
\)
continuously.
Consequently,
\(
\mathcal H_j
\hookrightarrow
\Wkp{t_j}{q_j}(\Omega_j)
\)
continuously for every \(j\), and hence
\(
\mathcal H\hookrightarrow X
\)
continuously. Moreover, the Sobolev embedding theorem also yields the continuous embedding
\(
\mathcal H_j
\hookrightarrow W^{s_j,p_j}(\Omega_j)
\hookrightarrow
C^0(\overline{\Omega_j}).
\)

The spaces \(\mathcal H\) and \(X\) play distinct roles throughout the paper.
The space \(X\) is the domain of the operator \(G\), and all approximation
errors are measured in the weaker norm of \(X\). In contrast, \(\mathcal H\)
serves as a regularity space: its stronger norm quantifies the additional
smoothness required to obtain interpolation and reconstruction estimates from
finitely many observations. This separation mirrors classical approximation
theory, where errors are typically measured in a weak norm while convergence
rates depend on higher-order regularity. For example, finite element estimates
take the form
\[
\|u-I_hu\|_{\Lp{2}(\Omega)}
\le
Ch^k
\|u\|_{\Hk{k}(\Omega)},
\]
where the error is measured in the weaker \(\Lp{2}\)-norm, whereas the rate is
governed by the stronger \(\Hk{k}\)-norm.

\end{assumptions}

\begin{remark}[Hilbert spaces embedded in Sobolev spaces] \label{rem:RKHSEmbedding}
An important class of examples for $\cH$ is provided by reproducing kernel Hilbert spaces.
Indeed, for several kernels used in approximation theory, such as Mat{\'e}rn and Wendland
kernels, the associated RKHS is continuously embedded in, or
is norm-equivalent to, a Sobolev space of finite smoothness (see
\cite[Chapter 10]{Wendland2004Scatter}).
\end{remark}

\begin{assumptions} We make the following assumptions on our encoding and decoding maps.
\begin{enumerate}[label=\textbf{M.\arabic*}]

\item For $R > 0$, we have $D_X E_X B_R(X) \subset B_R(X)$. \label{assumptions:M2}

\item The encoding/decoding maps and measurement space
\(
(E_{\cH},D_{\cH},Z_\cH)
\)
associated with a space
\[
\mathcal H\bigl(
J,
(\Omega_j)_{j=1}^J,
(n_j)_{j=1}^J,
(s_j)_{j=1}^J,
(p_j)_{j=1}^J,
(t_j)_{j=1}^J,
(q_j)_{j=1}^J,
(A_j)_{j=1}^J
\bigr)
\]
satisfying Assumption~\ref{assumption:productSobolevSpace} are defined as follows:
\begin{enumerate}
    \item we define the point-evaluation map $E_j:\mathcal H_j\to\mathbb R^{m_j}$ by $E_ju_j
    :=
    \bigl(
    u_j(x_j^1),\ldots,u_j(x_j^{m_j})
    \bigr)$
    and let
    \(
    Z_j:=\operatorname{Ran}(E_j);
    \)

    \item let
    \(
    D_j:Z_j\to\mathcal H_j
    \)
    denote the minimum-norm interpolant from Theorem~\ref{thm:minimumNorm};

    \item we define the measurement space
    \(
    Z_{\cH}
    :=
    \prod_{j=1}^J Z_j,
    \)
    together with the componentwise encoding and decoding maps
    \[
    E_\cH:\mathcal H\to Z_\cH,
    \qquad
    E_{\cH}u
    :=
    (E_ju_j)_{j=1}^J,
    \]
    and
    \[
    D_\cH:Z_\cH \to\mathcal H,
    \qquad
    D_\cH U
    :=
    (D_jU_j)_{j=1}^J.
    \]
\end{enumerate} \label{assumptions:M1}

\end{enumerate}

As shown in the proof of Theorem~\ref{thm:minimumNorm}, if $D_X$ is the minimum-norm interpolant, then Assumption~\ref{assumptions:M2} holds automatically.

\end{assumptions}

\begin{assumptions}
We make the following regularity assumptions on the map \(G:X\to Y\).
\begin{enumerate}[label=\textbf{O.\arabic*}]
\item
There exist \(R>0\) and a nondecreasing function
\(
\omega:[0,\infty)\to[0,\infty)
\)
with 
$\omega(0)=0$
such that, $
B_R(X)\subset\mathcal D(G)$ and, for every \(x,x'\in B_R(X)\), 
\[
\|G(x)-G(x')\|_Y
\le
\omega\left(\|x-x'\|_X\right).
\]
\label{assumption:regularityG}

\item Let
\(
\mathcal H_X
=
\mathcal H\bigl(
J_X,
(\Omega_{X,j})_{j=1}^{J_X},
(n_{X,j})_{j=1}^{J_X},
(s_{X,j})_{j=1}^{J_X}, \allowbreak
(p_{X,j})_{j=1}^{J_X},
(t_{X,j})_{j=1}^{J_X},
(q_{X,j})_{j=1}^{J_X},
(A_{X,j})_{j=1}^{J_X}
\bigr)
\)
and
\(
X
=
X\bigl(
J_X,
(\Omega_{X,j})_{j=1}^{J_X},
(t_{X,j})_{j=1}^{J_X},
(q_{X,j})_{j=1}^{J_X}
\bigr)
\)
satisfy Assumption~\ref{assumption:productSobolevSpace}.
Likewise, let
\(
\mathcal H_Y
=
\mathcal H\bigl(
J_Y,
(\Omega_{Y,k})_{k=1}^{J_Y}, \allowbreak
(n_{Y,k})_{k=1}^{J_Y},
(s_{Y,k})_{k=1}^{J_Y},
(p_{Y,k})_{k=1}^{J_Y},
(t_{Y,k})_{k=1}^{J_Y},
(q_{Y,k})_{k=1}^{J_Y}, \allowbreak
(A_{Y,k})_{k=1}^{J_Y}
\bigr)
\)
and
\(
Y
=
X\bigl(
J_Y,
(\Omega_{Y,k})_{k=1}^{J_Y},
(t_{Y,k})_{k=1}^{J_Y}, \allowbreak
(q_{Y,k})_{k=1}^{J_Y}
\bigr)
\)
satisfy Assumption~\ref{assumption:productSobolevSpace}.
\begin{itemize}
    \item There exist \(R>0\) and a nondecreasing function
\(
\omega:[0,\infty)\to[0,\infty)
\)
with
\(
\omega(0)=0
\)
such that
\(
B_R(\mathcal H_X)\subset \mathcal D(G)
\)
and, for every \(x,x'\in B_R(\mathcal H_X)\),
\[
\|G(x)-G(x')\|_Y
\le
\omega\left(\|x-x'\|_X\right).
\]
\item
The operator \(G\) maps \(B_R(\mathcal H_X)\) into \(\mathcal H_Y\), and there exists a constant \(M_R>0\) such that
\[
\sup_{x\in B_R(\mathcal H_X)}
\|G(x)\|_{\mathcal H_Y}
\le
M_R.
\]
\end{itemize}
\label{assumption:regularityG2}
\end{enumerate}

Assumption~\ref{assumption:regularityG} is an abstract continuity assumption on the operator
\(G:X\to Y\), expressed solely in terms of the natural spaces \(X\) and \(Y\).
It is sufficient to control the error introduced by reconstructing the input. Assumption~\ref{assumption:regularityG2} specializes to the Sobolev setting. Besides continuity in
the weaker \(Y\)-norm, it requires that \(G\) maps sufficiently regular inputs
into the stronger space \(\mathcal H_Y\) with uniformly bounded
\(\mathcal H_Y\)-norm on bounded subsets of \(\mathcal H_X\). This additional
regularity is needed to derive quantitative Sobolev reconstruction rates.

\end{assumptions}

\begin{assumptions}[Kernel learning in the encoder-decoder framework]
We make the following assumptions on our learning setup. 

\begin{enumerate}[label=\textbf{L.\arabic*}]
    \item Let
\(
\Gamma:Z_X\times Z_X\to\mathcal L(Z_Y)
\)
be an operator-valued kernel with reproducing kernel Hilbert space
\(
\mathcal H_\Gamma.
\)

Let
\(
x_1,\ldots,x_N\in X
\)
be training inputs and define measurements 
\(
U_i:=E_Xx_i\in Z_X.
\)
Let \(
S_i\in Z_Y,
\) for $i=1,\ldots,N$ be prescribed measured output data.  Define the observation operator
\[
L_{\Gamma,A}:\mathcal H_\Gamma\to Z_Y^N,
\qquad
L_{\Gamma,A}f
:=
(Af(U_1),\ldots,Af(U_N)),
\]
where
\(
A:=E_YD_Y:Z_Y\to Z_Y.
\) 
Write
\(
\mathbf U=(U_1,\ldots,U_N)
\), 
\(
\mathbf S=(S_1,\ldots,S_N),
\)
and define the block operator
\(
    \Gamma_A(\mathbf U,\mathbf U):Z_Y^N\to Z_Y^N
\)
by
\[
    \bigl(\Gamma_A(\mathbf U,\mathbf U)c\bigr)_i
    :=
    \sum_{j=1}^N
    A\Gamma(U_i,U_j)A^\ast c_j,
    \qquad c=(c_1,\ldots,c_N)\in Z_Y^N.
\]
For \(U\in Z_X\), define
\(
    \Gamma_A(U,\mathbf U):Z_Y^N\to Z_Y
\)
by
\[
    \Gamma_A(U,\mathbf U)c
    :=
    \sum_{j=1}^N
    \Gamma(U,U_j)A^\ast c_j.
\]
Whenever \(A=\Id_{Z_Y}\), we simply write
\(
\Gamma(\mathbf U,\mathbf U)
\)
and
\(
\Gamma(U,\mathbf U).
\) \label{assumption:kernel-learning}

\item 
Assume Assumption~\ref{assumption:kernel-learning}. In addition, let $S_i = E_Y G(x_i)$ and define $G_{\mathrm{enc}}(U_i):= E_YG(D_X U_i) \in Z_Y$ for $i=1,\ldots,N$. Let \(\lambda\ge 0\). If \(\lambda=0\), assume that
\(
    \Gamma(\mathbf U,\mathbf U):Z_Y^N\to Z_Y^N
\)
is boundedly invertible. 
Denote by $\widehat G_\lambda(z)$ and $\widehat G_{\mathrm{enc},\lambda}(z)$ the kernel interpolant (for $\lambda = 0$) or kernel ridge-regression estimator of the measured data $S_i$ and ideal encoded data $G_{\mathrm{enc}}(U_i)$, respectively. Let $\widehat G = \widehat G_\lambda$ and 
define
\(
        \overline G
        :=
        D_Y\circ \widehat G_\lambda \circ E_X
        :
        X\to Y.
\)
Define the residual smoother
\[
\mathcal R_{\mathbf U,\lambda} \eta: Z_X \to Z_Y, \qquad
    (\mathcal R_{\mathbf U,\lambda}\eta)(U)
    :=
    \Gamma(U,\mathbf U)
    \bigl(
        \Gamma(\mathbf U,\mathbf U)+\lambda \Id_{Z_Y^N}
    \bigr)^{-1}
    \eta, \qquad U \in Z_X
\]
where $\eta=(\eta_1,\ldots,\eta_N)\in Z_Y^N$ with $\eta_i
    :=
    S_i-G_{\mathrm{enc}}(U_i)$. \label{assumption:kernel-learning2}

\item  Let
\(
\mathcal H_X
=
\mathcal H\bigl(
J_X,
(\Omega_{X,j})_{j=1}^{J_X},
(n_{X,j})_{j=1}^{J_X},
(s_{X,j})_{j=1}^{J_X}, \allowbreak
(p_{X,j})_{j=1}^{J_X},
(t_{X,j})_{j=1}^{J_X},
(q_{X,j})_{j=1}^{J_X},
(A_{X,j})_{j=1}^{J_X}
\bigr)
\)
and
\(
X
=
X\bigl(
J_X,
(\Omega_{X,j})_{j=1}^{J_X},
(t_{X,j})_{j=1}^{J_X},
(q_{X,j})_{j=1}^{J_X}
\bigr)
\)
satisfy Assumption~\ref{assumption:productSobolevSpace}.
Likewise, let
\(
\mathcal H_Y
=
\mathcal H\bigl(
J_Y,
(\Omega_{Y,k})_{k=1}^{J_Y}, \allowbreak
(n_{Y,k})_{k=1}^{J_Y},
(s_{Y,k})_{k=1}^{J_Y},
(p_{Y,k})_{k=1}^{J_Y},
(t_{Y,k})_{k=1}^{J_Y},
(q_{Y,k})_{k=1}^{J_Y}, \allowbreak
(A_{Y,k})_{k=1}^{J_Y}
\bigr)
\)
and
\(
Y
=
X\bigl(
J_Y,
(\Omega_{Y,k})_{k=1}^{J_Y},
(t_{Y,k})_{k=1}^{J_Y}, \allowbreak
(q_{Y,k})_{k=1}^{J_Y}
\bigr)
\)
satisfy Assumption~\ref{assumption:productSobolevSpace}. Let $G$ satisfy Assumption \ref{assumption:regularityG2}. Assume Assumption \ref{assumption:kernel-learning2}. In addition, suppose that
\(
Z_{\mathcal H_X} \subseteq \mathbb R^{d_X}
\)
and 
\(
Z_{\mathcal H_Y}\subset \mathbb R^{d_Y}.
\)
Let
\(
\Upsilon\subset Z_{\mathcal H_X}
\)
be a bounded domain with Lipschitz boundary such that
\(
E_{\mathcal H_X}\bigl(B_R(\mathcal H_X)\bigr)
\subset \Upsilon.
\)
Define the encoded operator by
\(
G_{\mathrm{enc}}
:=
E_{\mathcal H_Y}
\circ G
\circ D_{\mathcal H_X}
:
\Upsilon\to Z_{\mathcal H_Y}.
\)
Assume that the operator-valued kernel is defined on
\(
\Gamma:
\Upsilon\times\Upsilon
\to
\mathcal L(\mathbb R^{d_Y}),
\)
and that its associated RKHS satisfies, for some
\(
\tau>d_X/2,
\)
we have
\(
\mathcal H_\Gamma
\hookrightarrow
\Hk{\tau}(\Upsilon;\mathbb R^{d_Y})
\)
continuously. Assume furthermore that
\(
G_{\mathrm{enc}}\in\mathcal H_\Gamma.
\)

Let the training inputs satisfy
\(
x_1,\ldots,x_N\in B_R(\mathcal H_X),
\)
so that
\[
U_i
=
E_{\mathcal H_X}x_i
\in\Upsilon,
\qquad
S_i
=
E_{\mathcal H_Y}G(x_i),
\qquad
i=1,\ldots,N.
\]
Define the fill distance of the encoded training inputs in \(\Upsilon\) by
\(
h_{\mathrm{tr}}
:=
\sup_{U\in\Upsilon}
\min_{1\le i\le N}
\|U-U_i\|_2.
\)
\label{assumption:sobolev-learning}

\end{enumerate}

Assumptions \ref{assumption:kernel-learning}-\ref{assumption:sobolev-learning} introduce increasingly specialized learning settings. Assumption \ref{assumption:kernel-learning} specifies the abstract kernel learning framework on the encoded spaces, including the training data and kernel notation, without making any assumptions on the origin of the measurements. Assumption \ref{assumption:kernel-learning2} specializes this framework to the encoder–decoder setting by assuming that the measurements arise from an underlying operator G, thereby introducing the encoded operator, its kernel approximation, and the associated reconstruction operator. Finally, Assumption \ref{assumption:sobolev-learning} specializes further to the Sobolev setting, where the encoder and decoder are constructed from product Sobolev spaces, the encoded domain is a bounded Lipschitz subset of a Euclidean space, and additional Sobolev regularity assumptions are imposed on both the kernel RKHS and the encoded operator. These assumptions enable the quantitative approximation estimates derived in the subsequent sections.

\end{assumptions}

\begin{remark}[Regularity of the encoded operator]
\label{rem:rkhs-regularity}
The assumption that the encoded operator \(G_{\mathrm{enc}}\) belongs to the
RKHS \(\mathcal H_\Gamma\) in Assumption \ref{assumption:productSobolevSpace} can be verified from regularity assumptions on
the original operator \(G\). Indeed, if the encoders and decoders are
bounded linear operators (for example, the decoders are the minimum-norm interpolants from Theorem \ref{thm:minimumNorm}), Fréchet differentiability is preserved under
composition. More precisely, if
\(
G\in C^k(X,Y),
\)
then
\(
G_{\mathrm{enc}}=E_Y\circ G\circ D_X
\)
also belongs to \(C^k(Z_X,Z_Y)\), with
\[
\|D^kG_{\mathrm{enc}}(U)\|
\le
\|E_Y\|\,\|D_X\|^k
\|D^kG(D_XU)\|
\]
by \cite[Lemma 3.3]{BATLLE2024112549}.
If the chosen RKHS \(\mathcal H_\Gamma\) contains sufficiently
smooth functions (for example, through a continuous embedding of an appropriate
Sobolev space into \(\mathcal H_\Gamma\)), then the assumption
\(G_{\mathrm{enc}}\in\mathcal H_\Gamma\) follows directly from corresponding
regularity assumptions on \(G\).
\end{remark}

\subsection{General Learning Framework} \label{sec:general}

The diagrams in Figure~\ref{fig:general} summarize the abstract
encoder--decoder formulation used throughout this section. The proofs of the results in this section can be found in Appendix \ref{sec:proofs:general}.

\begin{figure}[H]
\centering
\begin{subfigure}[t]{0.32\linewidth}
\centering
\begin{tikzcd}[column sep=3.6em, row sep=2.0em]
X \arrow[r, "G"] \arrow[d, "E_X" right]
  & Y \arrow[d, "E_Y" left] \\
Z_X \arrow[r, "\widehat{G}"]
  & Z_Y
    \arrow[u, "D_Y" right, shift right=0.9ex]
\arrow[from=2-1, to=1-1, "D_X" left, shift left=0.9ex]
\end{tikzcd}
\caption{Encoder-decoder framework.}
\label{fig:cd-general-form}
\end{subfigure}
\hfill
\begin{subfigure}[t]{0.32\linewidth}
\centering
\begin{tikzcd}[column sep=8em, row sep=2.0em]
X \arrow[r, "\overline G \,= \, D_Y \, \circ \, \widehat G \, \circ \, E_X "] \arrow[d, "E_X" right]
  & Y \arrow[d, "E_Y" left] \\
Z_X \arrow[r, "G_{\mathrm{enc}}
    \, = \, 
    E_Y\, \circ \, G \, \circ \, D_X"]
  & Z_Y
    \arrow[u, "D_Y" right, shift right=0.9ex]
\arrow[from=2-1, to=1-1, "D_X" left, shift left=0.9ex]
\end{tikzcd}
\caption{Approximation and ideal encoded map.}
\label{fig:approximation}
\end{subfigure}
\hfill
\begin{subfigure}[t]{0.32\linewidth}
\centering
\begin{tikzcd}[column sep=3.6em, row sep=2.0em]
x_i \arrow[r, "G"] \arrow[d, "E_{X}" right]
  & G(x_i) \arrow[d, "E_Y" left] \\
U_i \arrow[r, "\widehat G_\lambda"]
  & S_i \text{ and } G_{\mathrm{enc}}(U_i)
    \arrow[u, "D_Y" right, shift right=0.9ex]
\arrow[from=2-1, to=1-1, "D_{X}" left, shift left=0.9ex]
\end{tikzcd}
\caption{Training data.}
\label{fig:training}
\end{subfigure}
\caption{
Schematic description of the proposed learning framework.
(a) Encoder--decoder formulation for learning a map
$G:X\to Y$. The maps $E_X:X\to Z_X$ and $E_Y:Y\to Z_Y$
encode inputs and outputs into measurement spaces, while
$D_X:Z_X\to X$ and $D_Y:Z_Y\to Y$
reconstruct representatives in the original spaces.
The surrogate $\widehat G:Z_X\to Z_Y$ is learned between the
measurement spaces.
(b) The learned surrogate induces the approximation
$\overline G =D_Y\circ\widehat G\circ E_X$
on the original spaces, while the target map induces the
encoded-space map
$G_{\mathrm{enc}}:=E_Y\circ G\circ D_X$.
Theorem \ref{thm:encoder-decoder-learning-error} bounds the error between $\overline G$ and $G$.
(c) Location of the training data in the encoder--decoder framework.
The training inputs \(x_i\in X\) are observed through the measurements
\(U_i=E_Xx_i\in Z_X\), while the measured outputs are
\(S_i \in Z_Y\).
Learning is performed entirely on the measurement spaces
\(Z_X\) and \(Z_Y\) through a kernel learner $\widehat G_\lambda$ (see Corollary \ref{cor:induced-measurement-space-equivalence}).
The diagram also highlights the encoded target values
\(G_{\mathrm{enc}}(U_i)=E_YG(D_XU_i)\).
When the training outputs are chosen as \(S_i=E_YG(x_i)\), one error term in Theorem~\ref{thm:encoder-decoder-learning-error} depends on the discrepancy
\(\eta_i:=S_i-G_{\mathrm{enc}}(U_i)\),
which measures the mismatch between the measured data and the ideal encoded target.
}
\label{fig:general}
\end{figure}

Our goal is to approximate a target map
\(
    G:X\to Y
\)
between Hilbert spaces from measured data. In many settings, however, the data
do not provide direct access to elements of \(X\) and \(Y\). Instead, inputs and
outputs are observed only through bounded linear measurement, or encoding, maps
\(
    E_X:X\to Z_X
\)
and
\(
    E_Y:Y\to Z_Y,
\)
where \(Z_X\) and \(Z_Y\) are Hilbert measurement spaces. We therefore pose the learning problem on the latter spaces: given measured input--output pairs
\(
    (E_X(x_i), E_Y(G(x_i)))\in Z_X\times Z_Y,
\)
one first constructs a surrogate
\[
    \widehat G:Z_X\to Z_Y.
\]
To obtain an approximation of the original map \(G:X\to Y\), we then decode the predicted output measurement and define
\begin{equation} \label{eq:main:surrogate}
    \overline G
    :=
    D_Y\circ \widehat G\circ E_X
    :
    X\to Y,
\end{equation}
where \(D_Y:Z_Y\to Y\) is an output reconstruction map. Thus, the method learns where the data are observed and then lifts the learned prediction back to the original output space. This formulation is particularly useful when \(X\) and \(Y\) are infinite-dimensional function spaces, while \(Z_X\) and \(Z_Y\) are finite-dimensional, lower-dimensional, or otherwise computationally tractable representations.

If the target map \(G\) were known, the natural map to use between the
measurement spaces would be
\[
    G_{\mathrm{enc}}
    :=
    E_Y\circ G\circ D_X
    :
    Z_X\to Z_Y,
\]
where \(D_X:Z_X\to X\) is a chosen input reconstruction map. Indeed, if the decoding maps are chosen so that
\(
    D_X \circ E_X\approx \Id_X \)
and
\(
    D_Y \circ E_Y\approx \Id_Y,
\)
then using \(G_{\mathrm{enc}}\) in place of \(\widehat G\) in \eqref{eq:main:surrogate} gives
\[
    D_Y\circ G_{\mathrm{enc}}\circ E_X
    =
    D_Y\circ E_Y\circ G\circ D_X\circ E_X
    \approx
    G.
\]
This observation separates the encoder--decoder construction in
Figure~\ref{fig:cd-general-form} into two coupled tasks:
\begin{itemize}
    \item Reconstruction: choose decoding maps \(D_X\) and \(D_Y\) that lift
    measurements back to meaningful representatives in the original spaces;
    \item Learning: approximate the unknown measurement-space map
    \(
        G_{\mathrm{enc}}:Z_X\to Z_Y
    \)
    from measured training pairs by constructing a surrogate
    \(
        \widehat G:Z_X\to Z_Y.
    \)
\end{itemize}

We first address the reconstruction task. The given maps $E_X$ and $E_Y$ typically aim to produce simpler or more tractable
representations of objects in $X$ and $Y$, and are therefore not expected to be injective. A single measurement may consequently correspond to many elements of the original space, so the decoding maps \(D_X\) and \(D_Y\) cannot be obtained by ordinary inversion.  To resolve this ambiguity, we choose a canonical decoder by minimum-norm recovery:
\[
    D_X(U)
    :=
    \argmin_{x\in X}\{\|x\|_X : E_Xx=U\},
    \qquad
    D_Y(S)
    :=
    \argmin_{y\in Y}\{\|y\|_Y : E_Yy=S\}.
\]
This selects, among all elements compatible with the measurements, the one of
smallest Hilbert norm, where the norm encodes the chosen notion of complexity on
the underlying space. 

In some situations however, exact consistency with the measurements may be undesirable
or ill-conditioned, for instance when the measurements are noisy or when the
operators \(E_X \circ E_X^\ast\) and/or \(E_Y\circ E_Y^\ast\) are poorly conditioned. One may then
replace the hard constraints by a regularized recovery problem. For
\(\gamma>0\), this gives
\[
    D_{X,\gamma}(U)
    :=
    \argmin_{x\in X}
    \left\{
        \|x\|_X^2+\gamma^{-1}\|E_Xx-U\|_{Z_X}^2
    \right\},
\]
and
\[
    D_{Y,\gamma}(S)
    :=
    \argmin_{y\in Y}
    \left\{
        \|y\|_Y^2+\gamma^{-1}\|E_Yy-S\|_{Z_Y}^2
    \right\}.
\]
A major advantage of both the constrained and regularized formulations is that they
lead to explicit reconstruction formulas given in Theorem \ref{thm:minimumNorm}.

We now turn to the learning task. The choice of learning method depends on
the structure of the measurement spaces \(Z_X\) and \(Z_Y\), and involves a
trade-off between expressivity, numerical complexity, and theoretical
tractability. In this way, richer model classes may capture more complex maps between
measurements, while more structured methods often lead to explicit formulas and
sharper analysis. We focus on kernel methods for the learning step. This choice is
motivated by the analytical and computational structure they provide:
minimum-norm interpolation, regularized regression, deterministic error bounds, and kernel-based uncertainty quantification can all be written in closed form. In addition, kernel methods interact naturally with the
encoder--decoder construction. As our first result shows, a kernel on the
measurement spaces induces a corresponding kernel on the original spaces.
Specifically, any operator-valued kernel
\[
    \Gamma: Z_X\times Z_X\to \cL(Z_Y)
\]
induces an operator-valued kernel
\[
    K:X\times X\to \cL(Y).
\]

\begin{theorem}[Encoder--decoder induced operator-valued kernel]
\label{thm:induced-kernel-original-spaces}
Assume that
\(
    D_Y:Z_Y\to Y
\)
is a bounded linear output decoder. Let
\(
    \Gamma:Z_X\times Z_X\to \mathcal L(Z_Y)
\)
be an operator-valued kernel with associated RKHS
\(\mathcal H_\Gamma\) of maps \(Z_X\to Z_Y\). Define
\(
    K:X\times X\to\mathcal L(Y)
\)
by
\[
    K(x,x')
    :=
    D_Y\,\Gamma(E_Xx,E_Xx')\,D_Y^\ast.
\]
Then \(K\) is an operator-valued kernel on \(X\) with values in
\(\mathcal L(Y)\). Moreover, the associated RKHS is
\[
    \mathcal H_K
    =
    \{D_Y\circ f\circ E_X : f\in\mathcal H_\Gamma\},
\]
equipped with the minimal-representative norm
\[
    \|F\|_{\mathcal H_K}
    =
    \inf\left\{
        \|f\|_{\mathcal H_\Gamma}
        :
        F=D_Y\circ f\circ E_X
    \right\}.
\]

Suppose, in addition, that \(E_X\) and \(E_Y\) are surjective and that the
decoders \(D_X:Z_X\to X\) and \(D_Y:Z_Y\to Y\) are given by the exact
minimum-norm recovery maps from Theorem \ref{thm:minimumNorm}. Then, the representation
\[
    F=D_Y\circ f\circ E_X
\]
is unique and the minimal-representative norm reduces to the identity
\[
    \|D_Y\circ f\circ E_X\|_{\mathcal H_K}
    =
    \|f\|_{\mathcal H_\Gamma}.
\]
\end{theorem}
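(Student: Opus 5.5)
I will derive everything from the general principle that the image of an RKHS under a bounded linear map is again an RKHS, carrying the quotient (minimal-representative) norm. Define $T:\mathcal H_\Gamma\to Y^X$ by $(Tf)(x):=D_Y f(E_Xx)$.

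\textbf{Step 1: $K$ is an operator-valued kernel.} Hermitian symmetry is a one-line computation:
\[
K(x',x)^\top = \bigl(D_Y\Gamma(E_Xx',E_Xx)D_Y^\ast\bigr)^\ast = D_Y\Gamma(E_Xx,E_Xx')D_Y^\ast = K(x,x'),
\]
using that $\Gamma$ is Hermitian. For non-negativity, fix $x^{(i)}\in X$ and $y^{(i)}\in Y$, and set $U^{(i)}=E_Xx^{(i)}$ and $c^{(i)}=D_Y^\ast y^{(i)}\in Z_Y$. Then
\[
\sum_{i,j}\langle y^{(i)},K(x^{(i)},x^{(j)})y^{(j)}\rangle_Y=\sum_{i,j}\langle c^{(i)},\Gamma(U^{(i)},U^{(j)})c^{(j)}\rangle_{Z_Y}\ge0 .
\]

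\textbf{Step 2: identify $\mathcal H_K$.} $T$ is linear. Its kernel is closed, because point evaluations $f\mapsto f(U)$ are bounded on $\mathcal H_\Gamma$ and $D_Y$ is bounded, so $\ker T$ is an intersection of closed sets. I therefore equip $\operatorname{ran}T$ with the norm $\|Tf\|:=\|P f\|_{\mathcal H_\Gamma}$, where $P$ is the orthogonal projection onto $(\ker T)^\perp$. This equals the infimum in the statement, and it makes $T|_{(\ker T)^\perp}$ an isometric isomorphism onto $\operatorname{ran}T$, which is thus a Hilbert space.

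Two facts remain to check.
\begin{enumerate}
\item $K(x,\cdot)y = T\bigl(\Gamma(E_Xx,\cdot)D_Y^\ast y\bigr)$.
\item For every $F=Tf$ with $f\in(\ker T)^\perp$,
\[
\langle F,K(x,\cdot)y\rangle = \langle f, P\,\Gamma(E_Xx,\cdot)D_Y^\ast y\rangle_{\mathcal H_\Gamma} = \langle f,\Gamma(E_Xx,\cdot)D_Y^\ast y\rangle_{\mathcal H_\Gamma} = \langle f(E_Xx),D_Y^\ast y\rangle_{Z_Y} = \langle F(x),y\rangle_Y .
\]
\end{enumerate}
Hence $\operatorname{ran}T$ is an RKHS with kernel $K$, and by the uniqueness part of the Moore--Aronszajn bijection theorem it equals $\mathcal H_K$. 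The main point needing care in this step is closedness of $\ker T$ and the fact that the projection $P$ can be dropped in the inner product; both follow from $f\in(\ker T)^\perp$ and the reproducing property of $\Gamma$.

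\textbf{Step 3: the surjective case.} It suffices to show that $\ker T=\{0\}$. In that case the representation $F=D_Y\circ f\circ E_X$ is unique, and the infimum is attained at $f$ itself, which gives the norm identity $\|D_Y\circ f\circ E_X\|_{\mathcal H_K}=\|f\|_{\mathcal H_\Gamma}$.

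So suppose $D_Y f(E_Xx)=0$ for all $x\in X$. Since $E_X$ is surjective, $E_Xx$ ranges over all of $Z_X$, so $D_Yf(U)=0$ for every $U\in Z_X$. By Theorem~\ref{thm:minimumNorm}, $D_Y=E_Y^\ast(E_YE_Y^\ast)^{-1}$, which satisfies $E_YD_Y=\Id_{Z_Y}$. Therefore $D_Y$ is injective, so $f(U)=0$ for every $U$, i.e.\ $f=0$.

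This step does not actually use $D_X$. It relies on surjectivity of $E_X$, to reach every $U\in Z_X$, and on the invertibility of $E_YE_Y^\ast$, which holds because $E_Y$ is a bounded surjection, so that $E_Y^\ast$ is bounded below.
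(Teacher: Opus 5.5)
Your proof is correct, but it takes a different route from the paper's.

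The paper does not verify anything by hand. It cites two structural results for operator-valued RKHSs:
\begin{itemize}
\item a pullback along $E_X$, giving the kernel $\Gamma(E_Xx,E_Xx')$ with RKHS $\{f\circ E_X : f\in\mathcal H_\Gamma\}$;
\item a pushforward along $D_Y$, giving $K$ with RKHS $\{D_Y\circ F_0\}$.
\end{itemize}
It then nests the two minimal-representative norms to obtain the stated infimum. In that route the kernel properties of $K$ come for free from the cited propositions.

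You instead check Hermitian symmetry and non-negativity directly. You then prove the combined pullback--pushforward statement in one step, as the image of $\mathcal H_\Gamma$ under the bounded map $T$, equipped with the quotient norm. Your argument is self-contained and gives slightly more information. The infimum is always attained, at the projection of $f$ onto $(\ker T)^\perp$. The norm identity in the second part holds exactly when $\ker T=\{0\}$, which reduces the uniqueness claim to an injectivity check.

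The paper's version is shorter and makes the modular structure (input pullback, output pushforward) explicit, but it depends on external references.

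For the surjective case, the two arguments coincide in substance. The paper composes with $E_Y$ on the left and $D_X$ on the right, using $E_YD_Y=\Id_{Z_Y}$ and $E_XD_X=\Id_{Z_X}$. You use injectivity of $D_Y$ and surjectivity of $E_X$ directly. Your remark is accurate: $D_X$ enters only as a right inverse of $E_X$, so only surjectivity of $E_X$ matters.
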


As a consequence of Theorem \ref{thm:induced-kernel-original-spaces}, when the
surrogate \(\widehat G\) is constructed by kernel interpolation or regression on
the measurement spaces, the full approximation
\[
    \overline G
    =
    D_Y\circ \widehat G\circ E_X
    :
    X\to Y
\]
can itself be interpreted as a kernel method acting directly from \(X\) to
\(Y\). In other words, although the learning problem is posed on the
measurement spaces \(Z_X\) and \(Z_Y\), the resulting reconstructed operator
belongs to an operator-valued RKHS of maps from \(X\) to \(Y\). 

We now detail the equivalence of learning frameworks.

\begin{corollary}[Equivalence of induced and measurement-space learning problems]
\label{cor:induced-measurement-space-equivalence}
Assume the setting of Theorem~\ref{thm:induced-kernel-original-spaces} and that Assumption \ref{assumption:kernel-learning} is satisfied.
For \(\lambda\ge 0\), consider the induced RKHS problem
\[
    \min_{F\in\mathcal H_K}
    \left\{
        \lambda^{-1} \sum_{i=1}^N
        \|E_YF(x_i)-S_i\|_{Z_Y}^2
        +
        \|F\|_{\mathcal H_K}^2
    \right\}.
\]
For \(\lambda=0\), this is understood as the minimum-norm interpolation problem
\[
    \min_{F\in\mathcal H_K}
    \|F\|_{\mathcal H_K}^2
    \quad
    \text{subject to}
    \quad
    E_YF(x_i)=S_i,
    \qquad i=1,\ldots,N.
\]
This problem is equivalent to the encoded measurement-space problem
\[
    \min_{f\in\mathcal H_\Gamma}
    \left\{
        \lambda^{-1} \sum_{i=1}^N
        \|Af(U_i)-S_i\|_{Z_Y}^2
        +
        \|f\|_{\mathcal H_\Gamma}^2
    \right\}.
\]
Again, for \(\lambda=0\), this is understood as the minimum-norm interpolation
problem
\[
    \min_{f\in\mathcal H_\Gamma}
    \|f\|_{\mathcal H_\Gamma}^2
    \quad
    \text{subject to}
    \quad
    Af(U_i)=S_i,
    \qquad i=1,\ldots,N.
\]

If \(\lambda=0\), further assume that \(L_{\Gamma,A}\) is surjective, equivalently that
\(
    \Gamma_A(\mathbf U,\mathbf U)
    =
    L_{\Gamma,A}L_{\Gamma,A}^\ast
\)
is boundedly invertible on \(Z_Y^N\). Then, the encoded measurement-space solution and the
corresponding induced RKHS solution are
\[
    \overline f_\lambda(U)
    =
    \Gamma_A(U,\mathbf U)
    \bigl(\Gamma_A(\mathbf U,\mathbf U)+\lambda \Id_{Z_Y^N}\bigr)^{-1}
    \mathbf S \quad \text{and} \quad \overline F_\lambda(x)
    =
    D_Y\,
    \Gamma_A(E_Xx,\mathbf U)
    \bigl(\Gamma_A(\mathbf U,\mathbf U)+\lambda \Id_{Z_Y^N}\bigr)^{-1}
    \mathbf S.
\]

Suppose now, in addition, that \(E_X\), \(E_Y\) are surjective and that
\(D_X\), \(D_Y\) are the exact minimum-norm recovery maps from
Theorem~\ref{thm:minimumNorm}. Then, the encoded measurement-space problems reduce to the standard
measurement-space problems
\[
    \min_{f\in\mathcal H_\Gamma}
    \|f\|_{\mathcal H_\Gamma}^2
    \quad
    \text{subject to}
    \quad
    f(U_i)=S_i,
    \qquad i=1,\ldots,N,
\]
and
\[
    \min_{f\in\mathcal H_\Gamma}
    \left\{
        \lambda^{-1} \sum_{i=1}^N
        \|f(U_i)-S_i\|_{Z_Y}^2
        +
        \|f\|_{\mathcal H_\Gamma}^2
    \right\}.
\]
If $\lambda=0$, further assume that \(\Gamma(\mathbf U,\mathbf U)\) is boundedly invertible. Then, the encoded measurement-space solution and the
corresponding induced RKHS solution are
\[
    \overline f_\lambda(U)
    =
    \Gamma(U,\mathbf U)
    \bigl(\Gamma(\mathbf U,\mathbf U)+\lambda \Id_{Z_Y^N}\bigr)^{-1}
    \mathbf S, \quad \overline F_\lambda(x)
    =
    E_Y^\ast(E_YE_Y^\ast)^{-1}
    \Gamma(E_Xx,\mathbf U)
    \bigl(\Gamma(\mathbf U,\mathbf U)+\lambda \Id_{Z_Y^N}\bigr)^{-1}
    \mathbf S.
\]
\end{corollary}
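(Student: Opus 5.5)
The plan is to transport the induced problem to the measurement space through the parametrization $F=D_Y\circ f\circ E_X$ of Theorem~\ref{thm:induced-kernel-original-spaces}, and then solve the measurement-space problem using Theorem~\ref{thm:minimumNorm} with $L=L_{\Gamma,A}$.

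\textbf{Step 1: equivalence of the two problems.} By Theorem~\ref{thm:induced-kernel-original-spaces}, every $F\in\mathcal H_K$ has the form $D_Y\circ f\circ E_X$ with $f\in\mathcal H_\Gamma$, and $\|F\|_{\mathcal H_K}$ is the infimum of $\|f\|_{\mathcal H_\Gamma}$ over all representatives. For any representative,
\[
E_YF(x_i)=E_YD_Yf(E_Xx_i)=Af(U_i).
\]
So the data term depends only on the values $Af(U_i)$, which are the same for every representative of $F$. Hence
\[
\inf_F J_K(F)=\inf_f J_\Gamma(f),
\]
and the map $f\mapsto D_Y\circ f\circ E_X$ carries minimizers of $J_\Gamma$ to minimizers of $J_K$. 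For $\lambda=0$ the same argument applies with the constraint $Af(U_i)=S_i$ in place of the data term.

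\textbf{Step 2: closed-form measurement-space solution.} The operator $L_{\Gamma,A}$ is bounded. Its adjoint follows from the reproducing property:
\[
L_{\Gamma,A}^\ast c=\sum_j\Gamma(\cdot,U_j)A^\ast c_j .
\]
Indeed,
\[
\langle Af(U_j),c_j\rangle=\langle f(U_j),A^\ast c_j\rangle=\langle f,\Gamma(\cdot,U_j)A^\ast c_j\rangle_{\mathcal H_\Gamma}.
\]
Applying the same computation once more gives
\[
L_{\Gamma,A}L_{\Gamma,A}^\ast=\Gamma_A(\mathbf U,\mathbf U),
\]
and evaluating $L_{\Gamma,A}^\ast c$ at $U$ gives $\Gamma_A(U,\mathbf U)c$. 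Theorem~\ref{thm:minimumNorm}, with $\gamma=\lambda$ for $\lambda>0$ and the surjective case for $\lambda=0$, then yields $\overline f_\lambda$. The formula for $\overline F_\lambda$ follows by composing with $D_Y$ on the left and $E_X$ on the right. The equivalence "surjective $\iff$ $LL^\ast$ boundedly invertible" is standard: $LL^\ast$ is self-adjoint with trivial kernel precisely when $L$ is surjective, by the closed range theorem.

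\textbf{Step 3: reduction under minimum-norm decoders.} If $E_Y$ is surjective and $D_Y=E_Y^\ast(E_YE_Y^\ast)^{-1}$, then
\[
A=E_YD_Y=\Id_{Z_Y},\qquad A^\ast=\Id_{Z_Y}.
\]
Thus $\Gamma_A$ reduces to $\Gamma$, and the constraints and data terms reduce to $f(U_i)=S_i$ and $\|f(U_i)-S_i\|$. The final formulas follow by substituting $A=\Id$ and the explicit $D_Y$ from \eqref{eq:master-minimum-norm-formula}. By the uniqueness part of Theorem~\ref{thm:induced-kernel-original-spaces}, $\|F\|_{\mathcal H_K}=\|f\|_{\mathcal H_\Gamma}$ exactly, so minimizers correspond one-to-one and the infimum in Step 1 is not needed.

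\textbf{Main obstacle.} The delicate point is Step 1 in the general (non-injective) case. There the infimum over representatives need not be attained a priori, and one must show that $\overline F_\lambda$ really minimizes $J_K$.

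I would handle this by noting that the set of representatives of a fixed $F$ is a closed affine subspace of $\mathcal H_\Gamma$: it is $f_0+\ker(\Phi)$, where $\Phi:f\mapsto D_Y\circ f\circ E_X$ is bounded. This subspace therefore has a unique minimal-norm element, so the infimum is attained. Combining this with the observation that the data term is representative-independent gives the equality of optimal values together with the correspondence of minimizers.
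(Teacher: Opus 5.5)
Your proposal is correct and follows the paper's proof essentially step for step. It uses the same three moves: the identity $E_YF(x_i)=Af(U_i)$ together with the minimal-representative norm, the adjoint $L_{\Gamma,A}^\ast c=\sum_j\Gamma(\cdot,U_j)A^\ast c_j$ obtained from the reproducing property and fed into Theorem~\ref{thm:minimumNorm}, and finally the observation that $A=E_YD_Y$ is the identity on $Z_Y$. Your extra argument that the representatives of a given $F$ form a closed affine subspace, so the infimum is attained, makes the correspondence of minimizers more rigorous than the paper's one-line remark.

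One side remark is imprecise in infinite dimensions. A trivial kernel of $L_{\Gamma,A}L_{\Gamma,A}^\ast$ only gives dense range of $L_{\Gamma,A}$, not surjectivity. The correct chain is: $L_{\Gamma,A}$ is surjective iff $L_{\Gamma,A}^\ast$ is bounded below iff $L_{\Gamma,A}L_{\Gamma,A}^\ast$ is coercive, hence boundedly invertible.
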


\begin{remark}[Pointwise learning in the encoded space]
\label{rem:pointwise}
While the encoders \(E_X\) and \(E_Y\) may be constructed from arbitrary bounded linear measurements, the induced learning problem is simply a pointwise regression problem on the encoded spaces \(Z_X\) and \(Z_Y\), where the training data consist of the
encoded pairs
\(
(U_i,S_i).
\)
Consequently, the kernel learning theory depends only
on the geometry of the encoded spaces and is independent of the particular
measurement modality used to construct \(E_X\) and \(E_Y\).
\end{remark}

\begin{figure}[b]
\centering
\begin{tikzcd}[
    column sep=4.5em,
    row sep=2.4em,
    cells={nodes={inner sep=2pt}}
]
&& &
X
    \arrow[rr, "G", black]
    \arrow[dd, "E_X" right]
    \arrow[dddll, "\widetilde E_X"', shift right=1.5ex]
&
&
Y
    \arrow[dd, "E_Y" left]
    \arrow[dddrr, "\widetilde E_Y", shift right = -1.5ex]
&&
\\
&&&&&&&& \\
&&&
Z_X
    \arrow[rr, "\widehat G"]
    \arrow[uu, "D_X" left, shift left=0.9ex,]
&
&
Z_Y
    \arrow[uu, "D_Y" right, shift right=0.9ex,]
&&
\\
&
\widetilde Z_X
    \arrow[uuurr, "\widetilde D_X"', shift left=0.5ex,]
    \arrow[rrrrrr, "\widetilde G"]
&
&
&
&
&
&
\widetilde Z_Y
    \arrow[uuull, "\widetilde D_Y", shift right=0.5ex]
\end{tikzcd}

\caption{Measurement transferability of the encoder-decoder learning framework. The maps \(E_X:X\to Z_X\) and \(E_Y:Y\to Z_Y\) encode inputs
and outputs into measurement spaces, while \(D_X:Z_X\to X\) and
\(D_Y:Z_Y\to Y\) reconstruct representatives in the original spaces.
The surrogate \(\widehat G:Z_X\to Z_Y\) is learned between the measurement
spaces. The additional measurement and reconstruction maps induce an alternative surrogate
\(
    \widetilde G := \widetilde E_Y \circ D_Y \circ \widehat G \circ E_X \circ \widetilde D_X :    \widetilde Z_X\to \widetilde Z_Y
\)
and approximation $\widetilde D_Y \circ \widetilde G \circ \widetilde E_X$ of $G$.
}
\label{fig:cd-general-form-extended}
\end{figure}

\begin{remark}[Measurement transferability]
The encoder--decoder construction in
Figure~\ref{fig:cd-general-form} naturally induces a notion of
measurement transferability, as illustrated in
Figure~\ref{fig:cd-general-form-extended} and similarly discussed in \cite[Section~2.4]{BATLLE2024112549}. Consider an alternative pair of measurement spaces
\(\widetilde Z_X\) and \(\widetilde Z_Y\), together with measurement and
reconstruction maps
\[
    \widetilde E_X:X\to\widetilde Z_X,
    \qquad
    \widetilde D_X:\widetilde Z_X\to X,
\]
and
\[
    \widetilde E_Y:Y\to\widetilde Z_Y,
    \qquad
    \widetilde D_Y:\widetilde Z_Y\to Y.
\]
The previously learned surrogate \(\widehat G\) then induces an alternative
measurement-space surrogate
\[
    \widetilde G
    :=
    \widetilde E_Y
    \circ D_Y
    \circ \widehat G
    \circ E_X
    \circ \widetilde D_X
    :
    \widetilde Z_X\to\widetilde Z_Y.
\]
Thus, data represented in \(\widetilde Z_X\) can first be reconstructed in
\(X\), encoded into the original learning space \(Z_X\), propagated through
\(\widehat G\), reconstructed in \(Y\), and finally remeasured in
\(\widetilde Z_Y\). The corresponding approximation of the original map is
\[
    \widetilde D_Y
    \circ \widetilde G
    \circ \widetilde E_X
    :
    X\to Y.
\]
In this sense, the learned surrogate is transferable across different choices
of input and output measurements: it can be transported to alternative
measurement systems through the associated reconstruction and remeasurement
maps, without retraining \(\widehat G\).

A particularly important case arises when \(X\) and \(Y\) are spaces of
continuous functions and
\[
    E_X:X\to\mathbb R^{c_W},
    \qquad
    E_Y:Y\to\mathbb R^{c_Y}
\]
are point-evaluation maps on prescribed input and output grids. Alternative
maps \(\widetilde E_X\) and \(\widetilde E_Y\) then correspond to different
sets of evaluation points. In this setting, measurement transferability becomes
mesh transferability: a surrogate trained using one pair of grids can be
transferred to another pair of grids through reconstruction and resampling. This mesh transferability is a defining feature of (multiple) operator learning. Indeed, methods such as DeepONet \cite{deepOnet}, FNO \cite{li2021fourier}, MNO \cite{weihs2025MNO,weihs2026generalizationboundsstatisticalguarantees,weihs2026multipleneuraloperatorsachieve} and related architectures are designed to learn mappings between function spaces rather than between fixed finite-dimensional vectors, allowing predictions on discretizations different from those used during training. Remark~\ref{rem:mesh-free-alternative}
discusses an alternative approach in which the learned representation is
directly evaluable independently of a prescribed output mesh.
\end{remark}

The next result separates the total
error into an encoder--decoder error and a learning error. The first term
measures how well the measurement and reconstruction maps preserve the action of
the continuum map \(G\). The second term measures how well the surrogate
\(\widehat G\) approximates the encoded-space target
\(
    G_{\mathrm{enc}}.
\)
When \(\widehat G\) is chosen by kernel interpolation on the measurement space,
the learning error can be further bounded by a kernel residual term, together
with a data-consistency term accounting for the possible mismatch between the
measured training outputs \(E_YG(x_i)\) and the encoded target values
\(G_{\mathrm{enc}}(E_Xx_i)\) (see Figure \ref{fig:training}).

\begin{theorem}[Encoder--decoder and learning error decomposition]
\label{thm:encoder-decoder-learning-error} We have the following error decompositions. 
\begin{enumerate}
    \item Let \(
        \widehat G:Z_X\to Z_Y
    \)
be any surrogate, and define
\(
        \overline G
        :=
        D_Y\circ \widehat G\circ E_X
        :
        X\to Y.
\)
    Then, for every \(x\in X\),
    \[
    \begin{aligned}
        \|G(x)-\overline G(x)\|_Y
        &\le
        \|G(x)-D_YE_YG(D_XE_Xx)\|_Y  \\
        &\quad
        +
        \|D_Y(G_{\mathrm{enc}}-\widehat G)(E_Xx)\|_Y.
    \end{aligned}
    \]
\item Assume that Assumption \ref{assumption:kernel-learning2} is satisfied.

\begin{enumerate}
    \item For every \(x\in X\),
\[
\begin{aligned}
\|G(x)-\overline{G}(x)\|_Y
    &\le
    \|G(x)-D_YE_YG(D_XE_Xx)\|_Y    +
    \|D_Y\|_{\mathrm{op}}\,
    \|(G_{\mathrm{enc}}-\widehat G_{\mathrm{enc},\lambda})(E_Xx)\|_{Z_Y} \\
    &+
    \|D_Y\|_{\mathrm{op}}\,
    \|\mathcal R_{\mathbf U,\lambda}\eta(E_Xx)\|_{Z_Y}.
\end{aligned}
\]
\item Define the regularized residual kernel
\[
    \Gamma_{\lambda}^{\perp}(U,U')
    :=
    \Gamma(U,U')
    -
    \Gamma(U,\mathbf U)
    \bigl(
        \Gamma(\mathbf U,\mathbf U)+\lambda \Id_{Z_Y^N}
    \bigr)^{-1}
    \Gamma(\mathbf U,U'),
    \qquad U,U'\in Z_X
\]
and furthermore assume that, for every \(U\in Z_X\),
\(
    \operatorname{Tr}\bigl[\Gamma_{\lambda}^{\perp}(U,U)\bigr]<\infty.
\)
With
\[
    Q_{N,\lambda}(U)
    := \left( \operatorname{Tr}\bigl[\Gamma_{\lambda}^{\perp}(U,U)\bigr]
        +
        \lambda \operatorname{Tr}(\Id_{Z_Y})
    \right)^{1/2},  
\]
for every \(x\in X\),
\[
\begin{aligned}
\|G(x)-\overline{G}(x)\|_Y
    &\le
    \|G(x)-D_YE_YG(D_XE_Xx)\|_Y +
    \|D_Y\|_{\mathrm{op}}\,
    Q_{N,\lambda}(E_Xx)\,
    \|G_{\mathrm{enc}}\|_{\mathcal H_{\Gamma,\lambda}} \\
    &
    +
    \|D_Y\|_{\mathrm{op}}\,
    \|\mathcal R_{\mathbf U,\lambda}\eta(E_Xx)\|_{Z_Y}
\end{aligned}
\]
where $\cH_{\Gamma,\lambda}$ is the RKHS with kernel $\Gamma(U,U') + \lambda \Id_{Z_Y}\delta_{U,U'}$.  
\end{enumerate}

\end{enumerate}

\end{theorem}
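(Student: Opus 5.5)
Part 1 follows from the triangle inequality after inserting the intermediate term $D_Y E_Y G(D_X E_X x)$. Since $G_{\mathrm{enc}}=E_Y\circ G\circ D_X$, we have $D_YE_YG(D_XE_Xx)=D_Y G_{\mathrm{enc}}(E_Xx)$. Writing
\[
G(x)-\overline G(x)=\bigl(G(x)-D_YG_{\mathrm{enc}}(E_Xx)\bigr)+D_Y\bigl(G_{\mathrm{enc}}-\widehat G\bigr)(E_Xx)
\]
uses linearity of $D_Y$ and yields Part 1 directly.

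For Part 2(a), we take $\widehat G=\widehat G_\lambda$ and bound $\|D_Y v\|_Y\le\|D_Y\|_{\mathrm{op}}\|v\|_{Z_Y}$. It then remains to split $G_{\mathrm{enc}}-\widehat G_\lambda=(G_{\mathrm{enc}}-\widehat G_{\mathrm{enc},\lambda})+(\widehat G_{\mathrm{enc},\lambda}-\widehat G_\lambda)$. Here we read Assumption \ref{assumption:kernel-learning2} as using the plain kernel $\Gamma$, i.e.\ $A=\Id$; this matches the definition of $\mathcal R_{\mathbf U,\lambda}$. By the closed form in Corollary~\ref{cor:induced-measurement-space-equivalence}, both estimators are linear in the data vector with the same smoother $U\mapsto\Gamma(U,\mathbf U)(\Gamma(\mathbf U,\mathbf U)+\lambda\Id)^{-1}$. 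Their difference is therefore exactly $\mathcal R_{\mathbf U,\lambda}$ applied to $\mathbf S-(G_{\mathrm{enc}}(U_i))_i=\eta$. Another triangle inequality finishes 2(a).

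For Part 2(b), it suffices to show the pointwise bound
\[
\|(G_{\mathrm{enc}}-\widehat G_{\mathrm{enc},\lambda})(U)\|_{Z_Y}\le Q_{N,\lambda}(U)\,\|G_{\mathrm{enc}}\|_{\mathcal H_{\Gamma,\lambda}}.
\]
The plan is to reinterpret kernel ridge regression as interpolation in the RKHS $\mathcal H_{\Gamma,\lambda}$ with the perturbed kernel $\Gamma_\lambda:=\Gamma+\lambda\Id\,\delta_{U,U'}$. On the training points the Gram operator is $\Gamma(\mathbf U,\mathbf U)+\lambda\Id$. For $U\notin\{U_i\}$, $\Gamma_\lambda(U,\mathbf U)=\Gamma(U,\mathbf U)$. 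So the $\Gamma_\lambda$-interpolant of $G_{\mathrm{enc}}$ at $\mathbf U$, evaluated off the data, coincides with $\widehat G_{\mathrm{enc},\lambda}$. We also need $G_{\mathrm{enc}}\in\mathcal H_{\Gamma,\lambda}$; this holds when $G_{\mathrm{enc}}\in\mathcal H_\Gamma$, since $\mathcal H_\Gamma$ embeds contractively into $\mathcal H_{\Gamma,\lambda}$ because the kernel of the latter dominates.

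Next comes the standard power-function argument. For $g\in Z_Y$, the reproducing property gives
\[
\langle (G_{\mathrm{enc}}-\widehat G_{\mathrm{enc},\lambda})(U),g\rangle=\langle G_{\mathrm{enc}},\,\Gamma_\lambda(\cdot,U)g-\textstyle\sum_j\Gamma_\lambda(\cdot,U_j)c_j(g)\rangle_{\mathcal H_{\Gamma,\lambda}},
\]
where $c(g)=(\Gamma(\mathbf U,\mathbf U)+\lambda\Id)^{-1}\Gamma(\mathbf U,U)g$. By Cauchy--Schwarz, this is bounded by $\|G_{\mathrm{enc}}\|$ times the norm of the residual representer. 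That norm squared equals $\langle g,\Gamma_{\lambda}^{\perp,\lambda}(U,U)g\rangle$, where the perturbed kernel at the diagonal contributes $\Gamma(U,U)+\lambda\Id$. This gives $\langle g,\Gamma_\lambda^\perp(U,U)g\rangle+\lambda\|g\|^2$.

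To pass from this weak bound to the norm of the error vector, we apply it with $g$ ranging over an orthonormal basis $\{e_k\}$ of $Z_Y$. This gives
\[
\|v\|^2=\sum_k|\langle v,e_k\rangle|^2\le\|G_{\mathrm{enc}}\|^2\sum_k\bigl(\langle e_k,\Gamma^\perp_\lambda(U,U)e_k\rangle+\lambda\bigr)=\|G_{\mathrm{enc}}\|^2 Q_{N,\lambda}(U)^2,
\]
which is where the trace appears. Plugging into 2(a) concludes.

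The main obstacle is the bookkeeping in this last part. We must verify carefully that the $\delta$-perturbed kernel is a valid operator-valued kernel whose interpolant matches ridge regression. We must also handle the degenerate case where the evaluation point $U$ coincides with a training input; there one can argue by a direct computation or accept the bound with the same $Q_{N,\lambda}$, since the extra $\lambda$ term only loosens it. The trace-finiteness hypothesis (needing $\operatorname{Tr}(\Id_{Z_Y})<\infty$ when $\lambda>0$) guarantees the sum converges.
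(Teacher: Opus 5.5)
Parts 1 and 2(a) match the paper's proof. Both use the same triangle inequality and the same reading of Assumption~\ref{assumption:kernel-learning2} with the plain kernel $\Gamma$. Both also observe that the two estimators share one linear smoother, so $\widehat G_\lambda-\widehat G_{\mathrm{enc},\lambda}=\mathcal R_{\mathbf U,\lambda}\eta$.

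The routes differ in 2(b). The paper does not prove the pointwise estimate $\|(G_{\mathrm{enc}}-\widehat G_{\mathrm{enc},\lambda})(U)\|_{Z_Y}\le Q_{N,\lambda}(U)\|G_{\mathrm{enc}}\|_{\mathcal H_{\Gamma,\lambda}}$; it simply imports it from \cite[Theorem 5.4]{OWHADI2023133592}. You derive it yourself. You represent $g\mapsto\langle (G_{\mathrm{enc}}-\widehat G_{\mathrm{enc},\lambda})(U),g\rangle$ by $r=\Gamma_\lambda(\cdot,U)g-\sum_j\Gamma_\lambda(\cdot,U_j)c_j$ in the nugget RKHS, apply Cauchy--Schwarz, and sum over an orthonormal basis of $Z_Y$. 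The computation is correct, and it shows where the $\mathcal H_{\Gamma,\lambda}$-norm and the term $\lambda\operatorname{Tr}(\mathrm{Id}_{Z_Y})$ come from: the nugget puts $\lambda\mathrm{Id}$ on the diagonal of $\|r\|^2$. The citation is shorter, but it hides the conditions under which the estimate holds. Two of those conditions need to be explicit in your write-up.

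First, your identity $\Gamma_\lambda(\mathbf U,\mathbf U)=\Gamma(\mathbf U,\mathbf U)+\lambda\mathrm{Id}_{Z_Y^N}$ needs the encoded inputs to be pairwise distinct. This is not automatic, because $E_X$ is not injective. Without it the estimate can genuinely fail. Take a scalar kernel with $\Gamma(a,a)=\Gamma(U,U)=1$, $\Gamma(U,a)=\rho\neq 0$, and $U_1=U_2=a\neq U$. The representer of the error functional then has squared $\mathcal H_{\Gamma,\lambda}$-norm $Q_{N,\lambda}(U)^2+2\rho^2\lambda/(2+\lambda)^2$, so choosing $G_{\mathrm{enc}}=r$ violates the bound. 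State distinctness as a hypothesis; the paper's statement omits it as well.

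Second, you do not need the interpolation interpretation, and it is false at training inputs. The representer $r$ still represents the error functional at $U=U_m$; only its norm changes, and ``accepting the bound'' there needs a computation. With distinct inputs, $\Gamma_\lambda(U_m,\mathbf U)c=\Gamma(U_m,\mathbf U)c+\lambda c_m$. Let $K=\Gamma(\mathbf U,\mathbf U)$, and let $\xi\in Z_Y^N$ carry $g$ in slot $m$ and zeros elsewhere, so that $\Gamma(\mathbf U,U_m)g=K\xi$. Then
\[
\|r\|_{\mathcal H_{\Gamma,\lambda}}^2=\langle g,\Gamma_\lambda^{\perp}(U_m,U_m)g\rangle+\lambda\|g\|^2-2\lambda\bigl\langle \xi,(K+\lambda\mathrm{Id})^{-1}K\xi\bigr\rangle .
\]
The last term is nonpositive because $(K+\lambda\mathrm{Id})^{-1}K\succeq 0$. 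So the correction tightens rather than loosens the bound, and the same $Q_{N,\lambda}$ works at training points.
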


The goal of Section \ref{sec:mainMulti} is to progressively sharpen the error decomposition of
Theorem~\ref{thm:encoder-decoder-learning-error} by imposing additional
assumptions on the operator \(G\), the spaces \(X\) and \(Y\), the
encoder--decoder pairs \((E_X,D_X)\) and \((E_Y,D_Y)\), and the
available data.

\subsection{Kernel Learning for Multi-Input, Multi-Output Operators} \label{sec:mainMulti}

In this section, we focus on the setting of multi-input, multi-output operator learning, where the underlying function spaces are Sobolev spaces, the encoders are given by pointwise sampling operators, and the decoders reconstruct functions from these pointwise observations. This setting encompasses many problems arising from partial differential equations and allows us to derive explicit error bounds. The proofs of the results in this section can be found in Appendix \ref{sec:proofs:multi}.

We begin by examining the data-consistency term in Theorem \ref{thm:encoder-decoder-learning-error}.

\begin{remark}[Consistent measured training data]
If the measured training outputs are consistent with the encoded-space target,
namely
\[
    E_YG(x_i)
    =
    G_{\mathrm{enc}}(E_Xx_i)
    =
    E_YG(D_XE_Xx_i),
    \qquad i=1,\ldots,N,
\]
then
\(
    \eta_i
    =
    S_i-G_{\mathrm{enc}}(U_i)
    =
    0.
\)
Consequently,
\(
    \mathcal R_{\mathbf U,\lambda}\eta=0
\)
and the data-consistency term in
Theorem~\ref{thm:encoder-decoder-learning-error} vanishes: for every
\(x\in X\),
\[
    \|G(x)-\overline{G}(x)\|_Y
    \le
    \|G(x)-D_YE_YG(D_XE_Xx)\|_Y
    +
    \|D_Y\|_{\mathrm{op}}\,
    \|(G_{\mathrm{enc}}-\widehat G_{\mathrm{enc},\lambda})(E_Xx)\|_{Z_Y}.
\]
A sufficient condition for this consistency is exact input reconstruction on the
training data, that is
\[
    D_XE_Xx_i=x_i,
    \qquad i=1,\ldots,N.
\]
\end{remark}

Next, we analyze the reconstruction error of the map $G$.

\begin{proposition}[Encoder--decoder reconstruction error]
\label{prop:encoder-decoder-reconstruction-error}
Suppose that there exist functions
\(
    \delta_X:X\to[0,\infty)
\)
and
\(
    \delta_Y:Y\to[0,\infty),
\)
such that \begin{equation} \label{eq:prop:reconstructionError}
    \begin{cases}
        \|x-D_XE_Xx\|_X
    \le
    \delta_X(x), & x\in X,\\
    \|y-D_YE_Yy\|_Y
    \le
    \delta_Y(y), &y\in Y.
    \end{cases}
\end{equation}
Assume that $G$ satisfies Assumption \ref{assumption:regularityG} and that Assumption \ref{assumptions:M2} is satisfied.

Then, for every \(x\in B_R(X)\),
\[
\|G(x)-D_YE_YG(D_XE_Xx)\|_Y
\le
\omega\left(\delta_X(x)\right)
+
\delta_Y\left(G(D_XE_Xx)\right).
\]
If, in addition, \(X=\prod_{j=1}^{J_X}X_j,\) and $Y=\prod_{j=1}^{J_Y}Y_j,$
with product norms \(\|x\|_X^2
    =
    \sum_{j=1}^{J_X}\|x_j\|_{X_j}^2,\) and \(\|y\|_Y^2
    =
    \sum_{j=1}^{J_Y}\|y_j\|_{Y_j}^2,\)
and there exist componentwise reconstruction bounds
\begin{equation*}
    \begin{cases}
        \|x_j-(D_XE_Xx)_j\|_{X_j}
    \le
    \delta_{X,j}(x), & x\in X, \, j=1,\ldots,J_X,\\
    \|y_j-(D_YE_Yy)_j\|_{Y_j}
    \le
    \delta_{Y,j}(y), &y\in Y, \, j=1,\ldots,J_Y,
    \end{cases}
\end{equation*}
then
\[
\begin{aligned}
\|G(x)-D_YE_YG(D_XE_Xx)\|_Y
&\le
\omega\left(
    \left[
        \sum_{j=1}^{J_X}
        \delta_{X,j}(x)^2
    \right]^{1/2}
\right) +
\left[
    \sum_{j=1}^{J_Y}
    \delta_{Y,j}\left(
        G(D_XE_Xx)
    \right)^2
\right]^{1/2}.
\end{aligned}
\]
\end{proposition}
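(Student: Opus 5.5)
The plan is to insert the intermediate quantity $G(D_XE_Xx)$ and split the error with the triangle inequality. For $x\in B_R(X)$ we write
\[
\|G(x)-D_YE_YG(D_XE_Xx)\|_Y
\le
\|G(x)-G(D_XE_Xx)\|_Y
+
\|G(D_XE_Xx)-D_YE_YG(D_XE_Xx)\|_Y .
\]
The first term is the input-reconstruction error propagated through $G$. The second is the output-reconstruction error evaluated at the single output $y:=G(D_XE_Xx)\in Y$.

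For the first term we must make sure both arguments of $G$ lie in $B_R(X)$. This holds for $x$ by hypothesis and for $D_XE_Xx$ by Assumption~\ref{assumptions:M2}. Assumption~\ref{assumption:regularityG} then gives $\|G(x)-G(D_XE_Xx)\|_Y\le\omega(\|x-D_XE_Xx\|_X)$. Since $\omega$ is nondecreasing and $\|x-D_XE_Xx\|_X\le\delta_X(x)$ by \eqref{eq:prop:reconstructionError}, this is at most $\omega(\delta_X(x))$.

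For the second term we apply the output bound in \eqref{eq:prop:reconstructionError} directly with $y=G(D_XE_Xx)$, which is well defined because $D_XE_Xx\in B_R(X)\subset\mathcal D(G)$. This yields $\delta_Y(G(D_XE_Xx))$ and completes the first claim.

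For the product-space refinement, the plan is to construct $\delta_X$ and $\delta_Y$ from the componentwise bounds and then reuse the first claim. With the product norm,
\[
\|x-D_XE_Xx\|_X^2=\sum_{j=1}^{J_X}\|x_j-(D_XE_Xx)_j\|_{X_j}^2\le\sum_{j=1}^{J_X}\delta_{X,j}(x)^2 ,
\]
so we may set $\delta_X(x):=\bigl(\sum_j\delta_{X,j}(x)^2\bigr)^{1/2}$. Defining $\delta_Y$ in the same way, the first claim gives exactly the stated bound.

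The argument is essentially routine. The one point needing care is that $D_XE_Xx$ stays in the ball where $\omega$ controls $G$; this is precisely the role of Assumption~\ref{assumptions:M2}. Monotonicity of $\omega$ is then what lets us replace the true error $\|x-D_XE_Xx\|_X$ by the upper bound $\delta_X(x)$.
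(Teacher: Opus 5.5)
Your proposal is correct and follows essentially the same route as the paper: the triangle inequality through $G(D_XE_Xx)$, Assumption~\ref{assumption:regularityG} combined with monotonicity of $\omega$ for the first term, and the output reconstruction bound at $y=G(D_XE_Xx)$ for the second. Your explicit definition of $\delta_X,\delta_Y$ from the componentwise bounds spells out what the paper only calls ``analogous,'' and you state the role of Assumption~\ref{assumptions:M2} (keeping $D_XE_Xx$ in $B_R(X)$), which the paper's proof leaves implicit.
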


\begin{remark}[Componentwise and coupled measurements]
\label{rem:componentwise-coupled-measurements}

Proposition~\ref{prop:encoder-decoder-reconstruction-error} does not
require the encoder or decoder to act componentwise. While independent
measurements of each component are the most common situation in operator
learning, more general measurement procedures naturally arise in several
applications \cite{Robey,vanLeeuwen}.
\end{remark}

When one considers products of Sobolev space, we can explicitly bound the reconstruction errors in terms of the fill distance as the next result shows.

\begin{lemma}[Recovery estimate for products of Sobolev-embedded Hilbert spaces]
\label{lem:sobolev}

Let
\(
\mathcal H
=
\mathcal H\bigl(
J,
(\Omega_j)_{j=1}^J, \allowbreak
(n_j)_{j=1}^J,
(s_j)_{j=1}^J,
(p_j)_{j=1}^J,
(t_j)_{j=1}^J,
(q_j)_{j=1}^J,
(A_j)_{j=1}^J
\bigr)
\)
and
\(
X
=
X\bigl(
J,
(\Omega_j)_{j=1}^J,
(t_j)_{j=1}^J,
(q_j)_{j=1}^J
\bigr)
\)
satisfy Assumption~\ref{assumption:productSobolevSpace}, and let
\(
(E_{\mathcal H},D_{\mathcal H},Z_{\mathcal H})
\)
be the associated encoding/decoding maps and measurement space from
Assumption~\ref{assumptions:M1}.

Then, there exist constants \(h_j^0>0\) and \(C_j>0\), independent of
\(h_j\) and \(u_j\), such that, whenever
\(
h_j\le h_j^0,
\)
one has
\[
\|u_j-D_jE_ju_j\|_{\Wkp{t_j}{q_j}(\Omega_j)}
\le
C_jh_j^{s_j-t_j
-
n_j
\left(
\frac{1}{p_j}-\frac{1}{q_j}
\right)_+}
\|u_j\|_{\mathcal H_j}
\]
for every \(u_j\in\mathcal H_j\). Consequently, for every \(u\in\mathcal H\),
\[
\|u-D_{\mathcal H}E_{\mathcal H}u\|_X
\le
\left(
\sum_{j=1}^J
C_j^2
h_j^{2\l s_j-t_j
-
n_j
\left(
\frac{1}{p_j}-\frac{1}{q_j}
\right)_+ \r}
\|u_j\|_{\mathcal H_j}^2
\right)^{1/2}.
\]
\end{lemma}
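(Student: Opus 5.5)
The plan is to reduce to a single component and then invoke the classical sampling inequality for functions with scattered zeros (Narcowich--Ward--Wendland, see \cite[Chapter~11]{Wendland2004Scatter}). Fix \(j\) and \(u_j\in\mathcal H_j\), and set \(e_j:=u_j-D_jE_ju_j\).

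\textbf{Step 1: properties of the error \(e_j\).} Since \(\mathcal H_j\hookrightarrow C^0(\overline{\Omega_j})\), point evaluations are bounded on \(\mathcal H_j\). Hence \(E_j\) is a bounded linear map onto \(Z_j=\operatorname{Ran}(E_j)\), so \(E_j:\mathcal H_j\to Z_j\) is surjective and Theorem~\ref{thm:minimumNorm} applies. By construction \(E_jD_jE_ju_j=E_ju_j\), so \(e_j\) vanishes on \(A_j\). Moreover, \(D_jE_ju_j\) is the minimum-norm element of the affine set \(u_j+\ker E_j\), i.e.\ the orthogonal projection of \(u_j\) onto \((\ker E_j)^\perp\). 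Therefore \(e_j\) is the orthogonal projection onto \(\ker E_j\), and
\[
\|e_j\|_{\mathcal H_j}\le\|u_j\|_{\mathcal H_j}.
\]
By the continuous embedding, this gives \(\|e_j\|_{\Wkp{s_j}{p_j}(\Omega_j)}\le c_j\|u_j\|_{\mathcal H_j}\).

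\textbf{Step 2: the sampling inequality.} For a bounded Lipschitz domain (which satisfies an interior cone condition), the sampling inequality states the following. Under exactly the parameter restrictions in Assumption~\ref{assumption:productSobolevSpace} (\(s_j\) large enough to embed into \(C^0\), and \(t_j\in\mathbb N_0\) with \(t_j<\ell_{0,j}\)), there exist \(h_j^0\) and \(C\), depending only on \(\Omega_j,n_j,s_j,p_j,t_j,q_j\), such that every \(v\in\Wkp{s_j}{p_j}(\Omega_j)\) with \(v|_{A_j}=0\) and \(h_j\le h_j^0\) satisfies
\[
|v|_{\Wkp{t_j}{q_j}(\Omega_j)}\le C h_j^{\ell_{0,j}-t_j}|v|_{\Wkp{s_j}{p_j}(\Omega_j)}.
\]
Summing this over the seminorms of orders \(0,\ldots,t_j\) yields the full norm. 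This is where the bulk of the analysis lies, so I would cite the result rather than reprove it. If one did reprove it, the argument covers \(\Omega_j\) by star-shaped cells of diameter proportional to \(h_j\) and, on each cell, combines a local polynomial reproduction (norming set) argument with Bramble--Hilbert, rescaling, and summation over cells. The delicate part is the bookkeeping for the \((1/p_j-1/q_j)_+\) exponent and for the case \(q_j<p_j\), where Hölder's inequality on the bounded domain is used instead. The main obstacle is therefore matching the paper's hypotheses (in particular \(p_j=1\) with \(s_j\ge n_j\), and \(p_j=\infty\) with integer \(s_j\)) to a version of the sampling inequality stated in the literature.

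\textbf{Step 3: single-component bound and the product estimate.} Applying Step 2 to \(v=e_j\) and using Step 1 gives the single-component bound with \(C_j:=Cc_j\). Since \(E_{\mathcal H}\) and \(D_{\mathcal H}\) act componentwise by Assumption~\ref{assumptions:M1}, we have \((u-D_{\mathcal H}E_{\mathcal H}u)_j=e_j\). Squaring the componentwise bounds and summing them in the product norm of \(X\) then gives the second estimate, assuming \(h_j\le h_j^0\) for every \(j\).
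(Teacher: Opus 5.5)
Your proposal is correct and follows essentially the same route as the paper: $e_j$ vanishes on $A_j$, the orthogonal decomposition $u_j=D_jE_ju_j+e_j$ gives $\|e_j\|_{\mathcal H_j}\le\|u_j\|_{\mathcal H_j}$, a sampling inequality is applied at each order $\ell=0,\ldots,t_j$ (absorbing the factors $h_j^{t_j-\ell}$ via $h_j\le h_j^0$) and combined with the embedding $\mathcal H_j\hookrightarrow \Wkp{s_j}{p_j}(\Omega_j)$, and the product norm is summed. The hypothesis-matching concern you flag is settled in the paper by citing \cite[Theorem 4.1]{arcangeli2007} with $r=s_j$ instead of the Narcowich--Ward--Wendland version. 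That result is stated for exactly this regime: real $s_j$, the cases $p_j=1$ with $s_j\ge n_j$ and $p_j=\infty$ with $s_j\in\mathbb N^*$, and integer target orders $\ell$ below $\ell_{0,j}$.
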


\begin{remark}[Fractional Sobolev orders]
For simplicity, Lemma~\ref{lem:sobolev} is stated for integer Sobolev orders \(t_j\in\mathbb N_0\). This is because its proof relies on \cite[Theorem 4.1]{arcangeli2007}, which establishes the required sampling inequalities for integer target orders. By instead appealing to \cite[Theorem 3.1]{arcangeli2021}, the same argument extends to fractional Sobolev orders, yielding an analogous result for arbitrary \(t_j\ge 0\) within the admissible range.
\end{remark}

We continue by considering the learning-error term. The latter can be estimated in several ways. In this work, we adopt an approach based on Sobolev embeddings and sampling inequalities, which yields explicit convergence rates under suitable regularity assumptions on the reproducing kernel Hilbert space. This framework naturally accommodates vector-valued kernels and is particularly well suited to the encoder--decoder setting considered here.

\begin{lemma}[Learning error in a finite-dimensional RKHS]
\label{lem:finite-dimensional-learning-error}

Let
\(
\Upsilon\subset\mathbb R^{d_X}
\)
be a bounded domain with Lipschitz boundary, and let
\(
\mathbf U
=
\{U_1,\ldots,U_N\}
\subset\Upsilon
\)
be a finite sampling set with fill distance
\[
h_{\mathrm{tr}}
:=
\sup_{U\in\Upsilon}
\min_{1\le i\le N}
\|U-U_i\|_2.
\]

Let
\(
\Gamma:
\Upsilon\times\Upsilon
\to
\mathcal L(\mathbb R^{d_Y})
\)
be an operator-valued kernel with associated RKHS
\(
\mathcal H_\Gamma
\).
Assume that, for some
\(
\tau>\frac{d_X}{2},
\)
there is a continuous embedding
\(
\mathcal H_\Gamma
\hookrightarrow
\Hk{\tau}(\Upsilon;\mathbb R^{d_Y})
\simeq
\Hk{\tau}(\Upsilon)^{d_Y}.
\)

Let
\(
f\in\mathcal H_\Gamma
\),
and denote by
\(
\widehat f_\lambda
\)
the kernel interpolant, when \(\lambda=0\), or the kernel
ridge-regression estimator, when \(\lambda>0\), associated with the exact data
\(
\bigl\{(U_i,f(U_i))\bigr\}_{i=1}^N.
\)
If \(\lambda=0\), assume that
\(
\Gamma(\mathbf U,\mathbf U):
(\mathbb R^{d_Y})^N\to(\mathbb R^{d_Y})^N
\)
is boundedly invertible. Define
\(
e_\lambda
:=
f-\widehat f_\lambda.
\)

Let
\(q\in[1,\infty]\),
set
\(
\gamma:=\max\{2,q\},
\)
and let
\(
0\le s\le\ell(\tau,q,d_X),
\)
where
\[
\ell(\tau,q,d)
=
\begin{cases}
\ell_0(\tau,q,d),
&
\parbox[t]{0.62\linewidth}{if
\(
\tau\in\mathbb N
\)
and either
\(
q=2
\)
or
\(
q>2
\)
with
\(
\ell_0(\tau,q,d)\in\mathbb N,
\)}
\\[2ex]
\lceil\ell_0(\tau,q,d)\rceil-1,
&
\text{otherwise},
\end{cases}
\]
with
\[
\ell_0(\tau,q,d)
:=
\tau
-
d\left(\frac12-\frac1q\right)_+.
\]
When
\(q=\infty\),
assume additionally that
\(
s\in\mathbb N_0.
\)

Then, there exist constants \(h_0>0\) and \(C>0\), independent of
\(h_{\mathrm{tr}}\), \(\lambda\), and \(f\), such that, whenever
\(
h_{\mathrm{tr}}\le h_0,
\)
one has
\[
\begin{aligned}
|e_\lambda|_{\Wkp{s}{q}(\Upsilon;\mathbb R^{d_Y})}
\le
C\Bigg(
&
h_{\mathrm{tr}}^{
\tau-s-d_X\left(\frac12-\frac1q\right)_+
}
+
h_{\mathrm{tr}}^{d_X/\gamma-s}\sqrt{\lambda}
\Bigg)
\|f\|_{\mathcal H_\Gamma}.
\end{aligned}
\label{eq:finite-dimensional-learning-Sobolev}
\]
Consequently, for every \(U\in\Upsilon\),
\[
\|e_\lambda(U)\|_2
\le
C
\left(
h_{\mathrm{tr}}^{\tau-d_X/2}
+
\sqrt{\lambda}
\right)
\|f\|_{\mathcal H_\Gamma}.
\label{eq:finite-dimensional-pointwise-error}
\]
\end{lemma}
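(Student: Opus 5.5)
The plan is to reduce to a scalar, componentwise sampling inequality of Arcangeli--López de Silanes--Torrens / Wendland--Rieger type, applied to the error $e_\lambda$ on $\Upsilon$. The estimate to invoke is the one for functions with scattered near-zeros. For $u\in\Hk{\tau}(\Upsilon)$, $\tau>d_X/2$, $h_{\mathrm{tr}}\le h_0$ and admissible $s,q$,
\[
|u|_{\Wkp{s}{q}(\Upsilon)}\le C\Bigl(h_{\mathrm{tr}}^{\tau-s-d_X(\frac12-\frac1q)_+}|u|_{\Hk{\tau}(\Upsilon)}+h_{\mathrm{tr}}^{d_X/\gamma-s}\,\|u|_{\mathbf U}\|_{\ell^\infty}\Bigr).
\]
In the Arcangeli version \cite[Theorem 4.1]{arcangeli2007} the discrete term comes with an $\ell^2$-type norm scaled by $h^{d_X/\gamma}$, and this produces the exponent $d_X/\gamma-s$. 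The exponent restriction $s\le\ell(\tau,q,d_X)$ is exactly the admissibility condition of that theorem, so I would simply cite it. I apply the inequality to each scalar component $e_{\lambda,k}$, $k=1,\dots,d_Y$, and sum. Since $d_Y$ is fixed, the vector-valued seminorm is controlled by $\sqrt{d_Y}$ times the maximal component, and this factor is absorbed into $C$.

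The inequality needs two inputs. The first is a bound on $\|e_\lambda\|_{\Hk{\tau}}$. By the assumed embedding, $\|e_\lambda\|_{\Hk{\tau}}\le C_{\mathrm{emb}}\|e_\lambda\|_{\mathcal H_\Gamma}\le C_{\mathrm{emb}}(\|f\|_{\mathcal H_\Gamma}+\|\widehat f_\lambda\|_{\mathcal H_\Gamma})$. Because $f$ is feasible for both the interpolation problem and the ridge problem, $\|\widehat f_\lambda\|_{\mathcal H_\Gamma}\le\|f\|_{\mathcal H_\Gamma}$. In the ridge case, comparing objective values gives
\[
\lambda^{-1}\sum_i\|\widehat f_\lambda(U_i)-f(U_i)\|^2+\|\widehat f_\lambda\|^2_{\mathcal H_\Gamma}\le\|f\|^2_{\mathcal H_\Gamma}.
\]
The second input is the discrete data. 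The same comparison gives $\sum_i\|e_\lambda(U_i)\|_2^2\le\lambda\|f\|^2_{\mathcal H_\Gamma}$, hence $\max_i\|e_\lambda(U_i)\|_2\le\sqrt\lambda\,\|f\|_{\mathcal H_\Gamma}$. For $\lambda=0$ the residuals vanish, since invertibility of $\Gamma(\mathbf U,\mathbf U)$ ensures the interpolant exists and Corollary~\ref{cor:induced-measurement-space-equivalence} provides the formula.

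Inserting both inputs into the sampling inequality gives the Sobolev seminorm estimate. For the pointwise bound I take $s=0$ and $q=\infty$, which is admissible since $\tau>d_X/2$ gives $\ell\ge0$. Then $\gamma=\infty$, $d_X/\gamma=0$ and $(\frac12-\frac1q)_+=\frac12$, so the seminorm is the sup norm and the estimate reads $C(h_{\mathrm{tr}}^{\tau-d_X/2}+\sqrt\lambda)\|f\|_{\mathcal H_\Gamma}$. This bounds $\|e_\lambda(U)\|_2$ for every $U\in\Upsilon$.

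The main obstacle is matching the discrete term exactly to the cited sampling inequality. Arcangeli's theorem is stated with the full Sobolev norm on the left and uses a specific discrete seminorm; the full norm in particular requires a Lipschitz domain, which holds for $\Upsilon$. If the discrete term involves an $\ell^\gamma$ sum rather than a maximum, I would bound it by $N^{1/\gamma}\max_i$. That introduces a factor $N^{1/\gamma}$, which is not controlled by $h_{\mathrm{tr}}$ alone when the points are not quasi-uniform. I would therefore first use the $\ell^2$ bound $\sum_i\|e_\lambda(U_i)\|^2\le\lambda\|f\|^2$ together with $\ell^\gamma\le\ell^2$ for $\gamma\ge2$. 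This avoids any dependence on $N$.
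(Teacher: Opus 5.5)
This is essentially the paper's proof. Comparing objective values with the competitor $f$ gives $\|\widehat f_\lambda\|_{\mathcal H_\Gamma}\le\|f\|_{\mathcal H_\Gamma}$ and $\sum_i\|e_\lambda(U_i)\|_2^2\le\lambda\|f\|_{\mathcal H_\Gamma}^2$. These feed into a sampling inequality whose discrete term is $h_{\mathrm{tr}}^{d_X/\gamma-s}\|e_\lambda|_{\mathbf U}\|_{\ell^2}$, followed by the embedding and the choice $q=\infty$, $s=0$; the paper cites the vector-valued inequality \cite[Theorem 17]{LeGia2026} with discrete exponent $2$, while you apply a scalar one componentwise, which only costs a $d_Y$-dependent constant.

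One correction: the inequality you first display, with $\max_i$ in the discrete term, is false whenever $\gamma<\infty$. For $u\equiv1$, $s=0$, $q=2$, its left side is $|\Upsilon|^{1/2}$ while its right side is $C h_{\mathrm{tr}}^{d_X/2}\to0$. So the $\ell^2$ route in your last paragraph is the one you must use, not a contingency.
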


\begin{remark}[Alternative estimates via the power function]
The Sobolev-based approach of Lemma \ref{lem:finite-dimensional-learning-error} is not the only way to estimate the learning error. An alternative
is to bound the interpolation error directly in terms of the associated
power function (see Theorem \ref{thm:encoder-decoder-learning-error}). For example, in the scalar-valued kernel case, the decay of the power function can be estimated from the smoothness of the reproducing kernel, leading to convergence rates in terms of the
fill distance; see, for example, \cite{Wendland2004Scatter}.
\end{remark}

\begin{theorem}[Reconstruction guarantees for kernel multi-input, multi-output learning]
\label{thm:total-product-sobolev-error}

Let
\(
\mathcal H_X
=
\mathcal H\bigl(
J_X, \allowbreak
(\Omega_{X,j})_{j=1}^{J_X}, \allowbreak
(n_{X,j})_{j=1}^{J_X},
(s_{X,j})_{j=1}^{J_X}, \allowbreak
(p_{X,j})_{j=1}^{J_X},
(t_{X,j})_{j=1}^{J_X},
(q_{X,j})_{j=1}^{J_X},
(A_{X,j})_{j=1}^{J_X}
\bigr)
\)
and
\(
X
=
\allowbreak X\bigl(
J_X, \allowbreak
(\Omega_{X,j}\allowbreak)_{j=1}^{J_X},\) \(\allowbreak
\allowbreak (t_{X,j})_{j=1}^{J_X},
(q_{X,j})_{j=1}^{J_X}
\bigr)
\)
satisfy Assumption~\ref{assumption:productSobolevSpace}.
Likewise, let
\(
\mathcal H_Y
=
\mathcal H\bigl(
J_Y,
(\Omega_{Y,k})_{k=1}^{J_Y},
(n_{Y,k})_{k=1}^{J_Y},
(s_{Y,k})_{k=1}^{J_Y}, \allowbreak
(p_{Y,k})_{k=1}^{J_Y}, \allowbreak
(t_{Y,k})_{k=1}^{J_Y},
(q_{Y,k})_{k=1}^{J_Y}, \allowbreak
(A_{Y,k})_{k=1}^{J_Y}
\bigr)
\)
and
\(
Y
=
X\bigl(
J_Y,
(\Omega_{Y,k})_{k=1}^{J_Y},
(t_{Y,k})_{k=1}^{J_Y},
(q_{Y,k})_{k=1}^{J_Y}
\bigr)
\)
satisfy Assumption~\ref{assumption:productSobolevSpace}.
Let
\(
(E_{\mathcal H_X},D_{\mathcal H_X},Z_{\mathcal H_X})
\)
and
\(
(E_{\mathcal H_Y},D_{\mathcal H_Y},Z_{\mathcal H_Y})
\)
be the associated encoding maps, decoding maps, and measurement spaces from Assumption~\ref{assumptions:M1}. Assume that $G$ satisfies Assumption \ref{assumption:regularityG2}. Let
\(
d_X:=\sum_{j=1}^{J_X} m_{X,j}
\)
and 
\(
d_Y:=\sum_{k=1}^{J_Y} m_{Y,k},
\)
so that
\(
Z_{\mathcal H_X} \subseteq \mathbb R^{d_X}
\)
and 
\(
Z_{\mathcal H_Y} \subseteq \mathbb R^{d_Y}.
\)
Assume that Assumption \ref{assumption:sobolev-learning} is satisfied.

Then, there exist constants $C > 0$, $h_{\mathrm{tr}}^0>0$, $h_{X,j}^0>0$, $h_{Y,k}^0>0$
independent of the fill distances, \(\lambda\), and the functions being
reconstructed, such that, whenever
\(
h_{\mathrm{tr}}\le h_{\mathrm{tr}}^0,
\) \(
h_{X,j}\le h_{X,j}^0
\) for $j=1,\ldots,J_X$
and
\(
h_{Y,k}\le h_{Y,k}^0
\) for $k=1,\ldots,J_Y$,
one has, for every
\(
x=(x_1,\ldots,x_{J_X})\in B_R(\mathcal H_X),
\)
the estimate
\begin{align}
    \|G(x)-\overline{G}(x)\|_Y &\leq \omega \l C R \max_{1 \leq j \leq J_X}
h_{X,j}^{s_{X,j}-t_{X,j}
-
n_{X,j}
\left(
\frac{1}{p_{X,j}}-\frac{1}{q_{X,j}}
\right)_+}   \r \notag \\
&+  C M_R  \max_{1 \leq k \leq J_Y} h_{Y,k}^{s_{Y,k}-t_{Y,k}
-
n_{Y,k}
\left(
\frac{1}{p_{Y,k}}-\frac{1}{q_{Y,k}}
\right)_+} 
\notag \\
&+ \|D_Y\|_{\mathrm{op}}\,
    C
\left(
h_{\mathrm{tr}}^{\tau-d_X/2} +
\sqrt{\lambda}
\right) \|G_{\mathrm{enc}}\|_{\mathcal H_\Gamma}+ \|D_Y\|_{\mathrm{op}}\,
    \|\mathcal R_{\mathbf U,\lambda}\eta(E_Xx)\|_{2} \notag
\end{align}

\end{theorem}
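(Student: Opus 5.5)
The plan is to start from part 2(a) of Theorem~\ref{thm:encoder-decoder-learning-error}, applied with $X$, $Y$ replaced by the Sobolev-type setting and with $E_X=E_{\mathcal H_X}$, $D_X=D_{\mathcal H_X}$, $E_Y=E_{\mathcal H_Y}$, $D_Y=D_{\mathcal H_Y}$. For $x\in B_R(\mathcal H_X)$ this splits $\|G(x)-\overline G(x)\|_Y$ into three pieces. The first is the reconstruction term $\|G(x)-D_YE_YG(D_XE_Xx)\|_Y$. The second is the learning term $\|D_Y\|_{\mathrm{op}}\|(G_{\mathrm{enc}}-\widehat G_{\mathrm{enc},\lambda})(E_Xx)\|_{Z_Y}$. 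The third is the data-consistency term $\|D_Y\|_{\mathrm{op}}\|\mathcal R_{\mathbf U,\lambda}\eta(E_Xx)\|_2$. The third term already appears verbatim in the claimed estimate, so only the first two need to be bounded.

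For the reconstruction term, I would follow the proof of Proposition~\ref{prop:encoder-decoder-reconstruction-error}, adapted to Assumption~\ref{assumption:regularityG2}. Its continuity hypothesis is on $B_R(\mathcal H_X)$ rather than on $B_R(X)$, so the key point is that $D_XE_Xx$ stays in $B_R(\mathcal H_X)$. This holds because $D_j$ is the minimum-norm interpolant in $\mathcal H_j$, so $\|D_jE_jx_j\|_{\mathcal H_j}\le\|x_j\|_{\mathcal H_j}$. Hence $\|D_XE_Xx\|_{\mathcal H_X}\le\|x\|_{\mathcal H_X}<R$, which is the analogue of Assumption~\ref{assumptions:M2} and is proved exactly as in Theorem~\ref{thm:minimumNorm}. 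Then
\[
\|G(x)-D_YE_YG(D_XE_Xx)\|_Y\le\omega(\|x-D_XE_Xx\|_X)+\|y-D_YE_Yy\|_Y,\qquad y:=G(D_XE_Xx).
\]
Lemma~\ref{lem:sobolev} bounds the first argument by $\bigl(\sum_j C_j^2h_{X,j}^{2r_{X,j}}\|x_j\|_{\mathcal H_{X,j}}^2\bigr)^{1/2}\le \max_jC_j\max_jh_{X,j}^{r_{X,j}}\,\|x\|_{\mathcal H_X}$, where $r_{X,j}$ denotes the exponent in the statement. Using $\|x\|_{\mathcal H_X}\le R$ and the monotonicity of $\omega$ gives the first line of the claimed bound. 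For the second piece, $y\in\mathcal H_Y$ with $\|y\|_{\mathcal H_Y}\le M_R$ by Assumption~\ref{assumption:regularityG2}, so the same $\ell^2$-to-max argument yields $CM_R\max_kh_{Y,k}^{r_{Y,k}}$.

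For the learning term, Assumption~\ref{assumption:sobolev-learning} lets me apply Lemma~\ref{lem:finite-dimensional-learning-error} with $f=G_{\mathrm{enc}}\in\mathcal H_\Gamma$ on $\Upsilon$. The point $U=E_Xx$ lies in $\Upsilon$ because $x\in B_R(\mathcal H_X)$, and the $U_i$ lie in $\Upsilon$ as well. Here $\widehat G_{\mathrm{enc},\lambda}$ is exactly the interpolant or ridge estimator built from the exact data $(U_i,G_{\mathrm{enc}}(U_i))$, so the pointwise bound gives $C(h_{\mathrm{tr}}^{\tau-d_X/2}+\sqrt\lambda)\|G_{\mathrm{enc}}\|_{\mathcal H_\Gamma}$. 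Multiplying by $\|D_Y\|_{\mathrm{op}}$ gives the third line. The final step is to enlarge $C$ to the maximum of all the constants involved and take the thresholds $h^0$ from the two lemmas.

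I expect the main obstacle to be bookkeeping rather than analysis. One must check that the abstract decomposition of Theorem~\ref{thm:encoder-decoder-learning-error} is legitimately applied with the encoders defined on $\mathcal H_X,\mathcal H_Y$ while the errors are measured in $X,Y$ norms, and that the relevant points remain inside $B_R(\mathcal H_X)$ and $\Upsilon$. The invariance $D_XE_XB_R(\mathcal H_X)\subset B_R(\mathcal H_X)$ is the step I would verify most carefully, via the orthogonal-projection characterization $D_jE_j=P_{\operatorname{ran}E_j^*}$.
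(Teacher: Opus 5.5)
Your proposal is correct and follows the same route as the paper's proof. Both start from Theorem~\ref{thm:encoder-decoder-learning-error}(2a), then bound the reconstruction term with Proposition~\ref{prop:encoder-decoder-reconstruction-error} and Lemma~\ref{lem:sobolev}, collapsing the $\ell^2$ sums into a maximum times $\|x\|_{\mathcal H_X}\le R$ and $\|G(D_XE_Xx)\|_{\mathcal H_Y}\le M_R$, and bound the learning term with Lemma~\ref{lem:finite-dimensional-learning-error} applied to $f=G_{\mathrm{enc}}$. Your explicit check that $D_{\mathcal H_X}E_{\mathcal H_X}$ maps $B_R(\mathcal H_X)$ into itself (each $D_jE_j$ is an orthogonal projection in $\mathcal H_j$) is used only implicitly in the paper, whose cited Proposition assumes the $X$-ball conditions rather than Assumption~\ref{assumption:regularityG2}, so it is a welcome addition, not a deviation.
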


\begin{remark}[Sample-sensor complexity trade-off]
One important results would be to derive a joint complexity bound that balances the number of sensors against the number of sampled training operators. Such a result is difficult because the statistical learning error depends on the fill distance of the encoded training samples in $Z_X$, which is generally not directly tractable. The encoded training points are obtained by applying the encoder to the original training inputs, and there is no reason for them to form a quasi-uniform or otherwise space-filling design in the encoded space. Existing convergence results in kernel-based operator learning typically circumvent this difficulty by assuming that the encoded training points become dense in the encoded domain (see \cite[Theorem 3.4]{BATLLE2024112549}), without explicitly quantifying the decay of the corresponding fill distance as a function of the number of training samples. A possible approach towards such quantitative estimates would be to study optimal experimental design in $Z_X$, selecting training inputs whose encoded representations minimize the fill distance. Under suitable geometric assumptions on the encoded manifold, or by constructing quasi-uniform designs directly in the encoded space and mapping them back to admissible inputs via suitable preimages or the decoder, one may then obtain explicit trade-offs between the number of sensors and the number of required training operators.
\end{remark}

\begin{remark}[Scaling with the number of tasks]
The reconstruction terms in Theorem~\ref{thm:total-product-sobolev-error}
exhibit the same qualitative scaling behaviour as the approximation results
obtained for multiple operator learning with neural operators. As shown in
\cite{weihs2026multipleneuraloperatorsachieve}, passing from a single operator
to multiple related operators, either through concatenation
\cite[Theorem 3.29]{weihs2026multipleneuraloperatorsachieve} or through
dedicated architectures
\cite[Theorem 3.19]{weihs2026multipleneuraloperatorsachieve}, need not alter
the approximation exponent, i.e., the approximation rates are governed by the least regular (or most difficult) task, rather than suffering from a compounding deterioration of the approximation exponent with the number of tasks. An analogous phenomenon appears in the reconstruction part of the present
kernel-based estimate. The input and output reconstruction errors are
controlled by the largest componentwise discretization errors. Consequently,
the associated convergence exponents are determined by the least favorable
Sobolev regularity among the component spaces, rather than by the number of
components itself.

The kernel-learning term requires a separate discussion. Its convergence rate is governed by
\(
    h_{\mathrm{tr}}^{\tau-d_X/2},
\)
and therefore depends on the dimension \(d_X\) of the encoded measurement space
and the regularity of the encoded operator \(G_{\mathrm{enc}}\). Since adding
tasks typically increases the total number of measurements, and hence the
dimension of the encoded measurement space, the learning rate may deteriorate
accordingly. This dependence is inherited from the underlying single-operator
kernel approximation theory rather than from the multi-task formulation itself.
\end{remark}

\begin{remark}[Extension to other measurement operators]
The Sobolev reconstruction results presented in Theorem \ref{thm:total-product-sobolev-error} are derived for pointwise
measurements. However, as noted in Remark \ref{rem:pointwise},  the subsequent learning analysis is independent of the
particular choice of measurements and only requires the encoded training data
\(
E_X(x_i)\in Z_X
\)
and
\(
E_Y(y_i)\in Z_Y
\).
Consequently, as noted in \cite{BATLLE2024112549}, the present framework extends immediately to other bounded linear
measurement operators, provided that suitable reconstruction estimates are
available (i.e. analogues of Lemma \ref{lem:sobolev}).

More precisely, suppose that the encoders \(E_X\) and \(E_Y\) are constructed
from a family of bounded linear measurements (e.g., local averages as in \cite[Lemma 14.34]{Owhadi_Scovel_2019}, moments,
Fourier coefficients, or other sensor functionals), and that the corresponding
decoders satisfy approximation estimates analogous to those established here for
point evaluations. Then the encoder--decoder error decomposition, the RKHS
construction, and all subsequent kernel learning results remain unchanged. Only
the reconstruction estimates need to be replaced by the corresponding sampling
inequalities for the chosen measurements. This separation highlights the modular structure of the framework: the
measurement modality enters exclusively through the approximation properties of
the encoder-decoder pair, whereas the kernel learning theory depends only on
the resulting finite-dimensional encoded spaces.

Extending the framework to genuinely
infinite-dimensional measurement spaces \(Z_X\) and \(Z_Y\) would require two
additional ingredients. First, one would need sampling inequalities for
bounded linear measurement operators taking values in infinite-dimensional
Hilbert spaces, such as trace operators, distributed observation operators, or
other function-valued measurements. Second, the Sobolev reconstruction theory
developed in this section would need to be extended to encoder--decoder pairs
with infinite-dimensional encoded spaces, yielding approximation estimates
analogous to Lemma~\ref{lem:finite-dimensional-learning-error}. Once such reconstruction results are
available, the abstract encoder--decoder framework and the subsequent kernel
learning analysis apply without essential modification.

\end{remark}

\subsection{Kernel Methods for Multi-Task Operator Learning} \label{sec:multiple}

Theorem~\ref{thm:total-product-sobolev-error} provides a general error estimate for kernel-based encoder--decoder learning of maps between product spaces of functions,
\[
G:X_1\times\cdots\times X_{J_X}
\longrightarrow
Y_1\times\cdots\times Y_{J_Y},
\]
where each component may represent a different modality, parameter, or function space. As a special case, it naturally recovers the kernel operator learning framework of \cite{BATLLE2024112549}, depicted in figure~\ref{fig:cd-classical-operator} and corresponding to the choice \(J_X=J_Y=1\). As a further important example and application, we now specialize this framework to multi-task/multiple operator learning, where the objective is to learn
\[
G:W\longrightarrow\{G[\alpha]:U\rightarrow V\}_{\alpha\in W},
\]
assigning to each task parameter \(\alpha\in W\) an operator \(G[\alpha]:U\rightarrow V\). While we focus on this representative application, the insights developed below, including the different formulations, their statistical and computational properties, and the settings in which each formulation is most appropriate, apply equally to general product-space learning problems.

We begin by introducing two equivalent viewpoints of the multiple operator learning problem. These are illustrated in figures~\ref{fig:cd-operator} and~\ref{fig:cd-product}. The first learns the operator-valued map assigning an operator to each parameter, while the second casts the same problem as learning the product-space map
\[
G':W\times U\to V,\qquad
G'(\alpha,u)=G[\alpha](u).
\]

\begin{figure}[H]
\centering
\begin{subfigure}[t]{0.32\linewidth}
\centering
\begin{tikzcd}[column sep=3.6em, row sep=2.0em]
U \arrow[r, "G"] \arrow[d, "E_U" right]
  & V \arrow[d, "E_V" left] \\
Z_U \arrow[r, "\widehat G"]
  & Z_V
    \arrow[u, "D_V" right, shift right=0.9ex]
\arrow[from=2-1, to=1-1, "D_U" left, shift left=0.9ex]
\end{tikzcd}
\caption{Operator learning.}
\label{fig:cd-classical-operator}
\end{subfigure}
\hfill
\begin{subfigure}[t]{0.32\linewidth}
\centering
\begin{tikzcd}[column sep=3.6em, row sep=2.0em]
W \arrow[r, "G"]
  \arrow[d, "E_W" right]
&
\{G[\alpha]:U\to V\}
  \arrow[d, "E_O" left]
\\
Z_W \arrow[r, "\widehat G"]
&
Z_O
  \arrow[u, "D_O" right, shift right=0.9ex]
\arrow[from=2-1, to=1-1, "D_W" left, shift left=0.9ex]
\end{tikzcd}
\caption{Operator-valued learning.}
\label{fig:cd-operator}
\end{subfigure}
\hfill
\begin{subfigure}[t]{0.32\linewidth}
\centering
\begin{tikzcd}[column sep=3.6em, row sep=2.0em]
W \times U \arrow[r, "G'"] \arrow[d, "E_{W\times U}" right]
  & V \arrow[d, "E_V" left] \\
Z_{W\times U} \arrow[r, "\widehat{G'}"]
  & Z_V
    \arrow[u, "D_V" right, shift right=0.9ex]
\arrow[from=2-1, to=1-1, "D_{W\times U}" left, shift left=0.9ex]
\end{tikzcd}
\caption{Product-space learning.}
\label{fig:cd-product}
\end{subfigure}
\caption{
Encoder--decoder formulations for classical operator learning and multiple operator learning.
(a) Classical operator learning, where a single operator
\(G:U\to V\) is learned from measurements in the Hilbert spaces
\(Z_U\) and \(Z_V\) \cite{BATLLE2024112549}.
(b) Operator-valued multiple operator learning, where the objective is to learn the map
\(G:W\to\{G[\alpha]:U\to V\}_{\alpha\in W}\)
that assigns an operator to each parameter, using measurement spaces
\(Z_W\) and \(Z_O\).
(c) Product-space multiple operator learning, where the same problem is represented by the map
\(G':W\times U\to V\), defined by
\(G'(\alpha,u)=G[\alpha](u)\), with measurement spaces
\(Z_{W\times U}\) and \(Z_V\).
In each formulation, the encoder maps \(E\) map the original inputs and outputs into abstract Hilbert measurement spaces, the decoder maps \(D\) reconstruct representatives in the original spaces, and \(\widehat G\) (or \(\widehat G'\) in figure~(c)) denotes the learned surrogate.
}
\label{fig:cd-views}
\end{figure}

Although both formulations solve the same learning problem, they represent different prediction tasks. The operator-valued formulation learns an entire solution operator \(G[\alpha]:U\to V\) for each parameter \(\alpha\), so that a prediction consists of reconstructing the complete operator: this is therefore particularly attractive when one wishes to repeatedly query the same operator for many different input functions. In contrast, the product-space formulation learns individual operator evaluations through the map \(G':W\times U\to V\). Thus, it directly predicts the solution corresponding to a single parameter--input pair \((\alpha,u)\), making it a natural choice when only isolated evaluations are required or when different parameters are associated with different input functions.

These differing representations lead to distinct statistical and computational properties. The operator-valued formulation generalizes only across the parameter space \(W\), while the product-space formulation simultaneously generalizes across both the parameter space \(W\) and the input space \(U\). On the other hand, the operator-valued formulation requires one training sample per operator, whereas the product-space formulation requires one sample for every parameter--input pair. As a result, the size of the kernel matrix in the operator-valued approach scales with the number of operators, while the product-space formulation scales with the total number of parameter--input pairs. The operator-valued formulation is therefore expected to be considerably more computationally efficient whenever each operator is observed on many input functions, whereas the product-space formulation offers greater flexibility by learning directly over the joint parameter--input domain. Table~\ref{tab:mol-comparison} summarizes the principal differences between the two formulations. In Section~\ref{sec:experiments}, we compare both approaches empirically in terms of predictive performance and computational efficiency.

\begin{table}
\centering
\small
\renewcommand{\arraystretch}{2}
\setlength{\tabcolsep}{6pt}
\begin{tabularx}{\linewidth}{>{\bfseries}p{3.2cm} Y Y}
\toprule
& \textbf{Operator-valued learning} & \textbf{Product-space learning} \\
\midrule

Learned map
&
$G:W\rightarrow\{U\rightarrow V\}$
&
$G':W\times U\rightarrow V$
\\

Prediction
&
Entire operator $G[\alpha]$
&
Single operator evaluation $G[\alpha](u)$
\\

Generalization
&
Parameter space $W$
&
Parameter space $W$ and input space $U$
\\

Typical setting
&
Multiple tasks sharing common input functions
&
Multiple tasks with varying input functions
\\

Training samples
&
$n_{\alpha}$ operators
&
$n_{\alpha}n_u$ parameter--input pairs
\\

Size of kernel matrix
&
$n_{\alpha}\times n_{\alpha}$
&
$(n_{\alpha}n_u)\times(n_{\alpha}n_u)$
\\
Theoretical guarantees
&
Theorem~\ref{thm:total-product-sobolev-error} with \(J_X=J_Y=1\)  (for integral operators)
&
Theorem~\ref{thm:total-product-sobolev-error} with \(J_X=2,\;J_Y=1\)
\\
\bottomrule
\end{tabularx}
\caption{Comparison of the two multiple operator learning formulations. The operator-valued formulation predicts entire operators and generalizes across the parameter space, whereas the product-space formulation predicts individual operator evaluations and simultaneously generalizes across both the parameter and input spaces. Here, \(n_\alpha\) denotes the number of operators (or parameter instances) in the training set, and \(n_u\) denotes the number of input functions associated with each operator.}
\label{tab:mol-comparison}
\end{table}

\begin{figure}[b]
\centering
\begin{tikzcd}[column sep=3.6em, row sep=2.0em]
W \times U \times \Omega_V \arrow[r, "G''"] \arrow[d, "E" right]
  & \mathbb{R} \\
Z_{W \times U \times \Omega_V} \arrow[ur, "\widehat{G''}"']
\arrow[from=2-1, to=1-1, "D" left, shift left=0.9ex]
  &
\end{tikzcd}
\caption{
Complete product-space formulation of multiple operator learning. Instead of learning the operator-valued map
\(G:W\to\{U\to V\}\) or the product-space map
\(G':W\times U\to V\), this formulation learns the pointwise evaluation map
\(G'':W\times U\times\Omega_V\to\mathbb{R}\), defined by
\(G''(\alpha,u,x)=G[\alpha](u)(x)\).
The encoder \(E\) maps parameter--input--location triples into a measurement space, while the decoder \(D\) reconstructs representatives in the original product space. The learned surrogate \(\widehat{G''}\) predicts the solution value at individual spatial locations.
}
\label{fig:cd-product-complete}
\end{figure}

\begin{remark}[Alternative mesh-transferability]  \label{rem:mesh-free-alternative}
An alternative formulation to those depicted in Figure \ref{fig:cd-views} is obtained by learning the pointwise evaluation map
\[
G'':W\times U\times\Omega_V\rightarrow\mathbb{R},
\qquad
(\alpha,u,x)\mapsto G[\alpha](u)(x),
\]
rather than the operator-valued map \(G:W\rightarrow\{U\rightarrow V\}\) or the product-space map \(G':W\times U\rightarrow V\); see Figure \ref{fig:cd-product-complete}. In this setting, the spatial coordinate is treated as an additional input variable, so that the learned surrogate directly predicts the solution value at any query location. Consequently, the resulting model is naturally mesh-independent and can be evaluated on arbitrary output discretizations directly.

The increased flexibility comes at a significant computational cost. Each training sample now corresponds to a parameter--input--location triple \((\alpha,u,x)\), so that the number of training examples grows proportionally to the product of the number of parameters, input functions, and output locations. Accordingly, kernel methods require solving linear systems whose size scales with the total number of such triples, leading to substantially higher memory and computational complexity than either the operator-valued or product-space formulations. For this reason, we focus on the two formulations in Figures \ref{fig:cd-operator} and \ref{fig:cd-product}, which already provide mesh-transferability through the encoder--decoder framework while remaining computationally tractable.
\end{remark}

Among the two formulations introduced above, the product-space formulation in Figure~\ref{fig:cd-product} fits directly within the setting of Theorem~\ref{thm:total-product-sobolev-error} by choosing \(J_X=2\) and \(J_Y=1\), with input space \(W\times U\) and output space \(V\). As a consequence, all approximation, reconstruction, and learning guarantees developed in Section \ref{sec:general} apply immediately. 

The operator-valued formulation of Figure \ref{fig:cd-operator} is more subtle. Although the encoder-decoder framework itself is formulated abstractly, the quantitative reconstruction theory developed in this work specializes to products of Hilbert spaces of functions admitting Sobolev embedding so that Sobolev sampling inequalities provide reconstruction error estimates as in Lemma \ref{lem:sobolev}. Consequently, applying the present theory to operator-valued learning requires the operator space to admit an appropriate realization as a function space. This is naturally the case for many important classes of operators, including integral operators \cite{conway2007course,brezis2010functional}
\[
(Tf)(x)=\int_{\Omega} k(x,y)\,f(y)\,\mathrm dy,
\]
which are canonically identified with their kernels \(k\). Such operators arise throughout scientific computing, for example as Green's operators for boundary value problems, kernel integral transforms such as the Fourier and Laplace transforms, and many classes of nonlocal operators \cite{Evans2010PDE,Duffy2015Green,Davies2002}. Under this identification, the operator-valued learning problem becomes a classical operator learning problem in which the outputs are the kernel functions \(k\). In other words, the operator-valued map
\(
G:W\rightarrow\{U\rightarrow V\}
\)
is identified with a function-valued map
\(
G:W\rightarrow K,
\)
so that the general encoder-decoder theory of Theorem \ref{thm:total-product-sobolev-error} applies directly with \(J_X=J_Y=1\). The resulting learning guarantees are therefore those of kernel operator learning. From the perspective of the encoder–decoder framework, the principal difference is that the outputs now belong to a Sobolev space over the product domain \(\Omega\times\Omega\), rather than \(\Omega\). Thus, the reconstruction rates inherited from the Sobolev sampling inequalities depend on the dimension of the product domain, typically leading to more stringent sampling requirements than in the standard setting.

We leave the extension of the reconstruction theory developed here directly to general spaces of operators, without relying on an underlying functional representation, as a future work. Such a theory would substantially broaden the applicability of kernel-based encoder-decoder learning and could enable analogous approximation, reconstruction, and learning guarantees for genuinely operator-valued prediction problems. 

\begin{remark}[Learning maps between measures]
\label{rem:measure-learning}
As with the integral operators discussed above, the encoder--decoder framework can also accommodate learning problems involving
measures that are absolutely continuous with respect to a fixed reference
measure and whose densities belong to the function spaces considered in
Section~\ref{sec:mainMulti}. Identifying each measure with its density reduces
the measure-valued learning problem to a function-valued one, allowing the
approximation theory developed above to be applied directly.
\end{remark}

\section{Numerical Experiments} \label{sec:experiments}

In this section, we evaluate the performance and efficiency of our proposed kernel-based multiple operator learning methods for both operator-valued learning and product-space learning, depicted in Figures \ref{fig:cd-operator} and \ref{fig:cd-product}, respectively. The code implementing both methods and reproducing all numerical experiments is publicly available at \url{https://github.com/liaochunyang/kernelMO}.

\paragraph{Learning task.}
We consider five representative parametric PDEs which were also studied in \cite{weihs2025MNO}. In all experiments, we learn the solution operator
\(
(\alpha,u_0)\mapsto u,
\)
mapping a parametric function/parameters \(\alpha\) and an initial condition \(u_0\) to the complete solution trajectory
\(
u:(0,2]\times[0,2]\rightarrow\mathbb{R},\) 
\(
(t,x_{\mathrm{spatial}})\mapsto u(t,x_{\mathrm{spatial}}).
\)
Thus, our models predict the full spatio-temporal evolution of the PDE solution, whereas the experiments in \cite{BATLLE2024112549} predict the solution only at a fixed time \(t=t_0\).

\paragraph{Sampling of initial conditions and parameter functions.}
We generate the initial conditions using the sinusoidal construction proposed in \cite{takamoto2022pdebench}:
\begin{equation}
\label{eq:IC}
u_0(x)=\sum_{i=1}^{2}A_i\sin(\pi n_i x+\phi_i),
\end{equation}
where the amplitudes \(A_i\) are sampled uniformly from \([0,1]\), the frequencies \(n_i\) are sampled uniformly from the integers \(\{1,\ldots,4\}\), and the phases \(\phi_i\) are sampled uniformly from \((0,2\pi)\). As in \cite{takamoto2022pdebench}, each sampled initial condition is post-processed by flipping its sign with probability \(0.5\). The construction of the parameter functions \(\alpha\) is PDE-specific and is described below.

\paragraph{Training datasets}
For each learning formulation, we train all models on the same underlying collection of parametric functions and initial conditions. The two learning formulations differ only in how this data is organized.
For the operator-valued formulation, each training sample consists of an entire operator represented through its evaluations on a common set of probe functions. The training dataset is therefore
\[
\mathcal D_{\mathrm{op}}
=
\left\{
\left(
\alpha_i,
\bigl\{
G[\alpha_i](u_j)
\bigr\}_{j=1}^{20}
\right)
\right\}_{i=1}^{320},
\]
where the initial conditions
\(
u_1,\ldots,u_{20}
\)
are fixed across all parameter instances.
In contrast, the product-space formulation treats each parameter--input pair as an individual training sample. The corresponding dataset is
\[
\mathcal D_{\mathrm{prod}}
=
\left\{
(\alpha_i,u_j,G[\alpha_i](u_j))
\;:\;
i=1,\ldots,200,\;
j=1,\ldots,50
\right\}.
\]

\begin{remark}[Fixed initial conditions in the operator-valued formulation] \label{rem:fixedIC}
In the operator-valued learning formulation in figure \ref{fig:cd-operator}, the initial conditions are kept fixed across all parameter instances. This reflects the fact that each training sample corresponds to an entire operator rather than a single operator evaluation. More precisely, \(E_O\) is the common encoder, mapping each operator to its evaluations on a fixed collection of input functions together with a fixed discretization of the output domain. This ensures that all operators are represented in the same measurement space \(Z_O\). Varying the initial conditions would represent different encoding maps and thus alter the operator-valued learning problem. In contrast, the product-space formulation naturally permits different initial conditions for each parameter instance, since each training sample corresponds to a single parameter--input pair rather than an entire operator.
\end{remark}

We additionally evaluate the generalization capabilities of each framework using several out-of-distribution (OOD) datasets. In the operator-valued formulation, the initial conditions are fixed and shared across all parameter instances, so we only consider OOD parameter functions. In the product-space formulation, where both the parameter functions and initial conditions are sampled, we consider OOD parameter functions, OOD initial conditions, and the joint OOD setting in which both are sampled from distributions not encountered during training. Details of the construction of all OOD datasets are provided in Appendix~\ref{Appendix:ood}. In Table \ref{tab:number_of_test}, we report the numbers of parametric functions, initial conditions, and test samples in in-distribution and out-of-distribution test sets under both settings.

\begin{table}[t]
\centering
\begin{tabular}{ccccc}
\hline
Test set & Setting & $\#$ parametric functions & $\#$ initial conditions & $N_{\rm test}$ \\
\hline
\multirow{2}{*}{In-distribution} 
& Operator-valued & 80 & 20 & $80\times20$ \\
& Product-space & 80 & 50 & $80\times50$ \\
\hline
\multirow{2}{*}{Out-of-distribution}
 & Operator-valued & 40 & 20 & $40\times20$ \\
& Product-space & 80 & 50 & $80\times50$ \\
\hline
\end{tabular}
\caption{Number of parametric functions, number of initial conditions, and number of test samples for in-distribution and out-of-distribution sets under both settings. In the operator-valued setting, the initial conditions are fixed across all settings. In the product-space setting, the initial conditions are distinct.}
\label{tab:number_of_test}
\end{table}

\paragraph{Methods and benchmarks.}
We evaluate the proposed kernel-based multiple operator learning framework against both classical operator learning methods and existing multiple operator learning architectures.

\begin{itemize}
    \item \textit{Proposed methods.}
    We consider the proposed kernel-based multiple operator learning framework, denoted by \KernelMO, with operator-valued and product-space variants \KernelMOOV{} and \KernelMOPS{}, respectively.

    \item \textit{Classical Operator Learning benchmarks.}
    These methods do not explicitly account for multiple operators. We consider:
    \begin{itemize}
        \item \KernelO{} \cite{BATLLE2024112549}, the single operator learning kernel method corresponding to the formulation in figure~\ref{fig:cd-classical-operator};
        \item DeepONet \cite{deepOnet}, the standard neural operator architecture corresponding to the same formulation.
    \end{itemize}
    Both methods treat the initial condition as the sole input and the PDE solution as the output, without incorporating the parameter function.

    \item \textit{Multiple Operator Learning benchmarks.} 
    We compare against several architectures specifically designed or adapted for multiple operator learning:
    \begin{itemize}
        \item DeepONet-C \cite{weihs2025MNO,weihs2026multipleneuraloperatorsachieve}, a variant of DeepONet in which the parameter function and initial condition are concatenated into a single input function;
        \item MIONet \cite{mionet}, which employs separate branch networks for each input function and combines their latent representations through a tensor-product operation;
        \item MNO \cite{weihs2025MNO}, which uses separate neural networks for the parametric function, input function, and spatial coordinates before combining their latent representations.
    \end{itemize}
\end{itemize}

All kernel-based models are trained by solving a minimum-norm interpolation optimization problem. 
All neural-network-based method are trained with mean squared loss. 

\begin{remark}[Well-posedness of the operator-valued learning problem]
The classical operator learning baselines (\KernelO{} and DeepONet) cannot be accurately trained in the operator-valued setting because they ignore the parameter function \(\alpha\). Their induced training set would be
\[
\{(u_j,G[\alpha_i](u_j))\}_{i=1,\ldots,N}^{j=1,\ldots,m},
\]
where the probe functions \(u_1,\ldots,u_m\) are fixed across all parameter instances (see Remark~\ref{rem:fixedIC}). Since there generally exist \(\alpha_1\neq\alpha_2\) such that
\(
G[\alpha_1](u_j)\neq G[\alpha_2](u_j),
\)
the same input \(u_j\) is paired with multiple outputs. Thus, the induced training set does not define a function \(U\to V\), and classical operator learning methods are not applicable. In contrast, the product-space formulation is trained on individual triples
\(
(\alpha,u,G[\alpha](u)),
\)
rather than collections of outputs evaluated on a fixed set of probe functions. Thus, every input is associated with a unique output, and the resulting supervised learning problem is well defined.
\end{remark}

For the proposed \KernelMO{} methods, the theoretical framework employs matrix-valued kernels. In practice, we use separable kernels of the form
\[
K(x,x') = k(x,x')\,\Id,
\]
where \(k\) is a scalar-valued kernel and \(\Id\) denotes the identity matrix of appropriate dimension. For \KernelMOOV{}, the inputs consist solely of the parametric functions, and we therefore choose
\[
k:W\times W\rightarrow\mathbb R.
\] For \KernelMOPS{}, the inputs are pairs \((\alpha,u)\in W\times U\) and we use the product kernel
\[
k\big((\alpha_1,u_1),(\alpha_2,u_2)\big)
=
k_W(\alpha_1,\alpha_2)
k_U(u_1,u_2)
\]
for kernels $k_W:W\times W \rightarrow \bbR$ and $k_U: U \times U \to \bbR$. The hyperparameters for \(k\), \(k_W\), and \(k_U\) are selected independently. For completeness, we also train \KernelMOPS on the Cartesian set of
$(\alpha,u)$ pairs underlying the operator-valued dataset. This
allows the two kernel formulations to be compared using exactly the same PDE evaluations.

We also consider two representations of the sampled data for the proposed kernel methods. In the first, each function is represented directly by its pointwise values at the observation locations, and kernel regression is performed on these high-dimensional observations. Since numerical simulations only provide sampled function values, this corresponds to the standard discrete kernel learning setting. Although this does not employ the encoder--decoder framework, it still constitutes a novel kernel-based multiple operator learning method under the proposed operator-valued and product-space formulations. We therefore include it as a high-dimensional baseline to isolate the effect of the encoder--decoder approach.
In the second, we employ principal component analysis (PCA) to obtain a low-dimensional latent representation of the sampled observations. Kernel regression is then performed in this latent space, and the learned representation is mapped back to the observation space through the inverse PCA transform. This corresponds to the full encoder--decoder framework analyzed in Section~\ref{sec:general}. Throughout the experiments, we refer to the corresponding operator-valued and product-space variants as \KernelMOOV (PCA) and \KernelMOPS (PCA), respectively.

\begin{remark}[Construction of the PCA representation]
The PCA bases are computed from the training data. In the product-space formulation, we construct a single PCA basis for the parametric functions, a single PCA basis for the initial conditions, and a single PCA basis for the solution functions. These bases are shared across all training samples. In the operator-valued formulation, a single PCA basis is likewise constructed for the parametric functions. The probe functions \(u_1,\ldots,u_m\) are fixed and are therefore not encoded. Instead, for each probe function \(u_j\), we compute a separate PCA basis from the collection of corresponding solution functions
\[
\{G[\alpha_i](u_j)\}_{i=1}^N.
\]
Thus, the outputs associated with each fixed probe function are encoded in their own latent space, which is shared across all parameter instances.
\end{remark}

\paragraph{Evaluation metric.} We evaluate all models on withheld test datasets consisting of in-distribution samples and out-of-distribution samples (see Table \ref{tab:number_of_test}). 
For each test case, the predicted and reference solution functions are evaluated 
on a $128\times64$ spacetime grid over $[0,2]\times[0,2]$. 
For each test sample, we compute the relative $\Lp{2}$ error
\[
e_i
=
\frac{\|u^{(i)}_{\rm pred}-u^{(i)}_{\rm target}\|_2}
{\|u^{(i)}_{\rm target}\|_2}
\]
where $(u^{(i)}_{\rm pred}, u^{(i)}_{\rm target})$ corresponds to the $i$-th pair of predicted and reference solution functions, each evaluated at all points of discretized spacetime grid. 
We report the mean relative $L^2$ error
\[
\frac{1}{N_{\rm test}}
\sum_{i=1}^{N_{\rm test}} e_i,
\]
together with the corresponding standard deviation computed over the test samples.

\paragraph{Hyperparameter optimization.} 
For the kernel-based methods, we consider both radial basis function (RBF, denoted by R) and Matérn (denoted by M) kernels and perform a simple hyperparameter search. We denote the corresponding methods by appending the latent representation and kernel choice. For example, \KernelMOOV{} (PCA) / R and \KernelMOOV{} (PCA) / M denote the operator-valued method with a PCA representation and an RBF or Matérn kernel, respectively. For the product-space formulation, we specify the kernel family used on each input component separately. Thus, \KernelMOPS{} (PCA) / M \(\times\) M denotes Matérn kernels on both the parameter and initial-condition spaces, while \KernelMOPS{} (PCA) / R \(\times\) R denotes an RBF kernel on the parameter space and a RBF kernel on the initial-condition space. The same notation applies to the \KernelO method as well.

For the RBF kernel, we search over the length-scale parameter $\gamma=\{0.01,0.1,1,10,100\}$. For the Mat{\'e}rn kernel, we search over the length-scale parameter $\gamma=\{0.01,0.1,1,10,100\}$ and the smoothness parameter $\nu=\{1/2,3/2,5/2,7/2\}$. Exact hyperparameter choices are detailed in Appendix \ref{sec:hyperparameters}. 
For neural-networks-based methods, we consider three different network sizes, referred to as small, medium, and large. In Table~\ref{tab:NN_size}, we summarize the architectures and the corresponding numbers of trainable parameters.

\begin{table}[!ht]
\centering
\begin{tabular}{c|c|c|c}
 & small & medium & large  \\\hline
DeepONet & 0.29M & 2.15M & 9.48M \\ 
DeepONet-C & 0.31M & 2.17M & 9.53M \\
MIONet & 0.32M & 2.22M & 9.73M \\
MNO & 0.60M & 1.22M & 16.70M \\\hline
\end{tabular}
\caption{Summary of number of trainable parameters for each architecture and network size. "M" denotes million.}
\label{tab:NN_size}
\end{table}

\subsection{Performance} \label{sec:performance}

We begin this section with a summary of the main experimental findings. Figures~\ref{fig:summary_heatmap_ov} and~\ref{fig:summary_heatmap_ps} provide an overview of the relative performance of all methods across the five PDE benchmarks for the operator-valued and product-space formulations, respectively. In addition, Figures~\ref{fig:qualitative_grid_framework1_operator_test} and~\ref{fig:qualitative_grid_framework2_product_test} show representative sample trajectories from the test datasets for both formulations on all PDE benchmarks. Detailed quantitative results for each PDE benchmark are presented in the subsequent subsections, while representative results on some of the out-of-distribution datasets are shown in Figures~\ref{fig:qualitative_framework1_operator_conservation_ood_ood_h5}-\ref{fig:qualitative_framework2_product_param_wave_ood_ood_par_var_h5}.

%

\definecolor{hmone}{RGB}{255,248,220}    
\definecolor{hmtwo}{RGB}{255,236,170}
\definecolor{hmthree}{RGB}{255,220,100}
\definecolor{hmfour}{RGB}{255,209,0}     
\definecolor{hmfive}{RGB}{185,218,238}
\definecolor{hmsix}{RGB}{120,180,220}
\definecolor{hmseven}{RGB}{33,116,174}   
\definecolor{hmeight}{RGB}{0,85,135}     


\begin{figure}
\centering
\resizebox{\textwidth}{!}{%
\begin{tikzpicture}[x=1.42cm,y=0.58cm,font=\sffamily]
\node[rotate=35,anchor=west] at (0.5,1.05) {Conservation Law};
\node[rotate=35,anchor=west] at (1.5,1.05) {Diffusion-Reaction-Advection};
\node[rotate=35,anchor=west] at (2.5,1.05) {Nonlinear Klein--Gordon};
\node[rotate=35,anchor=west] at (3.5,1.05) {Parametric Diffusion-Reaction};
\node[rotate=35,anchor=west] at (4.5,1.05) {Parametric Wave};

\node[anchor=east,font=\small] at (-0.12,0) {\textbf{KernelMO-OV / M}};
\node[draw=white,line width=0.35pt,fill=hmone,minimum width=1.40cm,minimum height=0.56cm,text=black,font=\small] at (0.5,0) {1.00};
\node[draw=white,line width=0.35pt,fill=hmthree,minimum width=1.40cm,minimum height=0.56cm,text=black,font=\small] at (1.5,0) {1.52};
\node[draw=white,line width=0.35pt,fill=hmtwo,minimum width=1.40cm,minimum height=0.56cm,text=black,font=\small] at (2.5,0) {1.32};
\node[draw=white,line width=0.35pt,fill=hmone,minimum width=1.40cm,minimum height=0.56cm,text=black,font=\small] at (3.5,0) {1.00};
\node[draw=white,line width=0.35pt,fill=hmthree,minimum width=1.40cm,minimum height=0.56cm,text=black,font=\small] at (4.5,0) {1.71};

\node[anchor=east,font=\small] at (-0.12,-1) {\textbf{KernelMO-OV / R}};
\node[draw=white,line width=0.35pt,fill=hmfive,minimum width=1.40cm,minimum height=0.56cm,text=black,font=\small] at (0.5,-1) {4.55};
\node[draw=white,line width=0.35pt,fill=hmone,minimum width=1.40cm,minimum height=0.56cm,text=black,font=\small] at (1.5,-1) {1.03};
\node[draw=white,line width=0.35pt,fill=hmfour,minimum width=1.40cm,minimum height=0.56cm,text=black,font=\small] at (2.5,-1) {2.24};
\node[draw=white,line width=0.35pt,fill=hmseven,minimum width=1.40cm,minimum height=0.56cm,text=white,font=\small] at (3.5,-1) {10.3};
\node[draw=white,line width=0.35pt,fill=hmsix,minimum width=1.40cm,minimum height=0.56cm,text=black,font=\small] at (4.5,-1) {5.18};

\node[anchor=east,font=\small] at (-0.12,-2) {\textbf{KernelMO-OV (PCA) / M}};
\node[draw=white,line width=0.35pt,fill=hmeight,minimum width=1.40cm,minimum height=0.56cm,text=white,font=\small] at (0.5,-2) {90.0};
\node[draw=white,line width=0.35pt,fill=hmthree,minimum width=1.40cm,minimum height=0.56cm,text=black,font=\small] at (1.5,-2) {1.89};
\node[draw=white,line width=0.35pt,fill=hmthree,minimum width=1.40cm,minimum height=0.56cm,text=black,font=\small] at (2.5,-2) {1.98};
\node[draw=white,line width=0.35pt,fill=hmseven,minimum width=1.40cm,minimum height=0.56cm,text=white,font=\small] at (3.5,-2) {19.8};
\node[draw=white,line width=0.35pt,fill=hmtwo,minimum width=1.40cm,minimum height=0.56cm,text=black,font=\small] at (4.5,-2) {1.31};

\node[anchor=east,font=\small] at (-0.12,-3) {\textbf{KernelMO-OV (PCA) / R}};
\node[draw=white,line width=0.35pt,fill=hmeight,minimum width=1.40cm,minimum height=0.56cm,text=white,font=\small] at (0.5,-3) {90.3};
\node[draw=white,line width=0.35pt,fill=hmtwo,minimum width=1.40cm,minimum height=0.56cm,text=black,font=\small] at (1.5,-3) {1.47};
\node[draw=white,line width=0.35pt,fill=hmfour,minimum width=1.40cm,minimum height=0.56cm,text=black,font=\small] at (2.5,-3) {2.88};
\node[draw=white,line width=0.35pt,fill=hmseven,minimum width=1.40cm,minimum height=0.56cm,text=white,font=\small] at (3.5,-3) {20.5};
\node[draw=white,line width=0.35pt,fill=hmone,minimum width=1.40cm,minimum height=0.56cm,text=black,font=\small] at (4.5,-3) {1.02};

\node[anchor=east,font=\small] at (-0.12,-4) {\textbf{KernelMO-PS / M$\times$M}};
\node[draw=white,line width=0.35pt,fill=hmone,minimum width=1.40cm,minimum height=0.56cm,text=black,font=\small] at (0.5,-4) {1.00};
\node[draw=white,line width=0.35pt,fill=hmthree,minimum width=1.40cm,minimum height=0.56cm,text=black,font=\small] at (1.5,-4) {1.79};
\node[draw=white,line width=0.35pt,fill=hmtwo,minimum width=1.40cm,minimum height=0.56cm,text=black,font=\small] at (2.5,-4) {1.32};
\node[draw=white,line width=0.35pt,fill=hmfour,minimum width=1.40cm,minimum height=0.56cm,text=black,font=\small] at (3.5,-4) {2.17};
\node[draw=white,line width=0.35pt,fill=hmthree,minimum width=1.40cm,minimum height=0.56cm,text=black,font=\small] at (4.5,-4) {1.71};

\node[anchor=east,font=\small] at (-0.12,-5) {\textbf{KernelMO-PS / R$\times$R}};
\node[draw=white,line width=0.35pt,fill=hmeight,minimum width=1.40cm,minimum height=0.56cm,text=white,font=\small] at (0.5,-5) {76.0};
\node[draw=white,line width=0.35pt,fill=hmsix,minimum width=1.40cm,minimum height=0.56cm,text=black,font=\small] at (1.5,-5) {5.25};
\node[draw=white,line width=0.35pt,fill=hmseven,minimum width=1.40cm,minimum height=0.56cm,text=white,font=\small] at (2.5,-5) {27.7};
\node[draw=white,line width=0.35pt,fill=hmseven,minimum width=1.40cm,minimum height=0.56cm,text=white,font=\small] at (3.5,-5) {16.7};
\node[draw=white,line width=0.35pt,fill=hmthree,minimum width=1.40cm,minimum height=0.56cm,text=black,font=\small] at (4.5,-5) {1.72};

\node[anchor=east,font=\small] at (-0.12,-6) {\textbf{KernelMO-PS (PCA) / M$\times$M}};
\node[draw=white,line width=0.35pt,fill=hmeight,minimum width=1.40cm,minimum height=0.56cm,text=white,font=\small] at (0.5,-6) {90.0};
\node[draw=white,line width=0.35pt,fill=hmfour,minimum width=1.40cm,minimum height=0.56cm,text=black,font=\small] at (1.5,-6) {2.11};
\node[draw=white,line width=0.35pt,fill=hmthree,minimum width=1.40cm,minimum height=0.56cm,text=black,font=\small] at (2.5,-6) {1.98};
\node[draw=white,line width=0.35pt,fill=hmseven,minimum width=1.40cm,minimum height=0.56cm,text=white,font=\small] at (3.5,-6) {19.8};
\node[draw=white,line width=0.35pt,fill=hmtwo,minimum width=1.40cm,minimum height=0.56cm,text=black,font=\small] at (4.5,-6) {1.29};

\node[anchor=east,font=\small] at (-0.12,-7) {\textbf{KernelMO-PS (PCA) / R$\times$R}};
\node[draw=white,line width=0.35pt,fill=hmeight,minimum width=1.40cm,minimum height=0.56cm,text=white,font=\small] at (0.5,-7) {120};
\node[draw=white,line width=0.35pt,fill=hmsix,minimum width=1.40cm,minimum height=0.56cm,text=black,font=\small] at (1.5,-7) {5.33};
\node[draw=white,line width=0.35pt,fill=hmseven,minimum width=1.40cm,minimum height=0.56cm,text=white,font=\small] at (2.5,-7) {27.7};
\node[draw=white,line width=0.35pt,fill=hmseven,minimum width=1.40cm,minimum height=0.56cm,text=white,font=\small] at (3.5,-7) {23.3};
\node[draw=white,line width=0.35pt,fill=hmone,minimum width=1.40cm,minimum height=0.56cm,text=black,font=\small] at (4.5,-7) {1.01};

\node[anchor=east,font=\small] at (-0.12,-8) {DeepONet-C};
\node[draw=white,line width=0.35pt,fill=hmeight,minimum width=1.40cm,minimum height=0.56cm,text=white,font=\small] at (0.5,-8) {202};
\node[draw=white,line width=0.35pt,fill=hmfive,minimum width=1.40cm,minimum height=0.56cm,text=black,font=\small] at (1.5,-8) {4.74};
\node[draw=white,line width=0.35pt,fill=hmeight,minimum width=1.40cm,minimum height=0.56cm,text=white,font=\small] at (2.5,-8) {31.7};
\node[draw=white,line width=0.35pt,fill=hmseven,minimum width=1.40cm,minimum height=0.56cm,text=white,font=\small] at (3.5,-8) {14.7};
\node[draw=white,line width=0.35pt,fill=hmsix,minimum width=1.40cm,minimum height=0.56cm,text=black,font=\small] at (4.5,-8) {6.66};

\node[anchor=east,font=\small] at (-0.12,-9) {MIONet};
\node[draw=white,line width=0.35pt,fill=hmeight,minimum width=1.40cm,minimum height=0.56cm,text=white,font=\small] at (0.5,-9) {65.3};
\node[draw=white,line width=0.35pt,fill=hmsix,minimum width=1.40cm,minimum height=0.56cm,text=black,font=\small] at (1.5,-9) {9.27};
\node[draw=white,line width=0.35pt,fill=hmseven,minimum width=1.40cm,minimum height=0.56cm,text=white,font=\small] at (2.5,-9) {23.5};
\node[draw=white,line width=0.35pt,fill=hmseven,minimum width=1.40cm,minimum height=0.56cm,text=white,font=\small] at (3.5,-9) {12.7};
\node[draw=white,line width=0.35pt,fill=hmsix,minimum width=1.40cm,minimum height=0.56cm,text=black,font=\small] at (4.5,-9) {7.49};

\node[anchor=east,font=\small] at (-0.12,-10) {MNO};
\node[draw=white,line width=0.35pt,fill=hmeight,minimum width=1.40cm,minimum height=0.56cm,text=white,font=\small] at (0.5,-10) {94.6};
\node[draw=white,line width=0.35pt,fill=hmfour,minimum width=1.40cm,minimum height=0.56cm,text=black,font=\small] at (1.5,-10) {2.52};
\node[draw=white,line width=0.35pt,fill=hmseven,minimum width=1.40cm,minimum height=0.56cm,text=white,font=\small] at (2.5,-10) {13.3};
\node[draw=white,line width=0.35pt,fill=hmsix,minimum width=1.40cm,minimum height=0.56cm,text=black,font=\small] at (3.5,-10) {9.19};
\node[draw=white,line width=0.35pt,fill=hmsix,minimum width=1.40cm,minimum height=0.56cm,text=black,font=\small] at (4.5,-10) {7.27};
\end{tikzpicture}%
}
\caption{Summary of normalized performance in the operator-valued-learning
experiments. For each method \(m\), PDE \(p\), and associated test dataset
\(d\), we compute the normalized error
\(
\mathrm{Error}_{m,p,d}/\min_{m'}\mathrm{Error}_{m',p,d},
\)
where \(\mathrm{Error}_{m,p,d}\) denotes the mean relative error. Consequently,
the best-performing method on each test dataset attains the value \(1\). For
each PDE, the displayed value is obtained by averaging the normalized errors
over all associated test datasets (one in-distribution and all
out-of-distribution datasets). Lighter colors indicate better performance.}
\label{fig:summary_heatmap_ov}
\end{figure}

\begin{figure}
\centering
\resizebox{\textwidth}{!}{%
\begin{tikzpicture}[x=1.42cm,y=0.58cm,font=\sffamily]
\node[rotate=35,anchor=west] at (0.5,1.05) {Conservation Law};
\node[rotate=35,anchor=west] at (1.5,1.05) {Diffusion-Reaction-Advection};
\node[rotate=35,anchor=west] at (2.5,1.05) {Nonlinear Klein--Gordon};
\node[rotate=35,anchor=west] at (3.5,1.05) {Parametric Diffusion-Reaction};
\node[rotate=35,anchor=west] at (4.5,1.05) {Parametric Wave};

\node[anchor=east,font=\small] at (-0.12,0) {KernelO / M};
\foreach \x/\value/\shade/\ink in {
  0.5/3.97/hmfive/black,
  1.5/5.89/hmsix/black,
  2.5/28.0/hmseven/white,
  3.5/4.90/hmfive/black,
  4.5/15.3/hmseven/white}
  \node[draw=white,line width=0.35pt,fill=\shade,minimum width=1.40cm,
        minimum height=0.56cm,text=\ink,font=\small] at (\x,0) {\value};

\node[anchor=east,font=\small] at (-0.12,-1) {KernelO / R};
\foreach \x/\value/\shade/\ink in {
  0.5/2.25/hmfour/black,
  1.5/139/hmeight/white,
  2.5/141/hmeight/white,
  3.5/3.96/hmfive/black,
  4.5/125/hmeight/white}
  \node[draw=white,line width=0.35pt,fill=\shade,minimum width=1.40cm,
        minimum height=0.56cm,text=\ink,font=\small] at (\x,-1) {\value};

\node[anchor=east,font=\small] at (-0.12,-2) {KernelO (PCA) / M};
\foreach \x/\value/\shade/\ink in {
  0.5/2.20/hmfour/black,
  1.5/4.39/hmfive/black,
  2.5/27.8/hmseven/white,
  3.5/4.91/hmfive/black,
  4.5/15.7/hmseven/white}
  \node[draw=white,line width=0.35pt,fill=\shade,minimum width=1.40cm,
        minimum height=0.56cm,text=\ink,font=\small] at (\x,-2) {\value};

\node[anchor=east,font=\small] at (-0.12,-3) {KernelO (PCA) / R};
\foreach \x/\value/\shade/\ink in {
  0.5/1.88/hmthree/black,
  1.5/26.7/hmseven/white,
  2.5/56.5/hmeight/white,
  3.5/3.81/hmfive/black,
  4.5/22.6/hmseven/white}
  \node[draw=white,line width=0.35pt,fill=\shade,minimum width=1.40cm,
        minimum height=0.56cm,text=\ink,font=\small] at (\x,-3) {\value};

\node[anchor=east,font=\small] at (-0.12,-4) {\textbf{KernelMO-PS / M$\times$M}};
\foreach \x/\value/\shade/\ink in {
  0.5/1.23/hmtwo/black,
  1.5/1.04/hmone/black,
  2.5/1.10/hmtwo/black,
  3.5/1.45/hmtwo/black,
  4.5/1.67/hmthree/black}
  \node[draw=white,line width=0.35pt,fill=\shade,minimum width=1.40cm,
        minimum height=0.56cm,text=\ink,font=\small] at (\x,-4) {\value};

\node[anchor=east,font=\small] at (-0.12,-5) {\textbf{KernelMO-PS / R$\times$R}};
\foreach \x/\value/\shade/\ink in {
  0.5/1.94/hmthree/black,
  1.5/1.27/hmtwo/black,
  2.5/3.04/hmfive/black,
  3.5/14.4/hmseven/white,
  4.5/1.79/hmthree/black}
  \node[draw=white,line width=0.35pt,fill=\shade,minimum width=1.40cm,
        minimum height=0.56cm,text=\ink,font=\small] at (\x,-5) {\value};

\node[anchor=east,font=\small] at (-0.12,-6)
  {\textbf{KernelMO-PS (PCA) / M$\times$M}};
\foreach \x/\value/\shade/\ink in {
  0.5/1.32/hmtwo/black,
  1.5/1.03/hmone/black,
  2.5/1.20/hmtwo/black,
  3.5/14.8/hmseven/white,
  4.5/1.13/hmtwo/black}
  \node[draw=white,line width=0.35pt,fill=\shade,minimum width=1.40cm,
        minimum height=0.56cm,text=\ink,font=\small] at (\x,-6) {\value};

\node[anchor=east,font=\small] at (-0.12,-7)
  {\textbf{KernelMO-PS (PCA) / R$\times$R}};
\foreach \x/\value/\shade/\ink in {
  0.5/1.82/hmthree/black,
  1.5/1.26/hmtwo/black,
  2.5/2.02/hmfour/black,
  3.5/21.8/hmseven/white,
  4.5/1.00/hmone/black}
  \node[draw=white,line width=0.35pt,fill=\shade,minimum width=1.40cm,
        minimum height=0.56cm,text=\ink,font=\small] at (\x,-7) {\value};

\node[anchor=east,font=\small] at (-0.12,-8) {DeepONet};
\foreach \x/\value/\shade/\ink in {
  0.5/2.10/hmfour/black,
  1.5/3.61/hmfive/black,
  2.5/20.6/hmseven/white,
  3.5/4.25/hmfive/black,
  4.5/10.7/hmseven/white}
  \node[draw=white,line width=0.35pt,fill=\shade,minimum width=1.40cm,
        minimum height=0.56cm,text=\ink,font=\small] at (\x,-8) {\value};

\node[anchor=east,font=\small] at (-0.12,-9) {DeepONet-C};
\foreach \x/\value/\shade/\ink in {
  0.5/1.36/hmtwo/black,
  1.5/1.24/hmtwo/black,
  2.5/8.13/hmsix/black,
  3.5/1.83/hmthree/black,
  4.5/4.08/hmfive/black}
  \node[draw=white,line width=0.35pt,fill=\shade,minimum width=1.40cm,
        minimum height=0.56cm,text=\ink,font=\small] at (\x,-9) {\value};

\node[anchor=east,font=\small] at (-0.12,-10) {MIONet};
\foreach \x/\value/\shade/\ink in {
  0.5/2.30/hmfour/black,
  1.5/3.34/hmfive/black,
  2.5/10.4/hmseven/white,
  3.5/3.11/hmfive/black,
  4.5/5.27/hmsix/black}
  \node[draw=white,line width=0.35pt,fill=\shade,minimum width=1.40cm,
        minimum height=0.56cm,text=\ink,font=\small] at (\x,-10) {\value};

\node[anchor=east,font=\small] at (-0.12,-11) {MNO};
\foreach \x/\value/\shade/\ink in {
  0.5/1.23/hmtwo/black,
  1.5/1.14/hmtwo/black,
  2.5/3.35/hmfive/black,
  3.5/1.46/hmtwo/black,
  4.5/5.11/hmsix/black}
  \node[draw=white,line width=0.35pt,fill=\shade,minimum width=1.40cm,
        minimum height=0.56cm,text=\ink,font=\small] at (\x,-11) {\value};
\end{tikzpicture}%
}
\caption{Summary of normalized performance in the product-space-learning
experiments. For each method \(m\), PDE \(p\), and associated test dataset
\(d\), we compute the normalized error
\(
\mathrm{Error}_{m,p,d}/\min_{m'}\mathrm{Error}_{m',p,d},
\)
where \(\mathrm{Error}_{m,p,d}\) denotes the mean relative error. Consequently,
the best-performing method on each test dataset attains the value \(1\). For
each PDE, the displayed value is obtained by averaging the normalized errors
over all six associated test datasets (one in-distribution and five
out-of-distribution datasets). Lighter colors indicate better performance.}
\label{fig:summary_heatmap_ps}
\end{figure}

\begin{figure}
    \centering
    \includegraphics[width=\textwidth]{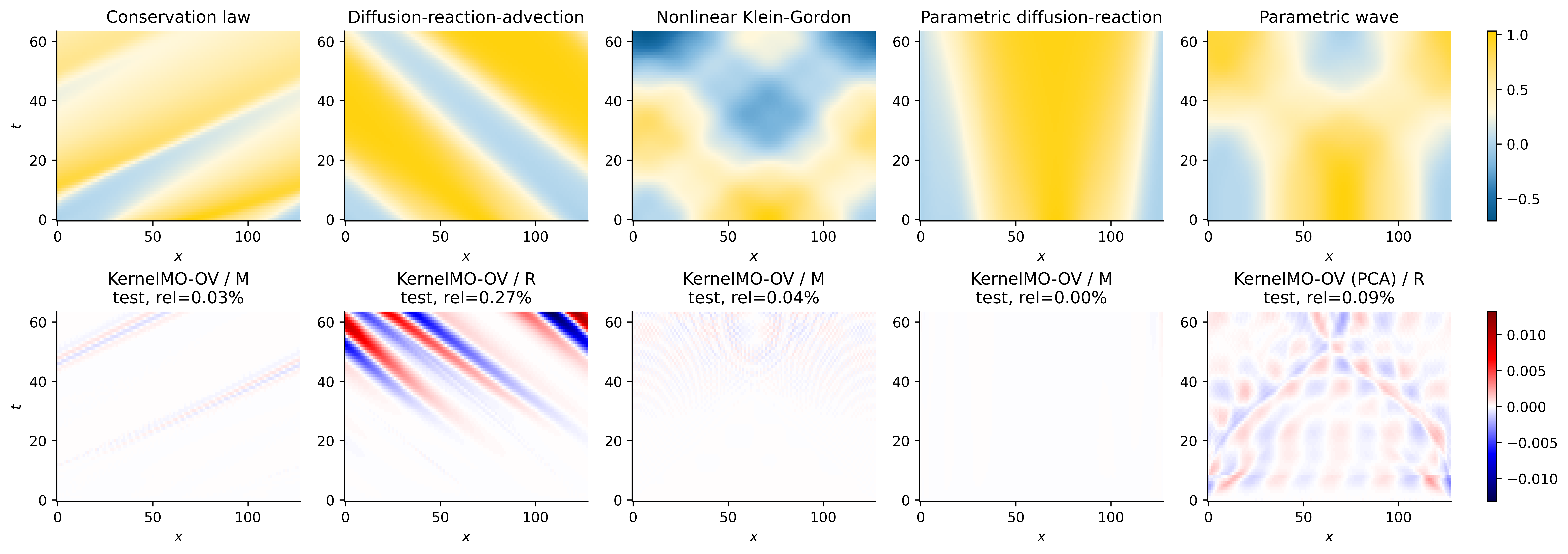}
    \caption{Qualitative summary across PDEs for operator-valued learning. Columns correspond to conservation law, diffusion-reaction-advection, nonlinear Klein-Gordon, parametric diffusion-reaction, parametric wave. The top row shows the reference solution for one selected trajectory from the test split; the bottom row shows the signed prediction error for the kernel method with the smallest mean relative error.}
    \label{fig:qualitative_grid_framework1_operator_test}
\end{figure}

\begin{figure}
    \centering
    \includegraphics[width=\textwidth]{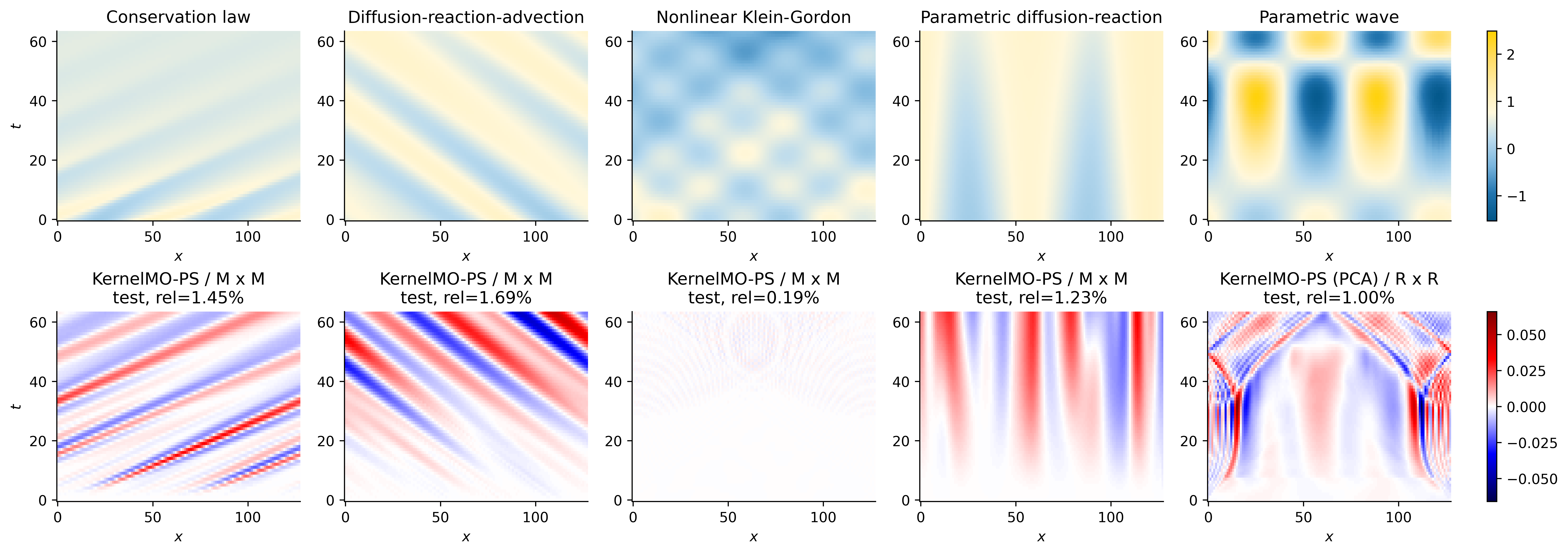}
    \caption{Qualitative summary across PDEs for product-space learning. Columns correspond to conservation law, diffusion-reaction-advection, nonlinear Klein-Gordon, parametric diffusion-reaction, parametric wave. The top row shows the reference solution for one selected trajectory from the test split; the bottom row shows the signed prediction error for the kernel method with the smallest mean relative error.}
    \label{fig:qualitative_grid_framework2_product_test}
\end{figure}

Across the five PDE benchmarks, the proposed kernel-based methods are consistently competitive with, and in most cases substantially outperform, the neural operator baselines, as illustrated by the predominance of light-colored cells for the proposed methods in Figures~\ref{fig:summary_heatmap_ov} and~\ref{fig:summary_heatmap_ps}. Their largest gains are generally observed on the in-distribution test sets, where the prediction error is often reduced by one or two orders of magnitude. For example, in operator-valued learning, on the conservation law the best kernel methods reduce the error from $1.23\%$ for MIONet to $0.01\%$, while on the nonlinear Klein--Gordon equation the error decreases from $4.63\%$ for MNO to $0.21\%$. The proposed methods also frequently attain the lowest out-of-distribution errors, although the relative improvement depends more strongly on the governing PDE and the type of distribution shift.

The operator-valued and product-space formulations exhibit complementary strengths. The \KernelMOOV methods are particularly well-suited to operator-valued learning, generally outperforming the \KernelMOPS variants and obtaining the lowest in-distribution errors on the conservation law ($0.01\%$), diffusion--reaction--advection equation ($0.38\%$), nonlinear Klein--Gordon equation ($0.21\%$), and parametric diffusion--reaction equation ($0.06\%$). Its PCA-based variants also frequently provide the best out-of-distribution performance. For product-space learning, \KernelMOPS consistently outperforms the corresponding single-kernel baseline KernelO (e.g. $2.13\%$ versus $59.28\%$ for the parametric wave equation) and remains competitive with, or superior to, the neural operator baselines across most benchmarks.

The comparison between the full and PCA-based variants reveals no universal ordering. PCA substantially reduces the dimensionality of the encoded inputs and outputs while often preserving most of the predictive accuracy and, in several experiments, improving out-of-distribution generalization. For example, on the nonlinear Klein--Gordon equation in operator-valued learning, the OOD error decreases from $6.72\%$ for \KernelMOOV\ / M to $4.11\%$ for \KernelMOOV\ (PCA) / M. Likewise, for the parametric wave equation in product-space learning, \KernelMOPS\ (PCA) / R $\times$ R substantially improves the most challenging distribution shifts compared with its non-PCA counterpart ($46.20\%$ versus $99.55\%$). On the other hand, PCA introduces a moderate loss of in-distribution accuracy on some benchmarks; for instance, in operator-valued learning, on the conservation law, the error increases from $0.01\%$ to $1.77\%$, while on the parametric diffusion--reaction equation it increases from $0.06\%$ to $3.04\%$. Thus, PCA should primarily be viewed as a computationally efficient representation that can additionally provide a regularization effect, rather than as a uniformly accuracy-improving transformation.

The choice of kernel is similarly problem dependent. Mat{\'e}rn kernels generally provide the strongest and most stable performance, particularly for the conservation law, nonlinear Klein--Gordon equation, and parametric diffusion--reaction equation. For example, in operator-valued learning, on the conservation law, \KernelMOOV\ / M achieves an OOD error of $0.77\%$, compared with $5.47\%$ for the RBF kernel. Likewise, on the parametric diffusion--reaction equation, \KernelMOOV\ / M attains OOD errors of $3.53\%$, $4.05\%$, and $0.12\%$ on the three distribution shifts, whereas \KernelMOOV\ / R reaches $56.22\%$, $83.90\%$, and $0.33\%$. RBF kernels can nevertheless attain extremely small in-distribution errors. The clearest example is the product-space parametric diffusion--reaction experiment, where \KernelMOPS\ / R $\times$ R achieves the lowest in-distribution error ($0.75\%$). 

Finally, the experiments reveal no universal out-of-distribution trend across all PDEs. Generalization depends strongly on both the governing equation and the nature of the distribution shift. Overall, OOD prediction is consistently more challenging than in-distribution prediction for all methods, highlighting the intrinsic difficulty of extrapolating beyond the training distribution. Changes in initial-condition amplitude are, for example, particularly challenging for the nonlinear Klein--Gordon equation in product-space learning, where all methods incur large OOD errors, although \KernelMOPS\ still achieves the lowest overall error. Across operator-valued and product-space learning, in the parametric wave equation, changing the Gaussian-process kernel (OOD\_matern) or its length scale (OOD\_scale) is also challenging. Across several benchmarks, we further note that PCA-based variants exhibit improved robustness under distribution shift, suggesting that the reduced latent representation can provide a useful regularization effect (e.g. from 3.37\% to 2.47\% for \KernelMOPS / M$\times$M in product-space learning for the nonlinear Klein Gordon equation).

\subsubsection{Conservation Laws}

We consider the following one-dimensional conservation law with periodic boundary conditions:
\begin{equation*}
\begin{aligned}
u_t + (\alpha_1u+\alpha_2u^2+\alpha_3u^3)_x & = \alpha_4u_{xx}, \quad (t,x)\in[0,2]\times[0,2] \\
u(0,x) &= u_0(x), \\
u(t,0) &= u(t,2),
\end{aligned}
\end{equation*}
where the components of parameter vector $\alpha = [\alpha_1,\alpha_2,\alpha_3,\alpha_4]^\top$ are sampled from the ranges $\alpha_i\in[0.9\alpha_i^c, \allowbreak 1.1\alpha_i^c]$, with the reference values given by $\alpha^c = [1,1,1,0.1]^\top$.

\paragraph{Operator-Valued Learning.}

The mean relative errors and standard deviations are summarized in Table~\ref{tab:framework1_cons}, while the implementation details and hyperparameter choices are summarized in Table~\ref{tab:conservationOV}. Overall, the proposed kernel-based methods achieve the lowest prediction errors on both the in-distribution and out-of-distribution datasets and the best performance is obtained with the Mat{\'e}rn kernel. Compared with the best-performing neural operator baselines, the proposed methods can improve the prediction accuracy by approximately two orders of magnitude on the in-distribution test set (from $1.23\%$ for MIONet to $0.01\%$ for \KernelMOOV / M and \KernelMOPS / M $\times$ M) and by a factor of approximately five on the out-of-distribution dataset (from $3.97\%$ for MNO to $0.77\%$ for \KernelMOOV / M and \KernelMOPS / M $\times$ M). Applying PCA substantially reduces the dimensionality of the learning problem while incurring a loss of accuracy, however, the PCA-based variants remain highly competitive. For example, \KernelMOOV (PCA) / M and \KernelMOPS (PCA) / M $\times$ M achieve prediction errors of $1.77\%$ on the in-distribution test set and $2.32\%$ on the out-of-distribution dataset, compared with $1.23\%$ for MIONet and $3.97\%$ for MNO, respectively.

\begin{table}[H]
\centering
\small
\begin{tabular}{lcc}
\hline
Method / kernel & Test & OOD \\
\hline
\textbf{\KernelMOOV / M} & {\bf 0.01\% (0.02\%)} & {\bf 0.77\% (1.54\%)} \\
\textbf{\KernelMOOV / R} & 0.02\% (0.02\%) & 5.47\% (6.40\%) \\
\textbf{\KernelMOOV (PCA)} / M & 1.77\% (0.43\%) & 2.32\% (1.21\%) \\
\textbf{\KernelMOOV (PCA)} / R & 1.77\% (0.43\%) & 2.82\% (1.65\%) \\
\textbf{\KernelMOPS / M x M} & {\bf 0.01\% (0.02\%)} & {\bf 0.77\% (1.54\%)} \\
\textbf{\KernelMOPS / R x R} & 1.46\% (1.09\%) & 4.63\% (5.02\%) \\
\textbf{\KernelMOPS (PCA) / M x M} & 1.77\% (0.43\%) & 2.32\% (1.21\%) \\
\textbf{\KernelMOPS (PCA) / R x R} & 2.34\% (0.90\%) & 5.15\% (4.74\%) \\\hline
DeepONet-C & 3.96\% (0.74\%) & 5.56\% (2.75\%) \\
MIONet & 1.23\% (0.28\%) & 5.86\% (7.64\%) \\
MNO & 1.84\% (0.34\%) & 3.97\% (4.09\%) \\\hline
\end{tabular}
\caption{Operator-valued learning: performance comparison on the conservation laws. We report mean relative errors with standard deviations in parentheses on in-distribution and out-of-distribution datasets. 
}
\label{tab:framework1_cons}
\end{table}

\begin{figure}[H]
    \centering
    \includegraphics[width=\textwidth]{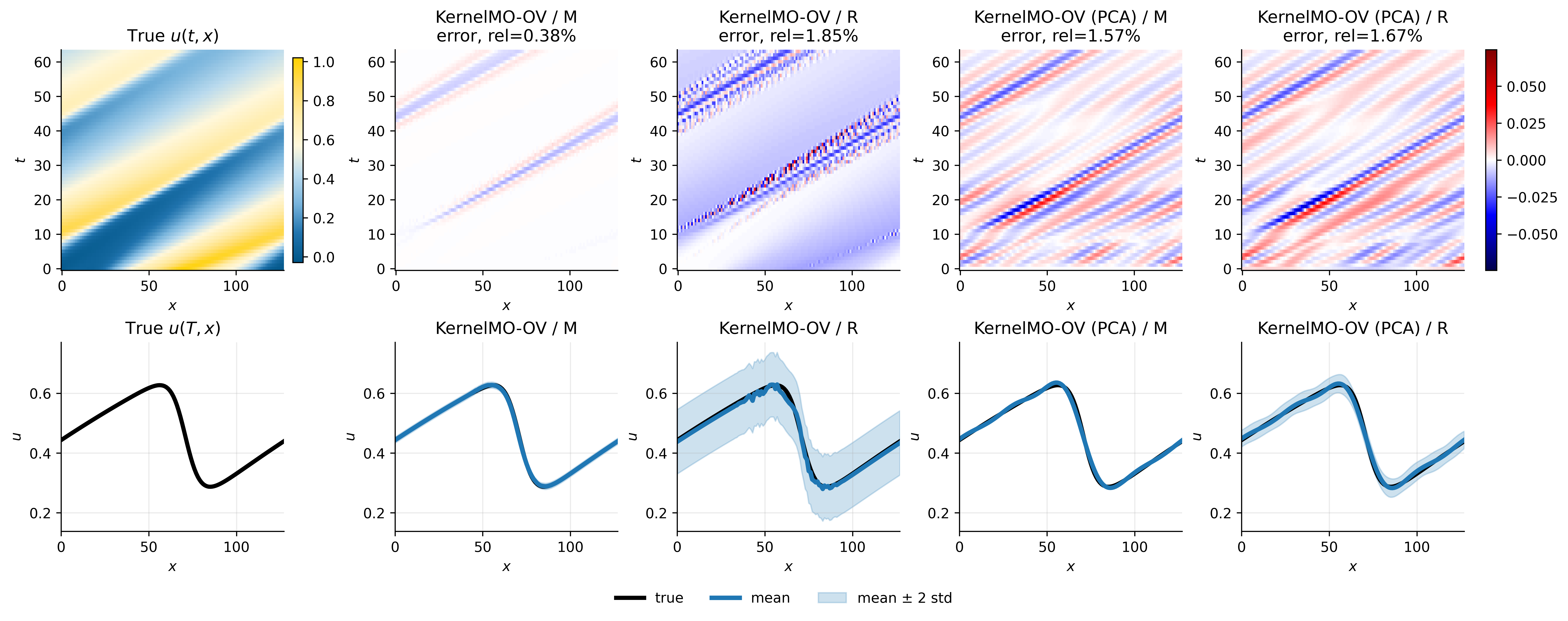}
    \caption{Operator-valued learning: qualitative prediction and uncertainty comparison for the conservation law on the OOD dataset. The first row shows a reference solution and signed prediction errors for four kernel variants; the relative error for the displayed trajectory is reported in each error-figure title. The second row shows the final-time solution, predictive mean, and an uncertainty band of $\pm 2$ predictive standard deviations.}
    \label{fig:qualitative_framework1_operator_conservation_ood_ood_h5}
\end{figure}

\paragraph{Product-Space Learning.}

The mean relative errors and standard deviations are summarized in Table~\ref{tab:framework2_cons}, while the implementation details and hyperparameter choices are summarized in Table~\ref{tab:conservationPS}. Overall, the proposed kernel-based methods achieve the lowest prediction errors on the in-distribution test set, with the best performance obtained by \KernelMOPS\ / M $\times$ M using the Mat{\'e}rn kernel. Compared with the best-performing neural operator baseline, the proposed method improves the prediction accuracy by approximately a factor of two on the in-distribution test set (from $3.29\%$ for MNO to $1.76\%$ for \KernelMOPS\ / M $\times$ M). Across the out-of-distribution datasets, different methods achieve the best performance under different distribution shifts. Nevertheless, the proposed product-kernel methods consistently rank among the top-performing approaches, demonstrating strong robustness and improved generalization ability. Applying PCA substantially reduces the dimensionality of the learning problem while incurring only a moderate loss of accuracy. For example, \KernelMOPS\ (PCA) / M $\times$ M achieves a prediction error of $2.96\%$ on the in-distribution test set while maintaining competitive performance across all out-of-distribution datasets.

\begin{table}[H]
\centering
\tiny
\begin{tabular}{lcccccc}
\hline
Method / kernel & Test & OOD\_par & OOD\_init\_GP & OOD\_init\_amp  & OOD\_par\_init\_GP & OOD\_par\_init\_amp \\
\hline
KernelO / M & 6.67\% (5.45\%) & {\bf 3.98\% (3.28\%)} & 14.18\% (8.47\%) & 46.93\% (20.19\%) & 17.41\% (9.52\%) & 47.18\% (19.88\%) \\
KernelO / R & 4.76\% (3.21\%) & 7.50\% (5.17\%) & 10.29\% (3.50\%) & 16.32\% (6.36\%) & 14.65\% (6.48\%) & 16.93\% (6.20\%) \\
KernelO (PCA) / M & 6.97\% (5.11\%) & 4.82\% (2.84\%) & 13.28\% (7.60\%) & 10.31\% (4.10\%) & 16.63\% (9.21\%) & 11.21\% (4.70\%) \\
KernelO (PCA) / R & 5.34\% (2.88\%) & 7.87\% (4.92\%) & 10.30\% (3.53\%) & 7.20\% (1.86\%) & 14.63\% (6.49\%) & 8.15\% (2.63\%) \\
\textbf{KernelMO-PS / M x M} & {\bf 1.76\% (1.52\%)} & 6.24\% (4.24\%) & 5.57\% (3.69\%) & {\bf 6.18\% (1.99\%)} & 10.80\% (7.57\%) & 9.05\% (3.49\%) \\
\textbf{KernelMO-PS / R x R} & 2.97\% (2.15\%) & 9.36\% (6.46\%) & 8.59\% (3.65\%) & 10.38\% (3.61\%) & 16.25\% (9.51\%) & 14.84\% (7.75\%) \\
\textbf{KernelMO-PS (PCA) / M x M} & 2.96\% (1.13\%) & 6.10\% (3.60\%) & 5.81\% (3.38\%) & 6.35\% (1.90\%) & 10.04\% (7.17\%) & 8.63\% (3.11\%) \\
\textbf{KernelMO-PS (PCA) / R x R} & 3.79\% (1.73\%) & 9.21\% (5.84\%) & 8.56\% (3.65\%) & 6.68\% (1.82\%) & 15.65\% (9.19\%) & 11.61\% (7.45\%) \\\hline
DeepONet & 6.77\% (2.80\%) & 8.49\% (4.16\%) & 10.55\% (3.62\%) & 8.36\% (2.29\%) & 14.70\% (6.46\%) & 9.23\% (3.03\%) \\
DeepONet-C & 4.42\% (0.92\%) & 4.88\% (1.22\%) & 6.22\% (1.68\%) & 7.66\% (2.11\%) & {\bf 6.92\% (2.61\%)} & 7.83\% (2.10\%) \\
MIONet & 4.32\% (1.31\%) & 8.90\% (9.79\%) & 6.37\% (2.83\%) & 21.98\% (11.22\%) & 10.74\% (11.55\%) & 21.09\% (10.83\%) \\  
MNO & 3.29\% (1.05\%) & 4.79\% (2.88\%) & {\bf 5.46\% (1.71\%)} & 7.04\% (2.11\%) & 8.25\% (4.96\%) & {\bf 7.49\% (2.30\%)} \\\hline
\end{tabular}
\caption{Product-space learning: performance comparison on the conservation law. We report mean relative errors with standard deviations in parentheses on in-distribution and out-of-distribution datasets.  
}
\label{tab:framework2_cons}
\end{table}

\begin{figure}[H]
    \centering
    \includegraphics[width=\textwidth]{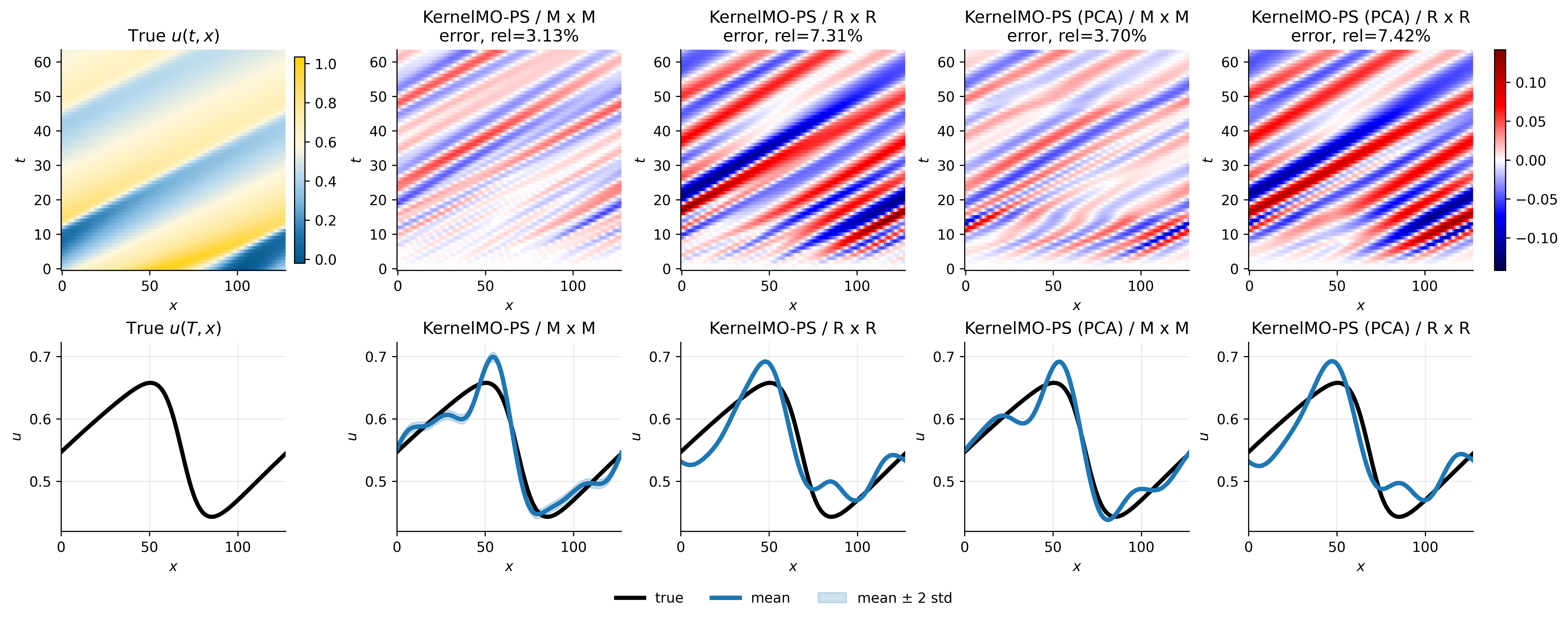}
    \caption{Product-space learning: qualitative prediction and uncertainty comparison for the conservation law on the OOD\_par dataset. The first row shows a reference solution and signed prediction errors for four kernel variants; the relative error for the displayed trajectory is reported in each error-figure title. The second row shows the final-time solution, predictive mean, and an uncertainty band of $\pm 2$ predictive standard deviations. }
    \label{fig:qualitative_framework2_product_conservation_ood_ood_par_h5}
\end{figure}

\subsubsection{Diffusion-Reaction-Advection}

We consider the following one-dimensional diffusion-reaction-advection equation:
\begin{equation*}
\begin{aligned}
u_t &= \alpha_1u_{xx} + \alpha_2u_x + \alpha_3u^{\alpha_4}(1-u^{\alpha_5}), \quad (t,x)\in[0,2]\times[0,2] \\
u(0,x) &= u_0(x), \\
u(t,0) &= u(t,2),
\end{aligned}
\end{equation*}
where the first three components of parameter vector $\alpha = [\alpha_1,\alpha_2,\alpha_3,\alpha_4, \alpha_5]^\top$ are sampled from the ranges $\alpha_i\in[0.9\alpha_i^c,1.1\alpha_i^c]$, with the reference values given by $\alpha^c = [0.01,1,1]^\top$, while $\alpha_4$ and $\alpha_5$ are drawn uniformly from $[1,3]$.

\paragraph{Operator-Valued Learning.}

The mean relative errors and standard deviations are summarized in Table~\ref{tab:framework1_DRA}, while the implementation details and hyperparameter choices are summarized in Table~\ref{tab:diffusionReactionAdvectionOV}. Overall, the proposed kernel-based methods achieve the lowest prediction errors on both the in-distribution and out-of-distribution datasets. The best in-distribution performance is obtained by \KernelMOOV\ / R, while the lowest out-of-distribution error is achieved by \KernelMOOV\ (PCA) / R. Compared with the best-performing neural operator baseline, the proposed methods improve the prediction accuracy by more than a factor of three on the in-distribution test set (from $1.21\%$ for MNO to $0.38\%$ for \KernelMOOV\ / R) and by nearly a factor of two on the out-of-distribution dataset (from $6.58\%$ for MNO to $3.53\%$ for \KernelMOOV\ (PCA) / R). Applying PCA substantially reduces the dimensionality of the learning problem while preserving the strong predictive performance of the proposed methods, and even slightly improves the out-of-distribution accuracy for the RBF kernel.

\begin{table}[H]
\centering
\small
\begin{tabular}{lcc}
\hline
Method / kernel & Test & OOD \\
\hline
\textbf{KernelMO-OV / M} & 0.60\% (0.51\%) & 5.18\% (4.80\%) \\
\textbf{KernelMO-OV / R} & {\bf 0.38\% (0.38\%)} & 3.72\% (3.94\%) \\
\textbf{KernelMO-OV (PCA) / M} & 0.88\% (0.46\%) & 5.17\% (4.63\%) \\
\textbf{KernelMO-OV (PCA) / R} & 0.74\% (0.34\%) & {\bf 3.53\% (3.50\%)} \\
\textbf{KernelMO-PS / M x M} & 0.79\% (0.66\%) & 5.33\% (4.59\%) \\
\textbf{KernelMO-PS / R x R} & 2.87\% (1.41\%) & 10.42\% (6.48\%) \\
\textbf{KernelMO-PS (PCA) / M x M} & 1.03\% (0.60\%) & 5.30\% (4.44\%) \\
\textbf{KernelMO-PS (PCA) / R x R} & 2.93\% (1.38\%) & 10.43\% (6.46\%) \\\hline
DeepONet-C & 2.89\% (0.63\%) & 6.61\% (3.59\%) \\
MIONet & 5.56\% (1.43\%) & 13.80\% (7.08\%) \\
MNO & 1.21\% (0.34\%) & 6.58\% (5.85\%) \\\hline
\end{tabular}
\caption{Operator-valued learning: performance comparison on the diffusion-reaction-advection equation. We report mean relative errors with standard deviations in parentheses on in-distribution and out-of-distribution datasets.
}
\label{tab:framework1_DRA}
\end{table}

\begin{figure}[H]
    \centering
    \includegraphics[width=\textwidth]{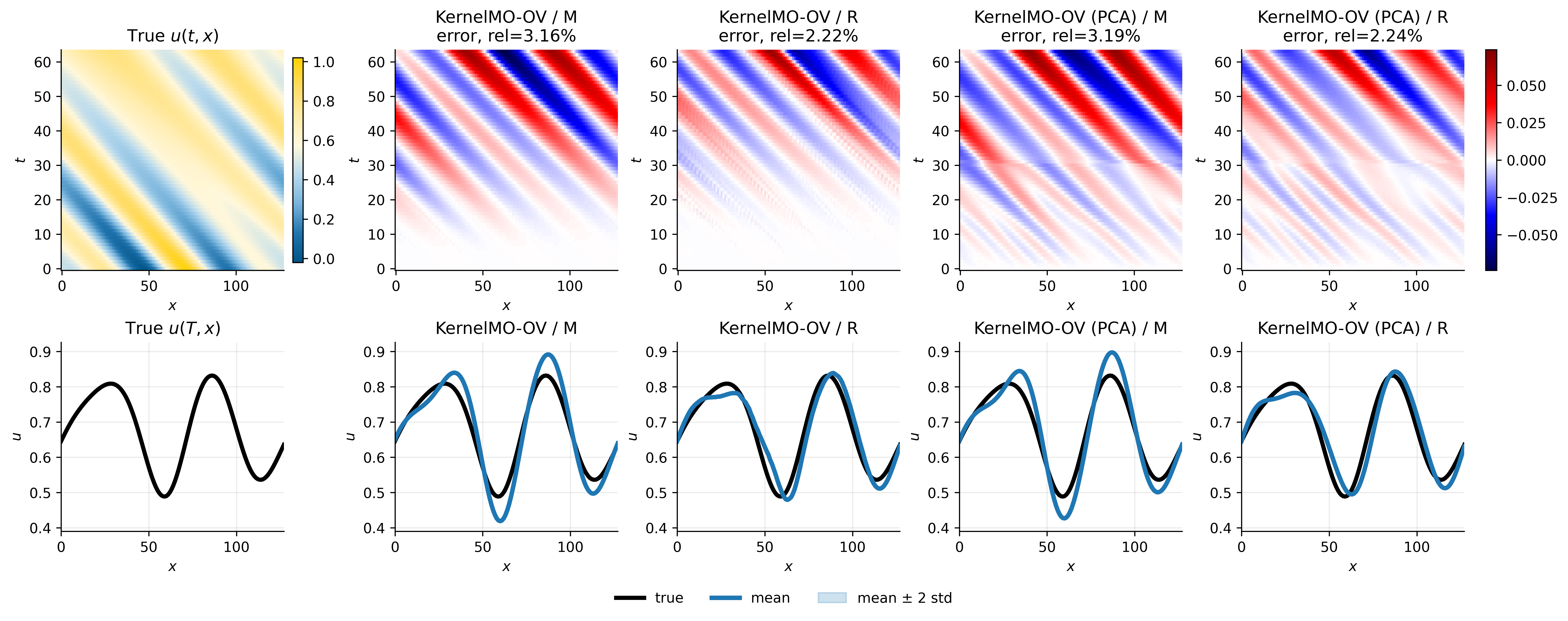}
    \caption{Operator-valued learning: qualitative prediction and uncertainty comparison for the diffusion-reaction-advection equation on the OOD dataset. The first row shows a reference solution and signed prediction errors for four kernel variants; the relative error for the displayed trajectory is reported in each error-figure title. The second row shows the final-time solution, predictive mean, and an uncertainty band of $\pm 2$ predictive standard deviations. }
    \label{fig:qualitative_framework1_operator_diffreacadv_ood_ood_h5}
\end{figure}

\paragraph{Product-Space Learning.}

The mean relative errors and standard deviations are summarized in Table~\ref{tab:framework2_DRA}, while the implementation details and hyperparameter choices are summarized in Table~\ref{tab:diffusionReactionAdvectionPS}. On the in-distribution test set, the proposed product-kernel methods achieve prediction errors comparable to the best-performing neural operator baseline, with \KernelMOPS\ / M $\times$ M attaining a mean relative error of $2.55\%$, compared with $2.25\%$ for MNO. On the out-of-distribution datasets, however, the proposed methods consistently outperform the neural operator baselines. In particular, \KernelMOPS\ (PCA) / M $\times$ M achieves the lowest prediction errors on four of the five out-of-distribution datasets, while \KernelMOPS\ / M $\times$ M achieves the best performance on the remaining one. Applying PCA substantially reduces the dimensionality of the learning problem while preserving the strong predictive performance of the proposed methods. Moreover, the proposed product-kernel methods consistently outperform the corresponding vanilla kernel methods, highlighting the advantage of explicitly modeling the product-space structure inherent in multiple operator learning problems.

\begin{table}[H]
\centering
\tiny
\begin{tabular}{lcccccc}
\hline
Method / kernel & Test & OOD\_par & OOD\_init\_GP & OOD\_init\_amp & OOD\_par\_init\_GP & OOD\_par\_init\_amp \\
\hline
KernelO / M & 18.81\% (10.19\%) & 8.27\% (4.83\%) & 23.00\% (14.71\%) & 48.20\% (19.41\%) &  28.09\% (16.11\%) & 48.77\% (18.81\%) \\
KernelO / R & 12.63\% (5.46\%) & 15.83\% (7.18\%) & 233.85\% (153.11\%) & 2589.43\% (1249.39\%) &  233.85\% (155.23\%) & 2590.93\% (1250.92\%) \\
KernelO (PCA) / M & 18.80\% (10.23\%) & 8.30\% (4.80\%) & 22.46\% (15.23\%) & 15.59\% (6.05\%) & 27.65\% (16.62\%) & 17.48\% (6.71\%) \\
KernelO (PCA) / R & 12.64\% (5.45\%) & 15.84\% (7.17\%) & 115.56\% (122.83\%) & 347.07\% (255.06\%) & 116.17\% (113.67\%) & 347.40\% (255.15\%) \\
\textbf{KernelMO-PS / M x M} & 2.55\% (1.54\%) & {\bf 4.57\% (3.74\%)} & 3.16\% (1.50\%) & 7.29\% (2.74\%) & 6.16\% (3.76\%) & 8.16\% (2.84\%) \\
\textbf{KernelMO-PS / R x R} & 2.77\% (2.54\%) & 5.92\% (5.13\%) & 3.88\% (2.07\%) & 7.28\% (2.69\%) & 9.37\% (6.11\%) & 9.92\% (5.75\%) \\
\textbf{KernelMO-PS (PCA) / M x M} & 2.61\% (1.50\%) & 4.61\% (3.67\%) & {\bf 3.15\% (1.50\%)} & {\bf 6.86\% (2.44\%)} & {\bf 6.15\% (3.76\%)} & {\bf 7.77\% (2.70\%)} \\
\textbf{KernelMO-PS (PCA) / R x R} & 2.83\% (2.44\%) & 5.88\% (5.02\%) & 3.85\% (2.05\%) & 7.15\% (2.64\%) & 9.25\% (6.03\%) & 9.74\% (5.67\%) \\\hline
DeepONet & 13.14\% (5.11\%) & 15.98\% (6.83\%) & 15.33\% (6.18\%) & 13.76\% (4.52\%) & 21.37\% (9.38\%) & 15.40\% (5.30\%) \\
DeepONet-C & 3.46\% (0.73\%) & 5.91\% (3.32\%) & 3.65\% (0.96\%) & 8.37\% (2.85\%) & 6.34\% (3.97\%) & 9.10\% (2.90\%) \\
MIONet & 5.71\% (1.35\%) & 16.22\% (13.97\%) & 9.66\% (3.88\%) & 30.24\% (14.11\%) & 17.47\% (13.65\%) & 28.18\% (13.38\%) \\
MNO & {\bf 2.25\% (0.74\%)} & 5.62\% (4.48\%) & 3.33\% (1.32\%) & 8.16\% (2.90\%) & 7.15\% (5.14\%) & 9.27\% (3.23\%) \\\hline
\end{tabular}
\caption{Product-space learning: performance comparison on the diffusion-reaction-advection equation. We report mean relative errors with standard deviations in parentheses on in-distribution and out-of-distribution datasets.
}
\label{tab:framework2_DRA}
\end{table}

\begin{figure}[H]
    \centering
    \includegraphics[width=\textwidth]{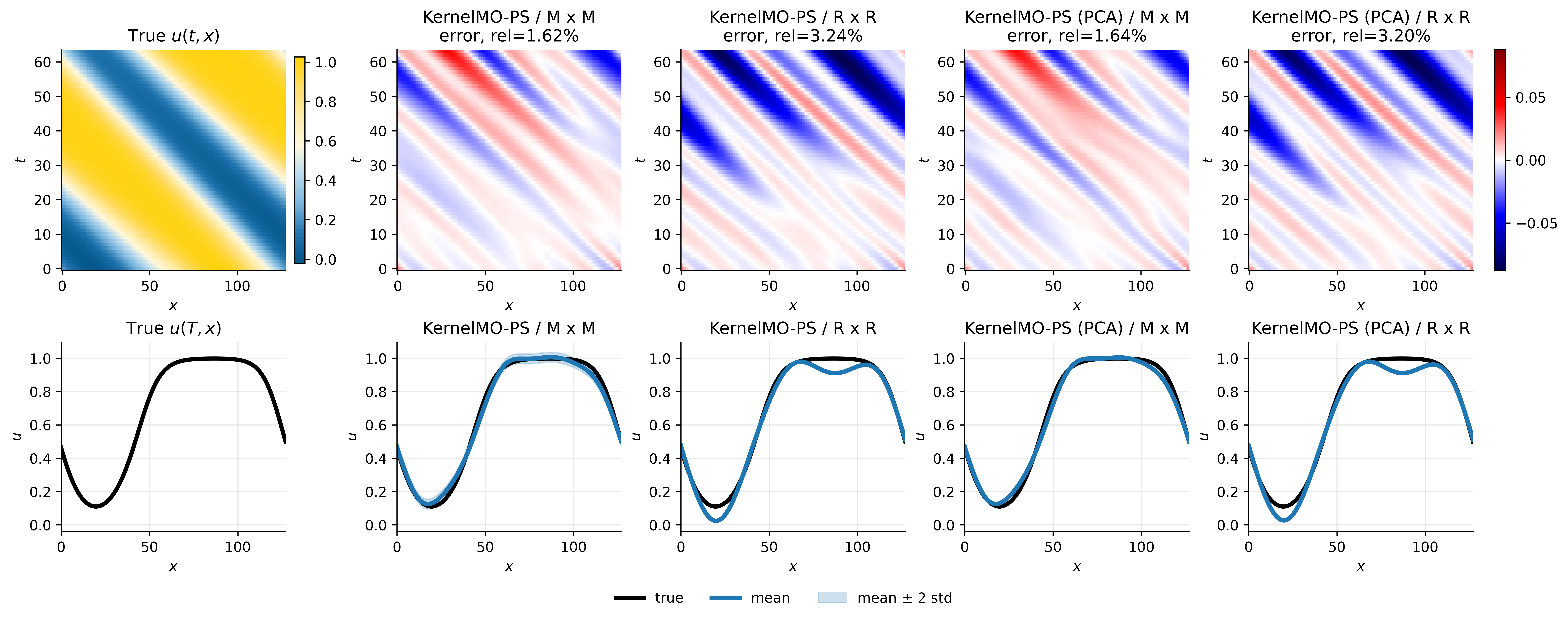}
    \caption{Product-space learning: qualitative prediction and uncertainty comparison for the diffusion-reaction-advection equation on the OOD\_init\_GP dataset. The first row shows a reference solution and signed prediction errors for four kernel variants; the relative error for the displayed trajectory is reported in each error-figure title. The second row shows the final-time solution, predictive mean, and an uncertainty band of $\pm 2$ predictive standard deviations. }
    \label{fig:qualitative_framework2_product_diffreacadv_ood_ood_init_gp_h5}
\end{figure}

\subsubsection{Nonlinear Klein-Gordon}

We consider the following nonlinear Klein--Gordon equation:
\begin{equation*}
\begin{aligned}
u_{tt} &= \alpha_1^2u_{xx} - \alpha_2^2\alpha_1^4u - \alpha_3u^3, \quad (t,x)\in[0,2]\times[0,2] \\
u(0,x) &= u_0(x), \\
u_t(0,x) &= 0, \\
u(t,0) &= u(t,2).
\end{aligned}
\end{equation*}
The components of the parameter vector $\alpha = [\alpha_1,\alpha_2,\alpha_3]^\top$ are sampled from the ranges $\alpha_i\in[0.9\alpha_i^c,1.1\alpha_i^c]$ with reference values $\alpha^c=[1,1,1]^\top$.

\paragraph{Operator-Valued Learning.}

The mean relative errors and standard deviations are summarized in Table~\ref{tab:framework1_KG}, while the implementation details and hyperparameter choices are summarized in Table~\ref{tab:kleinGordonOV}. Overall, the proposed kernel-based methods achieve the lowest prediction errors on both the in-distribution and out-of-distribution datasets. The best in-distribution performance is obtained by \KernelMOOV\ / M and \KernelMOPS\ / M $\times$ M, while the lowest out-of-distribution error is achieved by their PCA-based variants. Compared with the best-performing neural operator baseline, the proposed methods improve the prediction accuracy by more than one order of magnitude on the in-distribution test set (from $4.63\%$ for MNO to $0.21\%$ for \KernelMOOV\ / M and \KernelMOPS\ / M $\times$ M) and by more than a factor of four on the out-of-distribution dataset (from $18.50\%$ for MNO to $4.11\%$ for \KernelMOOV\ (PCA) / M and \KernelMOPS\ (PCA) / M $\times$ M). Applying PCA substantially reduces the dimensionality of the learning problem while further improving the out-of-distribution performance. 

\begin{table}[H]
\centering
\small
\begin{tabular}{lcc}
\hline
Method / kernel & Test & OOD \\
\hline
\textbf{KernelMO-OV / M} & {\bf 0.21\% (0.19\%)} & 6.72\% (9.13\%) \\
\textbf{KernelMO-OV / R} & 0.37\% (0.24\%) & 11.17\% (24.33\%) \\
\textbf{KernelMO-OV (PCA) / M} & 0.62\% (0.28\%) & {\bf 4.11\% (6.96\%)} \\
\textbf{KernelMO-OV (PCA) / R} & 0.68\% (0.33\%) & 10.39\% (23.80\%) \\
\textbf{KernelMO-PS / M x M} & {\bf 0.21\% (0.19\%)} & 6.72\% (9.13\%) \\
\textbf{KernelMO-PS / R x R} & 10.22\% (6.85\%) & 27.57\% (27.58\%) \\
\textbf{KernelMO-PS (PCA) / M x M} & 0.62\% (0.28\%) & {\bf 4.11\% (6.96\%)} \\
\textbf{KernelMO-PS (PCA) / R x R} & 10.23\% (6.84\%) & 27.57\% (27.58\%) \\\hline
DeepONet-C & 12.17\% (3.64\%) & 22.06\% (14.84\%) \\
MIONet & 8.41\% (2.83\%) & 28.46\% (31.09\%) \\
MNO & 4.63\% (1.46\%) & 18.50\% (19.36\%) \\\hline
\end{tabular}
\caption{Operator-valued learning: performance comparison on the nonlinear Klein-Gordon equation. We report mean relative errors with standard deviations in parentheses on in-distribution and out-of-distribution datasets.
}
\label{tab:framework1_KG}
\end{table}

\begin{figure}[H]
    \centering
    \includegraphics[width=\textwidth]{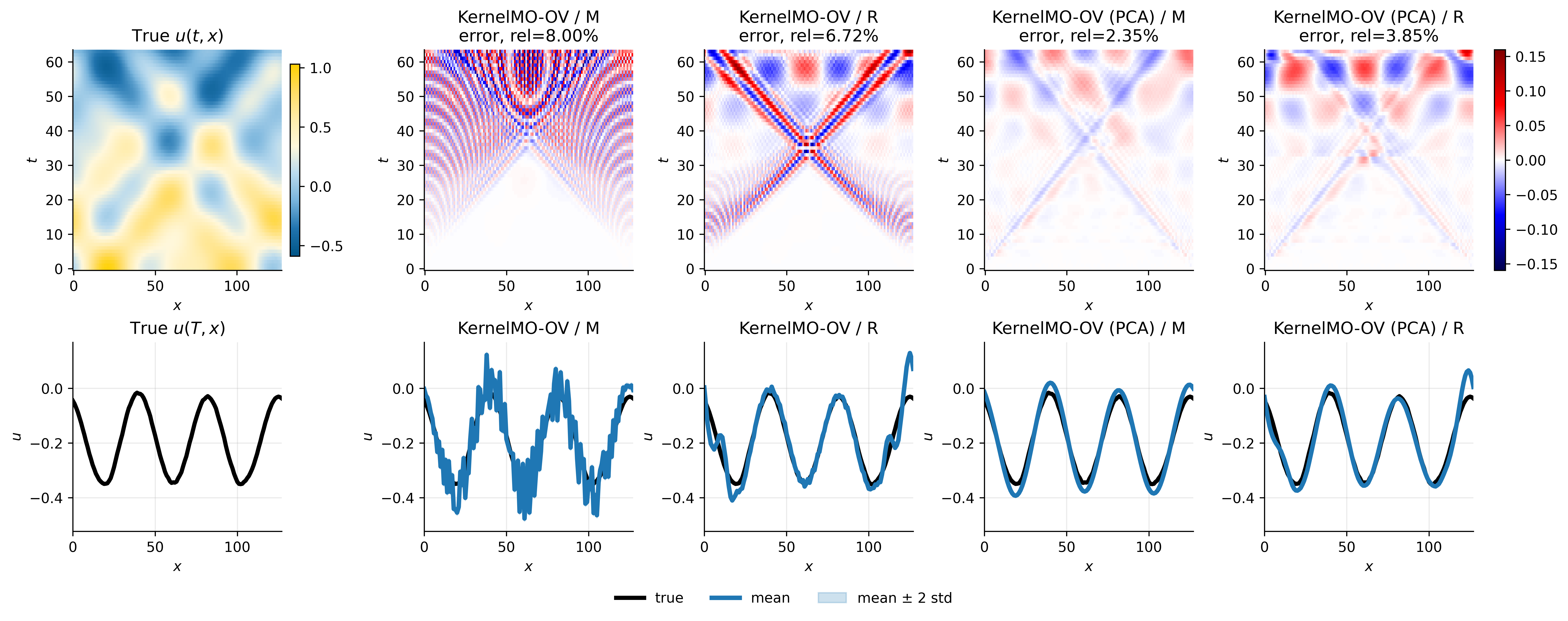}
    \caption{Operator-valued learning: qualitative prediction and uncertainty comparison for the Nonlinear Klein-Gordon equation on the OOD dataset. The first row shows a reference solution and signed prediction errors for four kernel variants; the relative error for the displayed trajectory is reported in each error-figure title. The second row shows the final-time solution, predictive mean, and an uncertainty band of $\pm 2$ predictive standard deviations. }
    \label{fig:qualitative_framework1_operator_klein_gordon_ood_ood_h5}
\end{figure}

\paragraph{Product-Space Learning.}

The mean relative errors and standard deviations are summarized in Table~\ref{tab:framework2_KG}, while the implementation details and hyperparameter choices are summarized in Table~\ref{tab:kleinGordonPS}. Overall, the proposed product-kernel methods achieve the lowest prediction errors on the in-distribution test set and consistently outperform the neural operator baselines across the out-of-distribution datasets. In particular, \KernelMOPS\ / M $\times$ M attains the lowest in-distribution test error of $0.31\%$, compared with $2.84\%$ for the best-performing neural operator baseline MNO, corresponding to an improvement of approximately one order of magnitude. Across the out-of-distribution datasets, \KernelMOPS\ (PCA) / M $\times$ M achieves the best performance on four of the five datasets, while \KernelMOPS\ / M $\times$ M performs best on the remaining one. Applying PCA substantially reduces the dimensionality of the learning problem while preserving the strong predictive performance of the proposed methods. Finally, we observe that the datasets involving out-of-distribution initial-condition amplitudes (OOD\_init\_amp and OOD\_par\_init\_amp) are substantially more challenging for all methods, as the unseen initial conditions contain higher-frequency modes that are absent from the training data, leading to more oscillatory solution dynamics and larger prediction errors.

\begin{table}[H]
\centering
\tiny
\begin{tabular}{lcccccc}
\hline
Method / kernel & Test & OOD\_par & OOD\_init\_GP & OOD\_init\_amp & OOD\_par\_init\_GP & OOD\_par\_init\_amp \\
\hline
KernelO / M & 41.28\% (27.25\%) & 28.12\% (15.75\%) & 24.95\% (14.78\%) & 59.59\% (16.29\%) & 35.95\% (20.22\%) & 65.80\% (14.20\%) \\
KernelO / R & 28.94\% (14.46\%) & 42.95\% (19.50\%) & 434.20\% (297.77\%) & 5811.74\% (2849.27\%) & 431.33\% (289.06\%) & 5672.18\% (2787.12\%) \\
KernelO (PCA) / M & 41.25\% (27.26\%) & 28.12\% (15.74\%) & 24.64\% (14.83\%) & 46.81\% (12.42\%) & 35.78\% (20.42\%) & 57.36\% (14.91\%) \\
KernelO (PCA) / R & 28.94\% (14.46\%) & 42.95\% (19.50\%) & 214.07\% (218.27\%) & 766.86\% (609.60\%) & 213.33\% (202.18\%) & 750.18\% (594.53\%) \\
\textbf{KernelMO-PS / M x M} & {\bf 0.31\% (0.24\%)} & 8.36\% (12.27\%) & 2.10\% (1.33\%) & {\bf 31.79\% (12.65\%)} & 3.37\% (2.18\%) & 32.90\% (11.56\%) \\
\textbf{KernelMO-PS / R x R} & 0.90\% (0.66\%) & 15.81\% (25.30\%) & 7.79\% (5.67\%) & 90.45\% (40.93\%) & 9.13\% (5.77\%) & 90.30\% (38.01\%) \\
\textbf{KernelMO-PS (PCA) / M x M} & 0.68\% (0.35\%) & {\bf 7.18\% (11.64\%)} & {\bf 2.02\% (1.27\%)} & 31.94\% (12.74\%) & {\bf 2.47\% (1.68\%)} & {\bf 32.75\% (11.78\%)} \\
\textbf{KernelMO-PS (PCA) / R x R} & 1.01\% (0.66\%) & 14.86\% (24.83\%) & 4.22\% (4.19\%) & 36.08\% (14.83\%) & 5.86\% (4.84\%) & 40.08\% (16.84\%) \\\hline
DeepONet & 28.58\% (12.44\%) & 45.35\% (19.29\%) & 18.55\% (8.38\%) & 42.43\% (9.25\%) & 31.65\% (16.22\%) & 53.09\% (12.07\%) \\
DeepONet-C & 10.25\% (3.58\%) & 23.42\% (17.12\%) & 8.23\% (2.08\%) & 36.52\% (12.09\%) & 14.55\% (10.46\%) & 43.00\% (12.97\%) \\
MIONet & 11.19\% (2.60\%) & 35.97\% (35.71\%) & 13.26\% (9.19\%) & 72.59\% (32.71\%) & 26.00\% (26.60\%) & 66.93\% (29.43\%) \\
MNO & 2.84\% (0.70\%) & 18.94\% (20.61\%) & 3.98\% (1.69\%) & 32.50\% (12.60\%) & 10.29\% (9.00\%) & 37.93\% (12.55\%) \\\hline
\end{tabular}
\caption{Product-space learning: performance comparison on the nonlinear Klein-Gordon equation. We report mean relative errors with standard deviations in parentheses on in-distribution and out-of-distribution datasets.
}
\label{tab:framework2_KG}
\end{table}

\begin{figure}[H]
    \centering
    \includegraphics[width=\textwidth]{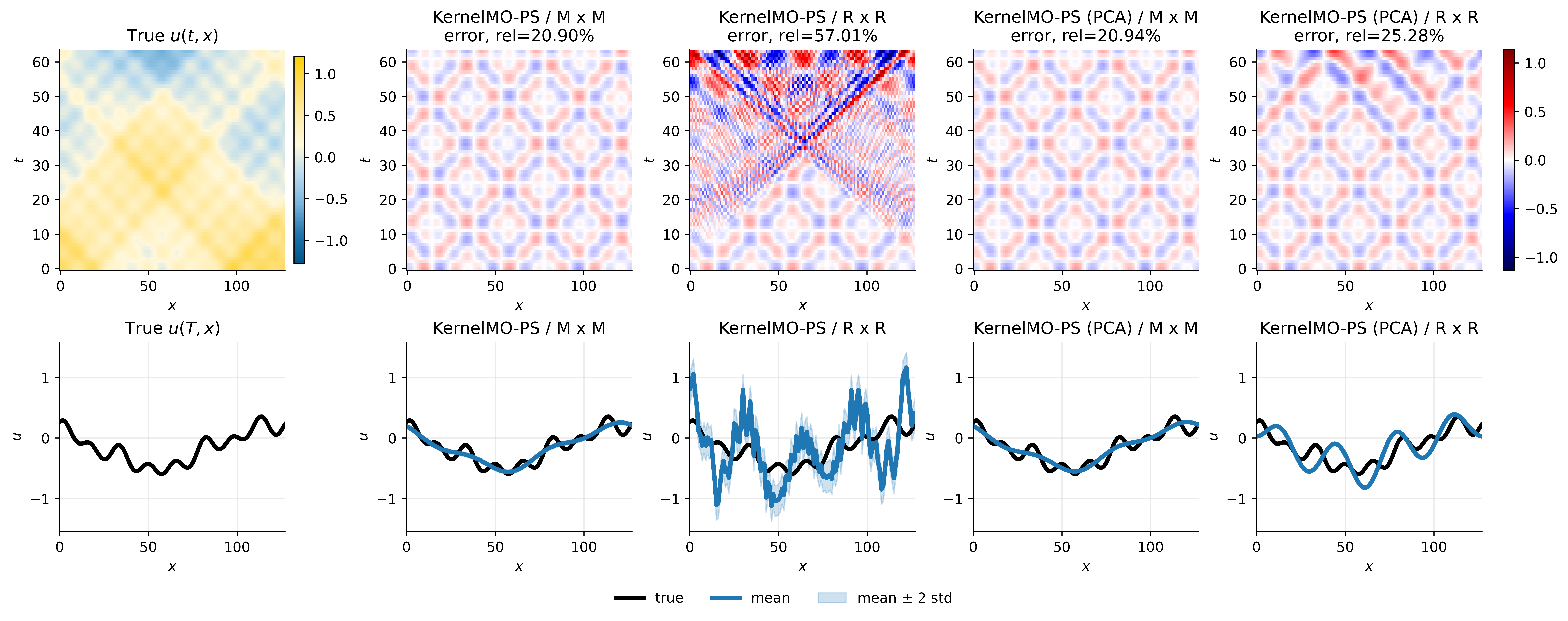}
    \caption{Product-space learning: qualitative prediction and uncertainty comparison for the nonlinear Klein-Gordon equation on the OOD\_par\_init\_amp dataset. The first row shows a reference solution and signed prediction errors for four kernel variants; the relative error for the displayed trajectory is reported in each error-figure title. The second row shows the final-time solution, predictive mean, and an uncertainty band of $\pm 2$ predictive standard deviations. }
    \label{fig:qualitative_framework2_product_klein_gordon_ood_ood_par_init_amp_h5}
\end{figure}

\subsubsection{Parametric Diffusion-Reaction Equation}

We consider the following parametric diffusion-reaction equation:
\begin{equation*}
\begin{aligned}
u_t &= (\alpha(x)u_x)_x + u(1-u), \quad (t,x)\in[0,2]\times[0,2] \\
u(0,x) &= u_0(x), \\
u(t,0) &= u(t,2),
\end{aligned}
\end{equation*}
where the spatially varying diffusivity $\alpha(x)$ is sampled from a Gaussian random process with RBF kernel (length scale=1) and with variance $0.01^2$.
The parametric function $\alpha(x)$ is evaluated at 129 sensor points corresponding to the boundaries of uniformly spaced cells, $\{x_i^b\}_{i=1}^{129}$, and the resulting values $\alpha(x_i^b)$ are encoded within the operator-valued learning and product-space learning methods. 

\paragraph{Operator-Valued Learning.}

The mean relative errors and standard deviations are summarized in Table~\ref{tab:framework1_parDR}, while the implementation details and hyperparameter choices are summarized in Table~\ref{tab:parametricDiffusionReactionOV}. Overall, the proposed kernel-based methods achieve the lowest prediction errors on both the in-distribution and out-of-distribution datasets. The best in-distribution performance is jointly attained by \KernelMOOV\ / M and \KernelMOPS\ / M $\times$ M, while \KernelMOOV\ / M achieves the lowest prediction errors on the OOD\_matern and OOD\_var datasets and \KernelMOOV\ (PCA) / M achieves the best performance on OOD\_scale. Compared with the best-performing neural operator baseline, the proposed methods improve the prediction accuracy by more than one order of magnitude on the in-distribution test set (from $1.34\%$ for MNO to $0.06\%$ for \KernelMOOV\ / M and \KernelMOOV\ / M $\times$ M). Applying PCA substantially reduces the dimensionality of the learning problem while preserving competitive predictive performance, particularly under distribution shifts in the coefficient functions. Overall, the operator-valued formulation demonstrates stronger out-of-distribution generalization than the product-space formulation on this benchmark.

\begin{table}[H]
\centering
\small
\begin{tabular}{lcccc}
\hline
Method / kernel & Test & OOD\_matern & OOD\_scale & OOD\_var \\
\hline
\textbf{KernelMO-OV / M} & {\bf 0.06\% (0.15\%)} & {\bf 3.53\% (1.56\%)} & 4.13\% (1.67\%) & {\bf 0.12\% (0.37\%)} \\
\textbf{KernelMO-OV / R} & 0.12\% (0.34\%) & 56.22\% (48.91\%) & 83.90\% (72.76\%) & 0.33\% (1.08\%) \\
\textbf{KernelMO-OV (PCA) / M} & 3.04\% (1.90\%) & 3.77\% (1.67\%) & {\bf 4.05\% (1.59\%)} & 3.19\% (1.92\%) \\
\textbf{KernelMO-OV (PCA) / R} & 3.04\% (1.90\%) & 7.25\% (6.29\%) & 10.34\% (10.19\%) & 3.19\% (1.92\%) \\
\textbf{KernelMO-PS / M x M} & {\bf 0.06\% (0.15\%)} & 10.12\% (8.30\%) & 14.80\% (11.56\%) & 0.14\% (0.42\%) \\
\textbf{KernelMO-PS / R x R} & 2.48\% (1.07\%) & 6.48\% (3.22\%) & 8.19\% (3.63\%) & 2.60\% (1.12\%) \\
\textbf{KernelMO-PS (PCA) / M x M} & 3.04\% (1.90\%) & 3.81\% (1.69\%) & 4.12\% (1.62\%) & 3.19\% (1.92\%) \\
\textbf{KernelMO-PS (PCA) / R x R} & 3.58\% (1.82\%) & 4.44\% (1.60\%) & 4.91\% (1.52\%) & 3.73\% (1.86\%) \\\hline
DeepONet-C & 2.22\% (0.88\%) & 4.19\% (1.41\%) & 4.68\% (1.44\%) & 2.34\% (0.99\%) \\
MIONet & 1.88\% (0.78\%) & 4.98\% (1.69\%) & 5.42\% (1.63\%) & 2.02\% (0.89\%) \\
MNO & 1.34\% (0.50\%) & 4.18\% (1.62\%) & 4.68\% (1.52\%) & 1.45\% (0.62\%) \\\hline
\end{tabular}
\caption{Operator-valued learning: performance comparison on the parametric diffusion-reaction equation. We report mean relative errors with standard deviations in parentheses on in-distribution and out-of-distribution datasets.
}
\label{tab:framework1_parDR}
\end{table}

\begin{figure}[H]
    \centering
    \includegraphics[width=\textwidth]{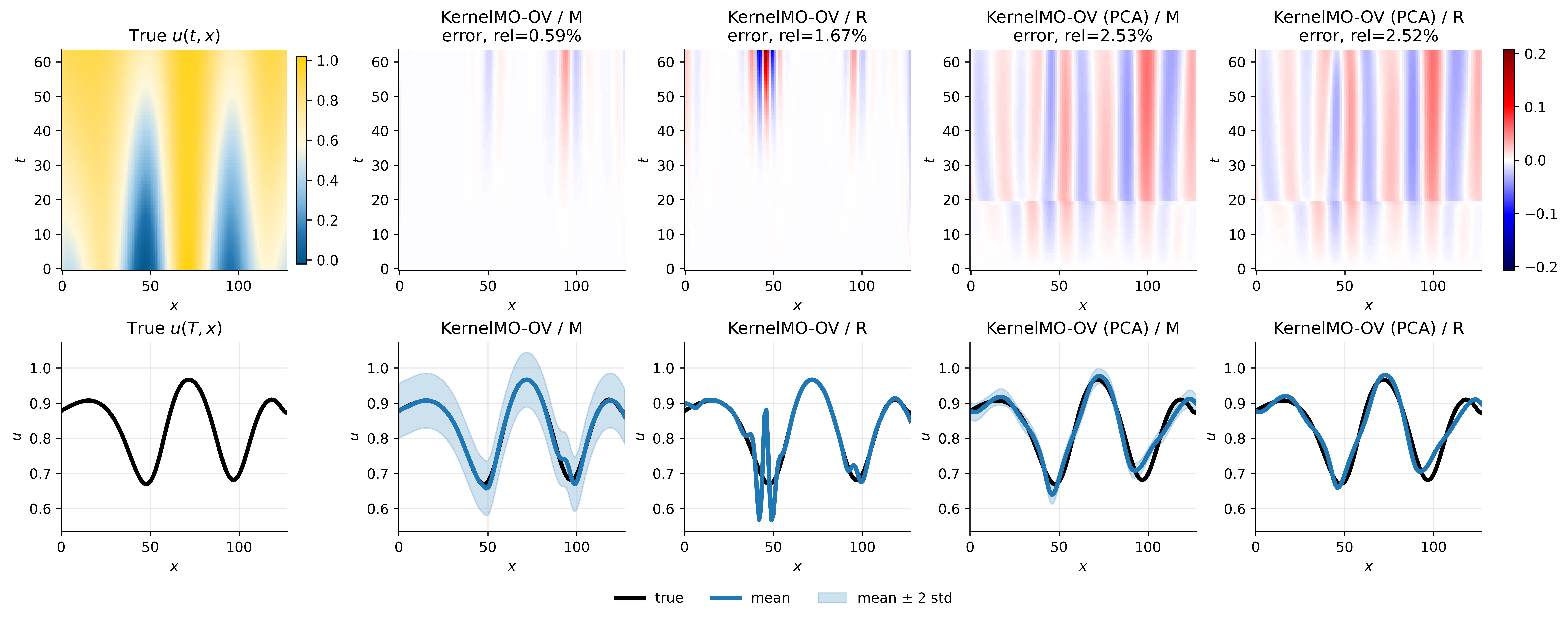}
    \caption{Operator-valued learning: qualitative prediction and uncertainty comparison for the parametric diffusion-reaction equation on the OOD\_var dataset. The first row shows a reference solution and signed prediction errors for four kernel variants; the relative error for the displayed trajectory is reported in each error-figure title. The second row shows the final-time solution, predictive mean, and an uncertainty band of $\pm 2$ predictive standard deviations. }
    \label{fig:qualitative_framework1_operator_param_diffreac_ood_ood_var_h5}
\end{figure}

\paragraph{Product-Space Learning.}

The mean relative errors and standard deviations are summarized in Table~\ref{tab:framework2_parDR}, while the implementation details and hyperparameter choices are summarized in Table~\ref{tab:parametricDiffusionReactionPS}. Overall, the proposed product-kernel methods achieve prediction errors comparable to those of the best-performing neural operator baseline across both the in-distribution and out-of-distribution datasets. In particular, \KernelMOPS\ / R $\times$ R attains the lowest in-distribution test error of $0.75\%$, compared with $1.77\%$ for MNO. However, this variant exhibits poor generalization under most distribution shifts, with substantially larger errors on the OOD\_init, OOD\_par\_init, OOD\_par\_kernel, and OOD\_par\_scale datasets. By contrast, \KernelMOPS\ / M $\times$ M achieves consistently strong performance across all evaluation settings, obtaining prediction errors comparable to MNO on every out-of-distribution dataset and outperforming it on the OOD\_par\_var dataset ($1.30\%$ versus $1.73\%$). These results highlight the superior robustness of the Mat{\'e}rn product-kernel method under distribution shifts, even though the RBF variant achieves the lowest in-distribution error.

\begin{table}[H]
\centering
\tiny
\begin{tabular}{lcccccc}
\hline
Method / kernel & Test & OOD\_init & OOD\_par\_init & OOD\_par\_kernel & OOD\_par\_scale & OOD\_par\_var \\
\hline
KernelO / M & 9.21\% (5.27\%) & 12.48\% (3.52\%) & 12.51\% (3.47\%) & 8.20\% (3.50\%) & 8.01\% (3.47\%) & 8.64\% (5.11\%) \\
KernelO / R & 6.76\% (2.54\%) & 18.96\% (7.00\%) & 19.02\% (6.95\%) & 4.98\% (1.78\%) & 5.18\% (1.67\%) & 6.87\% (2.64\%) \\
KernelO (PCA) / M & 8.96\% (4.72\%) & 15.70\% (5.20\%) & 15.74\% (5.14\%) & 7.77\% (3.12\%) & 7.62\% (3.08\%) & 8.62\% (4.53\%) \\
KernelO (PCA) / R & 6.97\% (2.74\%) & {\bf 10.76\% (3.62\%)} & {\bf 10.78\% (3.57\%)} & \textbf{5.11\% (1.91\%)} & \textbf{5.30\% (1.83\%)} & 7.09\% (2.84\%) \\
\textbf{KernelMO-PS / M x M} & 1.50\% (0.75\%) & 11.88\% (3.53\%) & 12.73\% (3.43\%) & 5.40\% (1.98\%) & 7.35\% (2.57\%) & 1.30\% (0.70\%) \\
\textbf{KernelMO-PS / R x R} & {\bf 0.75\% (0.44\%)} & 127.42\% (47.76\%) & 188.16\% (88.56\%) & 105.53\% (50.74\%) & 157.30\% (77.91\%) & {\bf 0.76\% (0.45\%)} \\
\textbf{KernelMO-PS (PCA) / M x M} & 3.94\% (2.01\%) & 104.63\% (68.48\%) & 150.65\% (81.94\%) & 104.76\% (68.56\%) & 154.17\% (91.24\%) & 4.04\% (2.04\%) \\
\textbf{KernelMO-PS (PCA) / R x R} & 4.27\% (1.96\%) & 156.18\% (102.47\%) & 243.60\% (144.65\%) & 155.33\% (109.85\%) & 235.54\% (162.35\%) & 4.39\% (1.96\%) \\\hline
DeepONet & 7.75\% (2.76\%) & 11.56\% (3.38\%) & 11.56\% (3.36\%) & 6.27\% (1.95\%) & 6.37\% (1.85\%) & 7.86\% (2.91\%) \\
DeepONet-C & 2.48\% (0.78\%) & 11.03\% (3.73\%) & 11.66\% (3.67\%) & 5.02\% (1.51\%) & 5.52\% (1.58\%) & 2.52\% (0.76\%) \\
MIONet & 3.49\% (1.00\%) & 36.36\% (16.85\%) & 37.00\% (16.77\%) & 5.43\% (1.74\%) & 6.60\% (2.50\%) & 3.51\% (0.95\%) \\
MNO & 1.77\% (0.66\%) & 11.26\% (3.77\%) & 11.43\% (3.66\%) & {\bf 4.42\% (1.82\%)} & {\bf 5.00\% (1.82\%)} & 1.73\% (0.61\%) \\\hline
\end{tabular}
\caption{Product-space learning: performance comparison on parametric diffusion-reaction equation. We report mean relative errors with standard deviations in parentheses on in-distribution and out-of-distribution datasets.
}
\label{tab:framework2_parDR}
\end{table}

\begin{figure}[H]
    \centering
    \includegraphics[width=\textwidth]{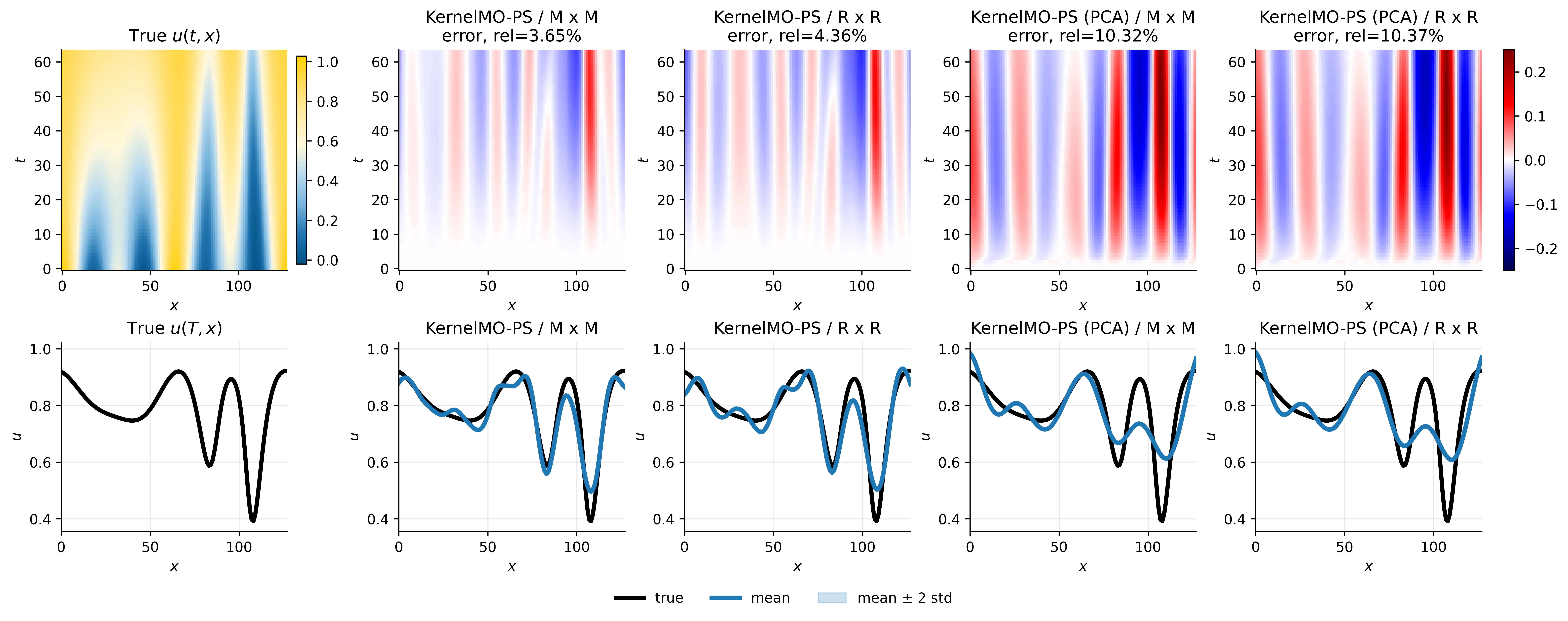}
    \caption{Product-space learning: qualitative prediction and uncertainty comparison for the parametric diffusion-reaction equation on the OOD\_par\_var dataset. The first row shows a reference solution and signed prediction errors for four kernel variants; the relative error for the displayed trajectory is reported in each error-figure title. The second row shows the final-time solution, predictive mean, and an uncertainty band of $\pm 2$ predictive standard deviations. }
    \label{fig:qualitative_framework2_product_param_diffreac_ood_ood_par_var_h5}
\end{figure}

\begin{remark}[Hyperparameter selection and out-of-distribution performance]
\label{rem:hyperparameter-ood}
The out-of-distribution performance of the proposed kernel methods can be
sensitive to the choice of kernel hyperparameters. In all experiments, we
select these hyperparameters using only in-distribution validation data and
do not tune them separately for the OOD test cases. Consequently, a
hyperparameter choice that yields strong in-distribution performance need
not be optimal under distribution shift. Table~\ref{tab:parameter_DR}
illustrates this effect for \KernelMOPS with an RBF kernel on the parametric
diffusion--reaction problem. In particular, the length scale selected based
on in-distribution performance can lead to substantially larger errors for
some OOD shifts, whereas other length scales yield considerably better OOD
performance despite being less accurate in distribution.
\end{remark}

\begin{table}[!htbp]
\centering
\tiny
\begin{tabular}{lcccccc}
Length-scale & Test & OOD\_init & OOD\_par\_init & OOD\_par\_kernel & OOD\_par\_scale & OOD\_par\_var \\\hline
 1 & 5.43\% (3.40\%) & 78.10\% (23.07\%) & 78.11\% (23.05\%) & 4.44\% (1.88\%) & 4.84\% (1.87\%) & 3.81\% (2.21\%) \\
 10 & 0.75\% (0.44\%) & 127.42\% (47.76\%) & 188.16\% (88.56\%) & 105.53\% (50.74\%) & 157.30\% (77.91\%) & 0.76\% (0.45\%) \\
 100 & 1.51\% (0.68\%) & 251.19\% (158.00\%) & 458.95\% (284.88\%) & 276.53\% (146.44\%) & 408.37\% (227.92\%) & 1.56\% (0.70\%) \\
 1000 & 3.16\% (1.21\%) & 516.14\% (344.66\%) & 937.77\% (641.70\%) & 575.36\% (332.08\%) & 811.18\% (501.40\%) & 3.29\% (1.23\%)\\\hline
\end{tabular}
\caption{Product-space learning: performance of \KernelMOPS with an RBF kernel on the parametric diffusion--reaction equation for different length scales. The first column reports the length scale of the kernel \(k_U\) acting on the initial conditions, while the remaining columns report the corresponding in-distribution and OOD prediction errors.}
\label{tab:parameter_DR}
\end{table}

\subsubsection{Parametric Wave Equation}

We consider the following parametric wave equation:
\begin{equation*}
\begin{aligned}
u_{tt} &= \alpha^2(t) u_{xx}, \quad (t,x)\in[0,2]\times[0,2] \\
u(0,x) &= u_0(x), \\
u_t(0,x) &= 0, \\
u(t,0) &= u(t,2),
\end{aligned}
\end{equation*}
where the time-dependent parametric function $\alpha(t)$ is drawn from a Gaussian random process with RBF kernel (length scale = 1) and with variance 1. 
The parametric function $\alpha(t)$ is evaluated at 64 sensor points corresponding to the boundaries of spaced cells, $\{t_i^b\}_{i=1}^{64}$, and the resulting values $\alpha(t_i^b)$ are encoded within the operator-valued learning and product-space learning methods.

\paragraph{Operator-Valued Learning.}

The mean relative errors and standard deviations are summarized in Table~\ref{tab:framework1_par_wave}, while the implementation details and hyperparameter choices are summarized in Table~\ref{tab:parametricWaveOV}. Overall, the proposed kernel-based methods achieve the lowest prediction errors on the in-distribution test set, with the best performance obtained by \KernelMOOV\ (PCA) / R. Compared with the best-performing neural operator baseline, the proposed method improves the prediction accuracy by more than one order of magnitude on the in-distribution test set (from $17.83\%$ for DeepONet-C to $1.59\%$ for \KernelMOOV\ (PCA) / R). On the OOD\_var dataset, the proposed methods also substantially outperform all neural operator baselines, achieving the lowest prediction error of $1.29\%$ compared with $17.11\%$ for DeepONet-C. By contrast, all methods experience a significant degradation in performance on the OOD\_matern and OOD\_scale datasets, indicating that these distribution shifts are particularly challenging for this PDE. This behavior differs from that observed for the parametric diffusion--reaction equation (Table~\ref{tab:framework1_parDR}), demonstrating that out-of-distribution generalization depends strongly on the underlying PDE. Applying PCA substantially reduces the dimensionality of the learning problem while consistently improving the out-of-distribution performance of the proposed kernel-based methods.

\begin{table}[H]
\centering
\small
\begin{tabular}{lcccc}
\hline
Method / kernel & Test & OOD\_matern & OOD\_scale & OOD\_var \\
\hline
\textbf{KernelMO-OV / M} & 2.63\% (5.86\%) & 56.69\% (17.61\%) & 71.93\% (13.38\%) & 2.05\% (3.22\%) \\
\textbf{KernelMO-OV / R} & 9.37\% (25.42\%) & 209.27\% (247.77\%) & 117.14\% (69.91\%) & 6.65\% (10.34\%) \\
\textbf{KernelMO-OV (PCA) / M} & 2.50\% (4.42\%) & 32.90\% (16.17\%) & 43.27\% (20.56\%) & 1.98\% (2.33\%) \\
\textbf{KernelMO-OV (PCA) / R} & {\bf 1.59\% (2.34\%)} & 31.57\% (15.71\%) & 43.60\% (20.86\%) & {\bf 1.29\% (1.04\%)} \\
\textbf{KernelMO-PS / M x M} & 2.63\% (5.86\%) & 56.69\% (17.61\%) & 71.93\% (13.38\%) & 2.05\% (3.22\%) \\
\textbf{KernelMO-PS / R x R} & 2.37\% (5.13\%) & 66.05\% (17.44\%) & 79.51\% (13.45\%) & 1.68\% (2.29\%) \\
\textbf{KernelMO-PS (PCA) / M x M} & 2.50\% (4.42\%) & 32.27\% (16.31\%) & 42.40\% (21.19\%) & 1.98\% (2.33\%) \\
\textbf{KernelMO-PS (PCA) / R x R} & 1.64\% (2.46\%) & {\bf 30.42\% (15.96\%)} & 42.36\% (21.68\%) & 1.31\% (1.05\%) \\\hline
DeepONet-C & 17.83\% (7.29\%) & 35.91\% (17.51\%) & {\bf 41.90\% (23.91\%)} & 17.11\% (5.98\%) \\
MIONet & 19.08\% (8.44\%) & 67.88\% (20.60\%) & 70.41\% (20.11\%) & 18.13\% (6.75\%) \\
MNO & 19.46\% (8.38\%) & 40.09\% (19.75\%) & 46.28\% (24.27\%) & 18.60\% (7.19\%) \\\hline
\end{tabular}
\caption{Operator-valued learning: performance comparison on the parametric wave equation. We report mean relative errors with standard deviations in parentheses on in-distribution and out-of-distribution datasets.
}
\label{tab:framework1_par_wave}
\end{table}

\begin{figure}[H]
    \centering
    \includegraphics[width=\textwidth]{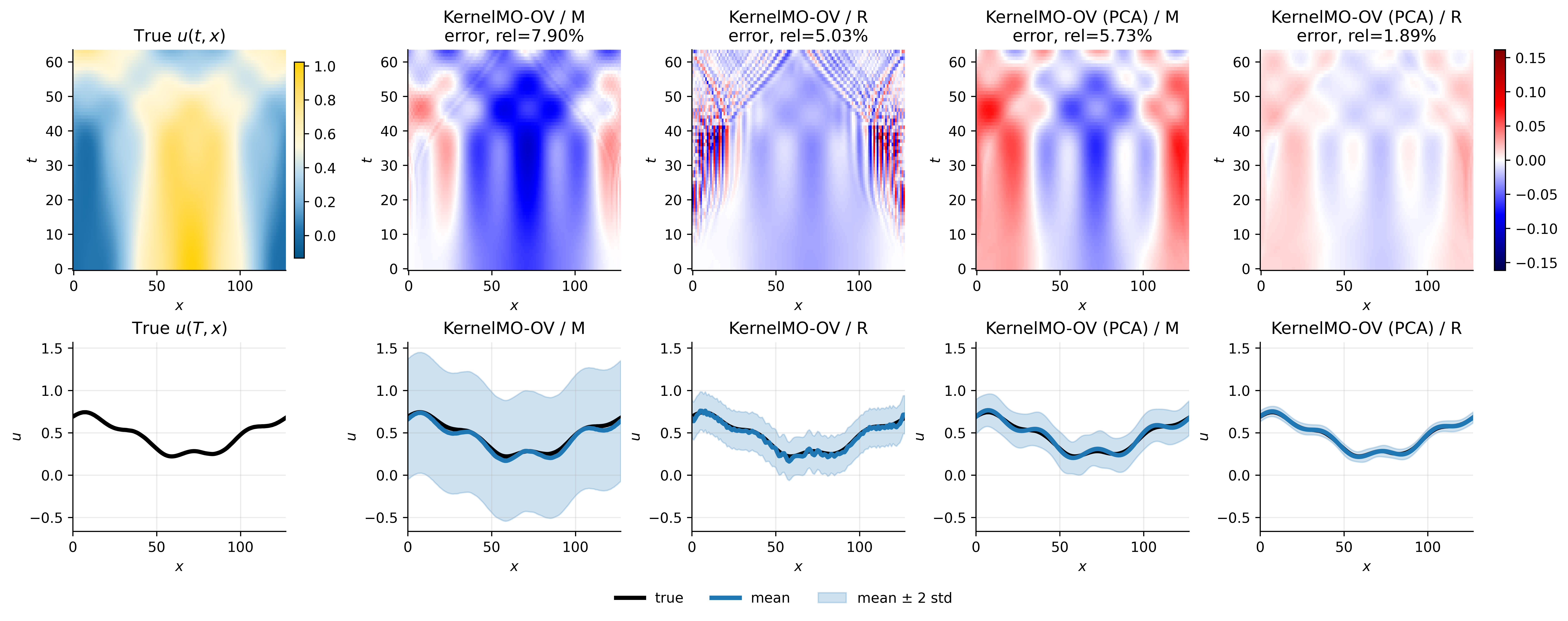}
    \caption{Operator-valued learning: qualitative prediction and uncertainty comparison for the parametric wave equation on the OOD\_var dataset. The first row shows a reference solution and signed prediction errors for four kernel variants; the relative error for the displayed trajectory is reported in each error-figure title. The second row shows the final-time solution, predictive mean, and an uncertainty band of $\pm 2$ predictive standard deviations. }
    \label{fig:qualitative_framework1_operator_param_wave_ood_ood_var_h5}
\end{figure}

\paragraph{Product-Space Learning.}

The mean relative errors and standard deviations are summarized in Table~\ref{tab:framework2_par_wave}, while the implementation details and hyperparameter choices are summarized in Table~\ref{tab:parametricWavePS}. Overall, the proposed PCA-based product-kernel methods achieve the lowest prediction errors across both the in-distribution and out-of-distribution datasets. In particular, \KernelMOPS\ (PCA) / R $\times$ R attains the lowest in-distribution test error of $2.13\%$, compared with $18.20\%$ for the best-performing neural operator baseline DeepONet-C, corresponding to an improvement of nearly one order of magnitude. Moreover, \KernelMOPS\ (PCA) / R $\times$ R achieves the best performance on four of the five out-of-distribution datasets, while \KernelMOPS\ (PCA) / M $\times$ M performs best on the remaining one. By contrast, although the original product-kernel methods achieve competitive in-distribution prediction errors, their performance deteriorates substantially under several distribution shifts, particularly on the OOD\_par\_init, OOD\_par\_kernel, and OOD\_par\_scale datasets. These results demonstrate that combining the product-kernel formulation with PCA substantially improves the robustness and out-of-distribution generalization of the proposed approach.

\begin{table}[H]
\centering
\tiny
\begin{tabular}{lcccccc}
\hline
Method / kernel & Test & OOD\_init & OOD\_par\_init & OOD\_par\_kernel & OOD\_par\_scale & OOD\_par\_var \\
\hline
KernelO / M & 83.16\% (40.26\%) & 80.02\% (26.03\%) & 84.30\% (28.92\%) & 74.34\% (38.37\%) & 72.95\% (37.89\%) & 75.26\% (39.70\%) \\
KernelO / R & 59.29\% (21.32\%) & 13473.96\% (6245.11\%) & 15134.50\% (6361.41\%) & 50.63\% (19.00\%) & 52.13\% (20.95\%) & 55.35\% (20.30\%) \\
KernelO (PCA) / M & 83.14\% (40.25\%) & 124.30\% (39.58\%) & 135.55\% (41.91\%) & 74.32\% (38.36\%) & 72.93\% (37.89\%) & 75.24\% (39.69\%) \\
KernelO (PCA) / R & 59.28\% (21.32\%) & 1415.03\% (1126.69\%) & 1585.64\% (1200.96\%) & 50.62\% (19.00\%) & 52.13\% (20.95\%) & 55.35\% (20.30\%) \\
\textbf{KernelMO-PS / M x M} & 3.33\% (6.75\%) & 38.15\% (22.19\%) & 97.65\% (0.34\%) & 60.48\% (17.09\%) & 76.32\% (11.83\%) & 2.67\% (4.98\%) \\
\textbf{KernelMO-PS / R x R} & 3.52\% (6.25\%) & 37.89\% (22.21\%) & 99.55\% (0.21\%) & 68.31\% (19.36\%) & 83.82\% (12.27\%) & 2.88\% (5.35\%) \\
\textbf{KernelMO-PS (PCA) / M x M} & 2.84\% (4.51\%) & 37.72\% (22.24\%) & 46.29\% (5.58\%) & 32.06\% (17.45\%) & {\bf 41.43\% (19.02\%)} & 2.37\% (3.53\%) \\
\textbf{KernelMO-PS (PCA) / R x R} & {\bf 2.13\% (3.38\%)} & {\bf 37.47\% (22.24\%)} & {\bf 46.20\% (5.55\%)} & {\bf 31.46\% (17.10\%)} & 41.73\% (18.62\%) & {\bf 1.69\% (2.00\%)} \\\hline
DeepONet & 56.93\% (21.72\%) & 54.88\% (19.86\%) & 55.52\% (14.22\%) & 48.72\% (19.18\%) & 50.37\% (20.99\%) & 53.70\% (21.09\%) \\
DeepONet-C & 18.20\% (7.00\%) & 42.10\% (20.72\%) & 108.17\% (14.57\%) & 35.87\% (17.08\%) & 42.41\% (19.92\%) & 17.43\% (6.70\%) \\
MIONet & 20.54\% (10.85\%) & 45.40\% (20.06\%) & 170.46\% (16.33\%) & 97.25\% (28.43\%) & 110.74\% (28.18\%) & 19.07\% (10.51\%) \\
MNO & 25.04\% (11.75\%) & 43.36\% (20.27\%) & 73.84\% (19.56\%) & 46.96\% (27.01\%) & 53.72\% (28.81\%) & 22.58\% (10.79\%) \\\hline
\end{tabular}
\caption{Product-space learning: performance comparison on the parametric wave equation. We report mean relative errors with standard deviations in parentheses on in-distribution and out-of-distribution datasets.
}
\label{tab:framework2_par_wave}
\end{table}

\begin{figure}[H]
    \centering
    \includegraphics[width=\textwidth]{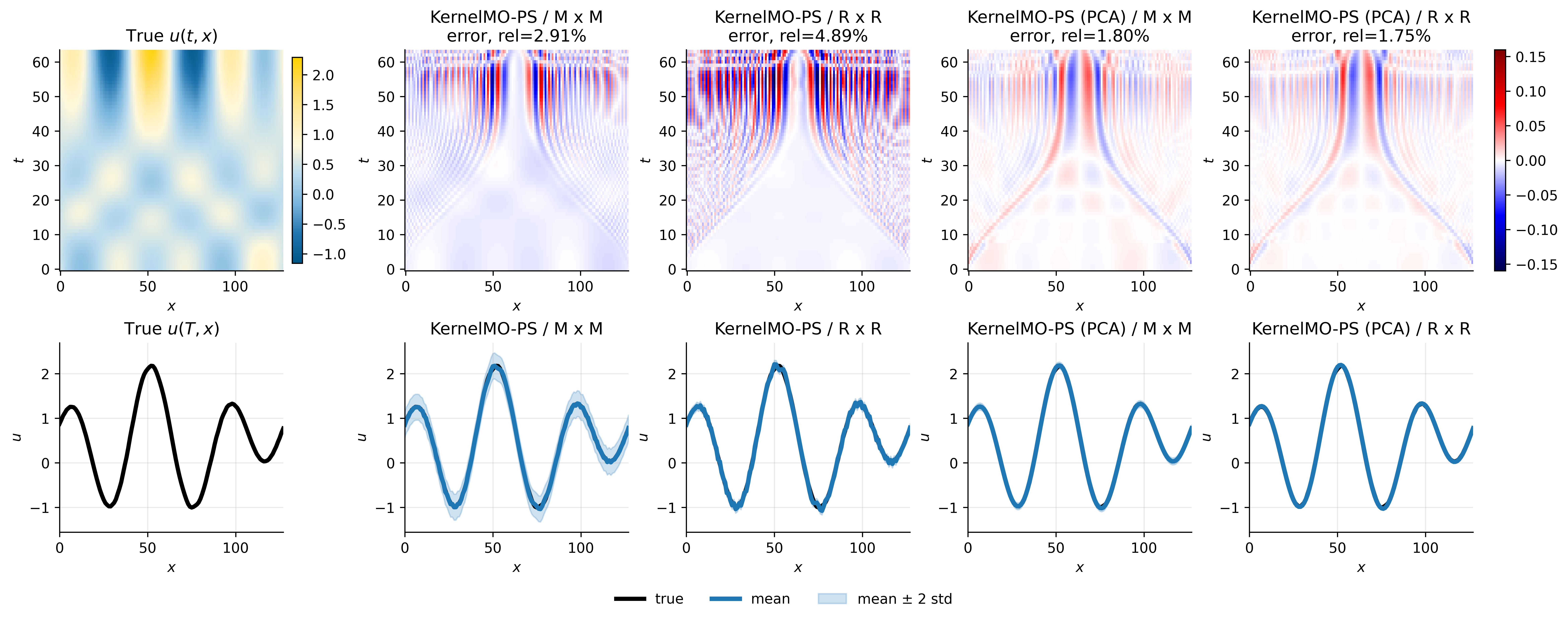}
    \caption{Product-space learning: qualitative prediction and uncertainty comparison for the parametric wave equation on the OOD\_par\_var dataset. The first row shows a reference solution and signed prediction errors for four kernel variants; the relative error for the displayed trajectory is reported in each error-figure title. The second row shows the final-time solution, predictive mean, and an uncertainty band of $\pm 2$ predictive standard deviations. }
    \label{fig:qualitative_framework2_product_param_wave_ood_ood_par_var_h5}
\end{figure}

\subsection{Efficiency}

In this section, we compare the computational efficiency of the proposed kernel-based methods with the neural network-based baselines in terms of training time and prediction time. 
For brevity, we report the results for the conservation law problem only, as similar trends are observed across the other PDE models. 

Table~\ref{tab:timing-f1} summarizes the computational cost of the operator-valued learning problem shown in Fig.~\ref{fig:cd-operator}. Although both \KernelMOOV and \KernelMOPS rely on kernel regression, \KernelMOOV is substantially more efficient during training. As explained in Section \ref{sec:multiple} (see also Table \ref{tab:mol-comparison}), the reason is that its kernel matrix has dimensions $n_\alpha \times n_\alpha$ and therefore contains $320^2 = 102400$ entries, whereas the kernel matrix of \KernelMOPS\ has dimensions $(n_\alpha n_u)\times(n_\alpha n_u)$ and contains $320^2 20^2 = 40960000$ entries. Thus, without PCA, \KernelMOOV\ trains approximately
$13$--$14\times$ faster than \KernelMOPS. For the PCA-based variants,
the corresponding speedup is approximately $48$--$55\times$.

Compared with the neural operator baselines, the kernel methods are also considerably more efficient. The fastest \KernelMOOV\ variants require less than $0.5$ seconds for training, compared with $158$--$250$ seconds for the neural operators, corresponding to speedups of more than two orders of magnitude. At inference time, \KernelMOOV\ also provides the fastest predictions, requiring approximately $0.04$\,ms per sample. This corresponds to prediction times that are approximately $4$--$9\times$ faster than the neural operator baselines. With PCA, the prediction time is further reduced to approximately $0.004$\,ms per sample, yielding speedups of roughly $40$--$80\times$ over the neural operators. Although \KernelMOPS\ is slower due to its larger kernel matrix, it remains competitive with the neural operator baselines while providing comparable or better predictive accuracy.

Applying PCA yields an additional reduction in computational cost. For \KernelMOOV, PCA decreases the training time from approximately $0.47$ s to $0.036$ s and reduces the prediction time by almost an order of magnitude. Similar improvements are observed for \KernelMOPS. These computational savings are obtained with only a modest reduction in predictive accuracy, indicating that PCA provides an attractive trade-off between computational efficiency and accuracy.

\begin{table}[H]
\centering
\scriptsize
\begin{tabular}{lccc}
\hline
Method / kernel & Train time (s) & Test pred. (ms/sample) \\
\hline
KernelMO-OV / M & 0.472 & 0.0375 \\
KernelMO-OV / R & 0.464 & 0.0377 \\
KernelMO-OV(PCA) / M & 0.0357 & 0.00407 \\
KernelMO-OV (PCA) / R & 0.0361 & 0.00412 \\
KernelMO-PS / M x M & 6.56 & 0.538 \\
KernelMO-PS / R x R & 6.24 & 0.497 \\
KernelMO-PS (PCA) / M x M & 1.96 & 0.24 \\
KernelMO-PS (PCA) / R x R & 1.72 & 0.19 \\
DeepONet-C & 250.44 & 0.334 \\
MIONet & 157.95 & 0.17 \\
MNO & 180.54 & 0.16 \\
\hline
\end{tabular}
\caption{Operator-valued learning: Training time and prediction time per evaluated sample.}
\label{tab:timing-f1}
\end{table}

These observations are also illustrated in Fig.~\ref{fig:paretoOV}, which jointly displays predictive accuracy and computational cost. The figure highlights that \KernelMOOV\ consistently combines the lowest prediction errors with the shortest training and inference times among all methods considered. While the PCA variants incur a modest increase in prediction error, they achieve substantial additional reductions in computational cost. In contrast, the neural operator baselines require significantly longer training times while providing lower predictive accuracy, whereas \KernelMOPS\ offers a competitive compromise between computational efficiency and predictive performance despite its larger kernel matrix.

\begin{figure}[H]
\centering
\begin{subfigure}{0.49\textwidth}
\centering
\begin{tikzpicture}
\begin{semilogxaxis}[
width=\linewidth,height=6.2cm,
xlabel={Training time (s)},ylabel={Mean relative error (\%)},
xmin=0.02,xmax=400,ymin=0,ymax=5,
grid=major,clip=false,mark size=2.7pt,
every axis plot/.append style={thick},
legend columns=4,legend to name=paretolegendOV,
legend style={font=\scriptsize,draw=none,/tikz/every even column/.append style={column sep=0.8em},row sep=0.2em}]

\addplot+[only marks,mark=*,blue] coordinates{(0.472,0.01) (0.464,0.02)};
\addlegendentry{KernelMO-OV}
\node[fill=white,rounded corners=1pt,inner sep=1.2pt,font=\scriptsize,text=blue]
at (axis cs:0.472,0.01) [anchor=west,xshift=4pt,yshift=-1pt] {M};
\node[fill=white,rounded corners=1pt,inner sep=1.2pt,font=\scriptsize,text=blue]
at (axis cs:0.464,0.02) [anchor=west,xshift=4pt,yshift=5pt] {R};

\addplot+[only marks,mark=square*,blue!60] coordinates{(0.0357,1.77) (0.0361,1.77)};
\addlegendentry{KernelMO-OV (PCA)}
\node[fill=white,rounded corners=1pt,inner sep=1.2pt,font=\scriptsize,text=blue!70]
at (axis cs:0.0357,1.77) [anchor=east,xshift=-4pt,yshift=-5pt] {M};
\node[fill=white,rounded corners=1pt,inner sep=1.2pt,font=\scriptsize,text=blue!70]
at (axis cs:0.0361,1.77) [anchor=west,xshift=4pt,yshift=5pt] {R};

\addplot+[only marks,mark=triangle*,red] coordinates{(6.56,0.01) (6.24,1.46)};
\addlegendentry{KernelMO-PS}
\node[fill=white,rounded corners=1pt,inner sep=1.2pt,font=\scriptsize,text=red]
at (axis cs:6.56,0.01) [anchor=west,xshift=4pt,yshift=3pt] {$M\times M$};
\node[fill=white,rounded corners=1pt,inner sep=1.2pt,font=\scriptsize,text=red]
at (axis cs:6.24,1.46) [anchor=west,xshift=4pt] {$R\times R$};

\addplot+[only marks,mark=diamond*,red!70] coordinates{(1.96,1.77) (1.72,2.34)};
\addlegendentry{KernelMO-PS (PCA)}
\node[fill=white,rounded corners=1pt,inner sep=1.2pt,font=\scriptsize,text=red!70]
at (axis cs:1.96,1.77) [anchor=west,xshift=4pt] {$M\times M$};
\node[fill=white,rounded corners=1pt,inner sep=1.2pt,font=\scriptsize,text=red!70]
at (axis cs:1.72,2.34) [anchor=west,xshift=4pt] {$R\times R$};

\addplot+[only marks,mark=o,black] coordinates{(250.44,3.96)};
\addlegendentry{DeepONet-C}
\addplot+[only marks,mark=triangle,black] coordinates{(157.95,1.23)};
\addlegendentry{MIONet}
\addplot+[only marks,mark=square,black] coordinates{(180.54,1.84)};
\addlegendentry{MNO}
\end{semilogxaxis}
\end{tikzpicture}
\caption{Training time.}
\end{subfigure}
\hfill
\begin{subfigure}{0.49\textwidth}
\centering
\begin{tikzpicture}
\begin{semilogxaxis}[
width=\linewidth,height=6.2cm,
xlabel={Inference time (ms/sample)},ylabel={Mean relative error (\%)},
xmin=0.003,xmax=1,ymin=0,ymax=5,
grid=major,clip=false,mark size=2.7pt,
every axis plot/.append style={thick}]

\addplot+[only marks,mark=*,blue] coordinates{(0.0375,0.01) (0.0377,0.02)};
\node[fill=white,rounded corners=1pt,inner sep=1.2pt,font=\scriptsize,text=blue]
at (axis cs:0.0375,0.01) [anchor=west,xshift=4pt,yshift=-1pt] {M};
\node[fill=white,rounded corners=1pt,inner sep=1.2pt,font=\scriptsize,text=blue]
at (axis cs:0.0377,0.02) [anchor=west,xshift=4pt,yshift=5pt] {R};

\addplot+[only marks,mark=square*,blue!60] coordinates{(0.00407,1.77) (0.00412,1.77)};
\node[fill=white,rounded corners=1pt,inner sep=1.2pt,font=\scriptsize,text=blue!70]
at (axis cs:0.00407,1.77) [anchor=east,xshift=-4pt,yshift=-5pt] {M};
\node[fill=white,rounded corners=1pt,inner sep=1.2pt,font=\scriptsize,text=blue!70]
at (axis cs:0.00412,1.77) [anchor=west,xshift=4pt,yshift=5pt] {R};

\addplot+[only marks,mark=triangle*,red] coordinates{(0.538,0.01) (0.497,1.46)};
\node[fill=white,rounded corners=1pt,inner sep=1.2pt,font=\scriptsize,text=red]
at (axis cs:0.538,0.01) [anchor=east,xshift=-4pt,yshift=3pt] {$M\times M$};
\node[fill=white,rounded corners=1pt,inner sep=1.2pt,font=\scriptsize,text=red]
at (axis cs:0.497,1.46) [anchor=east,xshift=-4pt] {$R\times R$};

\addplot+[only marks,mark=diamond*,red!70] coordinates{(0.24,1.77) (0.19,2.34)};
\node[fill=white,rounded corners=1pt,inner sep=1.2pt,font=\scriptsize,text=red!70]
at (axis cs:0.24,1.77) [anchor=west,xshift=4pt] {$M\times M$};
\node[fill=white,rounded corners=1pt,inner sep=1.2pt,font=\scriptsize,text=red!70]
at (axis cs:0.19,2.34) [anchor=west,xshift=4pt] {$R\times R$};

\addplot+[only marks,mark=o,black] coordinates{(0.334,3.96)};
\addplot+[only marks,mark=triangle,black] coordinates{(0.17,1.23)};
\addplot+[only marks,mark=square,black] coordinates{(0.16,1.84)};
\end{semilogxaxis}
\end{tikzpicture}
\caption{Inference time.}
\end{subfigure}

\vspace{0.8ex}
\pgfplotslegendfromname{paretolegendOV}

\caption{Operator-valued learning: trade-off between predictive accuracy and computational cost on the conservation-law benchmark. Labels beside kernel points indicate the kernel choice: M denotes Mat\'ern and R denotes RBF.}
\label{fig:paretoOV}
\end{figure}

Table~\ref{tab:timing-f2} summarizes the computational cost of the product-space learning formulation shown in Fig.~\ref{fig:cd-product}. As in the operator-valued setting, the proposed kernel-based methods are substantially more computationally efficient than the neural operator baselines. The full-dimensional kernel methods require only $15$--$16$ seconds for training, compared with $227$--$397$ seconds for the neural operators, corresponding to speedups of approximately $15$--$25\times$. Applying PCA further reduces the training time to approximately $4$--$5$ seconds, yielding an additional factor of about three. At inference time, the prediction cost of the kernel methods is comparable to that of the neural operator baselines. The full-dimensional methods require approximately $0.8$ ms per sample, while the PCA variants reduce this to approximately $0.35$--$0.42$ ms per sample, which is comparable to MIONet and MNO and substantially faster than DeepONet-C. The proposed kernel methods are particularly attractive in applications where both training and inference efficiency are important, such as settings requiring frequent retraining or repeated deployment on new operator families.

One observation is that \KernelMOPS\ and the corresponding single-operator baseline KernelO exhibit nearly identical training and prediction times, both with and without PCA. This is expected, since both methods solve kernel regression problems of the same size and therefore have essentially identical computational complexity. However, as demonstrated throughout the experimental Section \ref{sec:performance}, \KernelMOPS\ consistently achieves substantially higher predictive accuracy than KernelO across all benchmarks. These results demonstrate that the proposed product-kernel framework provides a favorable trade-off between computational efficiency and predictive performance, improving accuracy in multiple operator learning without increasing computational cost.

\begin{table}[H]
\centering
\scriptsize
\begin{tabular}{lcc}
\hline
Method / kernel & Train time (s) & Test pred. (ms/sample) \\
\hline
KernelMO-PS (PCA) / M x M & 5.19 & 0.415  \\
KernelMO-PS (PCA) / R x R & 4.7 & 0.365 \\
KernelMO-PS / M x M & 15.5 & 0.874  \\
KernelMO-PS / R x R & 14.8 & 0.829  \\
KernelO (PCA) / M & 4.06 & 0.339  \\
KernelO (PCA) / R & 3.9 & 0.323  \\
KernelO / M & 14 & 0.815  \\
KernelO / R & 14 & 0.778  \\
DeepONet & 226.79 & 0.23  \\
DeepONet-C & 397.46 & 0.83  \\
MIONet & 245.41 & 0.41  \\
MNO & 277.59 & 0.38  \\\hline
\end{tabular}
\caption{Product-space learning: Training time and prediction time per evaluated sample.}
\label{tab:timing-f2}
\end{table}

These observations are further illustrated in Fig.~\ref{fig:paretoPS}, which jointly displays predictive accuracy and computational cost. The figure shows that the proposed product-kernel methods consistently achieve a favorable balance between accuracy and efficiency. In particular, \KernelMOPS\ occupies a similar computational regime to the corresponding single-operator baseline KernelO while consistently attaining substantially lower prediction errors. The PCA variants further reduce both training and inference times while maintaining competitive predictive performance. Overall, the figure highlights that the proposed multiple-operator kernel construction improves predictive accuracy without increasing computational cost, while remaining significantly more efficient to train than the neural operator baselines.

\begin{figure}[H]
\centering
\begin{subfigure}{0.49\textwidth}
\centering
\begin{tikzpicture}
\begin{semilogxaxis}[
width=\linewidth,height=6.2cm,
xlabel={Training time (s)},ylabel={Mean relative error (\%)},
xmin=3,xmax=500,ymin=0,ymax=8,
grid=major,clip=false,mark size=2.7pt,
every axis plot/.append style={thick},
legend columns=4,legend to name=paretolegendPS,
legend style={font=\scriptsize,draw=none,/tikz/every even column/.append style={column sep=0.8em},row sep=0.2em}]

\addplot+[only marks,mark=triangle*,red] coordinates{(15.5,1.76) (14.8,2.97)};
\addlegendentry{KernelMO-PS}
\node[fill=white,rounded corners=1pt,inner sep=1.2pt,font=\scriptsize,text=red]
at (axis cs:15.5,1.76) [anchor=west,xshift=4pt] {$M\times M$};
\node[fill=white,rounded corners=1pt,inner sep=1.2pt,font=\scriptsize,text=red]
at (axis cs:14.8,2.97) [anchor=west,xshift=4pt] {$R\times R$};

\addplot+[only marks,mark=diamond*,red!70] coordinates{(5.19,2.96) (4.70,3.79)};
\addlegendentry{KernelMO-PS (PCA)}
\node[fill=white,rounded corners=1pt,inner sep=1.2pt,font=\scriptsize,text=red!70]
at (axis cs:5.19,2.96) [anchor=west,xshift=4pt] {$M\times M$};
\node[fill=white,rounded corners=1pt,inner sep=1.2pt,font=\scriptsize,text=red!70]
at (axis cs:4.70,3.79) [anchor=west,xshift=4pt] {$R\times R$};

\addplot+[only marks,mark=*,blue] coordinates{(14.0,6.67) (14.0,4.76)};
\addlegendentry{KernelO}
\node[fill=white,rounded corners=1pt,inner sep=1.2pt,font=\scriptsize,text=blue]
at (axis cs:14.0,6.67) [anchor=west,xshift=4pt] {M};
\node[fill=white,rounded corners=1pt,inner sep=1.2pt,font=\scriptsize,text=blue]
at (axis cs:14.0,4.76) [anchor=west,xshift=4pt] {R};

\addplot+[only marks,mark=square*,blue!60] coordinates{(4.06,6.97) (3.90,5.34)};
\addlegendentry{KernelO (PCA)}
\node[fill=white,rounded corners=1pt,inner sep=1.2pt,font=\scriptsize,text=blue!70]
at (axis cs:4.06,6.97) [anchor=west,xshift=4pt] {M};
\node[fill=white,rounded corners=1pt,inner sep=1.2pt,font=\scriptsize,text=blue!70]
at (axis cs:3.90,5.34) [anchor=west,xshift=4pt] {R};

\addplot+[only marks,mark=o,black] coordinates{(226.79,6.77)};
\addlegendentry{DeepONet}
\addplot+[only marks,mark=pentagon*,black] coordinates{(397.46,4.42)};
\addlegendentry{DeepONet-C}
\addplot+[only marks,mark=triangle,black] coordinates{(245.41,4.32)};
\addlegendentry{MIONet}
\addplot+[only marks,mark=square,black] coordinates{(277.59,3.29)};
\addlegendentry{MNO}
\end{semilogxaxis}
\end{tikzpicture}
\caption{Training time.}
\end{subfigure}
\hfill
\begin{subfigure}{0.49\textwidth}
\centering
\begin{tikzpicture}
\begin{semilogxaxis}[
width=\linewidth,height=6.2cm,
xlabel={Inference time (ms/sample)},ylabel={Mean relative error (\%)},
xmin=0.2,xmax=1,ymin=0,ymax=8,
grid=major,clip=false,mark size=2.7pt,
every axis plot/.append style={thick}]

\addplot+[only marks,mark=triangle*,red] coordinates{(0.874,1.76) (0.829,2.97)};
\node[fill=white,rounded corners=1pt,inner sep=1.2pt,font=\scriptsize,text=red]
at (axis cs:0.874,1.76) [anchor=east,xshift=-4pt] {$M\times M$};
\node[fill=white,rounded corners=1pt,inner sep=1.2pt,font=\scriptsize,text=red]
at (axis cs:0.829,2.97) [anchor=east,xshift=-4pt] {$R\times R$};

\addplot+[only marks,mark=diamond*,red!70] coordinates{(0.415,2.96) (0.365,3.79)};
\node[fill=white,rounded corners=1pt,inner sep=1.2pt,font=\scriptsize,text=red!70]
at (axis cs:0.415,2.96) [anchor=west,xshift=4pt] {$M\times M$};
\node[fill=white,rounded corners=1pt,inner sep=1.2pt,font=\scriptsize,text=red!70]
at (axis cs:0.365,3.79) [anchor=west,xshift=4pt] {$R\times R$};

\addplot+[only marks,mark=*,blue] coordinates{(0.815,6.67) (0.778,4.76)};
\node[fill=white,rounded corners=1pt,inner sep=1.2pt,font=\scriptsize,text=blue]
at (axis cs:0.815,6.67) [anchor=east,xshift=-4pt] {M};
\node[fill=white,rounded corners=1pt,inner sep=1.2pt,font=\scriptsize,text=blue]
at (axis cs:0.778,4.76) [anchor=east,xshift=-4pt] {R};

\addplot+[only marks,mark=square*,blue!60] coordinates{(0.339,6.97) (0.323,5.34)};
\node[fill=white,rounded corners=1pt,inner sep=1.2pt,font=\scriptsize,text=blue!70]
at (axis cs:0.339,6.97) [anchor=west,xshift=4pt] {M};
\node[fill=white,rounded corners=1pt,inner sep=1.2pt,font=\scriptsize,text=blue!70]
at (axis cs:0.323,5.34) [anchor=west,xshift=4pt] {R};

\addplot+[only marks,mark=o,black] coordinates{(0.23,6.77)};
\addplot+[only marks,mark=pentagon*,black] coordinates{(0.83,4.42)};
\addplot+[only marks,mark=triangle,black] coordinates{(0.41,4.32)};
\addplot+[only marks,mark=square,black] coordinates{(0.38,3.29)};
\end{semilogxaxis}
\end{tikzpicture}
\caption{Inference time.}
\end{subfigure}

\vspace{0.8ex}
\pgfplotslegendfromname{paretolegendPS}

\caption{Product-space learning: trade-off between predictive accuracy and computational cost on the conservation-law benchmark. Labels beside kernel points indicate the kernel choice: M denotes Mat\'ern and R denotes RBF.}
\label{fig:paretoPS}
\end{figure}

\section{Conclusion}

We have introduced a general kernel-based framework for learning maps between Hilbert spaces within an encoder--decoder architecture. By showing that kernels defined on the latent surrogate space induce corresponding kernels on the original input and output spaces, we established a rigorous theoretical connection between latent-space learning and learning in the original function spaces. This perspective leads to an encoder–decoder error decomposition and a general approximation theory for learning maps between products of function spaces, yielding approximation guarantees that scale favorably with the numbers of input and output tasks. As an application of the proposed framework, we developed two kernel formulations for multiple operator learning. The resulting methods inherit the theoretical guarantees of the general framework while exhibiting complementary computational characteristics. Our numerical experiments demonstrate that these approaches achieve competitive predictive performance while substantially reducing training and inference times compared with state-of-the-art neural operator architectures. 

More broadly, this work suggests that kernel methods constitute a viable alternative to deep neural surrogates for scientific machine learning. Importantly, the kernel learning approaches presented in this work have mathematical guarantees, are computational efficiency, and leverage limited data sampling. At the same time, the proposed encoder-decoder framework is not restricted to multiple operator learning and provides a unified perspective for kernel-based learning of general maps between function spaces.

Several directions for future work naturally arise. From a theoretical perspective, it would be of considerable interest to develop an approximation theory for learning maps whose inputs and outputs are themselves spaces of operators, rather than function spaces, and to establish corresponding guarantees when the encoders and decoders are learned from data. Another important direction is to extend the framework to settings in which the observation spaces are themselves infinite-dimensional, requiring both new theoretical foundations and practical learning algorithms. From a practical perspective, replacing the prescribed encoders and decoders by adaptive learned representations is a natural extension of the present framework. Another promising direction is to investigate the transferability of learned surrogates across different observation operators and to leverage the close connection between kernel methods and Gaussian processes for principled uncertainty quantification.

\section*{Acknowledgments}
This work was supported by NSF 2514157. The authors would like to thank Bamdad Hosseini for helpful discussions. 
\vspace{0.5em}

\bibliographystyle{plain}
\bibliography{references}{}

\newpage

\appendix 
\section{Proofs} \label{sec:proofs}

\subsection{Proofs of the Background Section} \label{sec:proof:back}

\begin{proof}[Proof of Theorem \ref{thm:minimumNorm}]
We first note that \(L^\ast\) exists because \(L:X\to Z\) is a bounded linear operator between Hilbert spaces \cite[Theorem 2.2]{conway2007course}.

\begin{enumerate}
    \item Since \(L\) is surjective, the affine
constraint set
\(
    \mathcal A_S:=\{x\in X:Lx=S\}
\)
is nonempty. If \(x_0\in \mathcal A_S\), then
\begin{equation} \label{eq:minimumNormRecoverty:eq1}
        \mathcal A_S=x_0+\ker L.
\end{equation}
If \(h\in\ker L\), then \(L(x_0+h)=S\), and conversely, if
\(x\in\mathcal A_S\), then \(L(x-x_0)=0\), so \(x-x_0\in\ker L\). Let $x \in \cA_S$ and decompose it as
\[
    x=x_\perp+x_{\ker},
    \qquad
    x_\perp\in(\ker L)^\perp,\quad x_{\ker}\in\ker L.
\]
Then, \(Lx_\perp=Lx=S\), so \(x_\perp \in \cA_S\), and, by \eqref{eq:minimumNormRecoverty:eq1}, every other feasible
point has the form \(x_\perp+h\) with \(h\in\ker L\). Since
\(x_\perp\perp h\),
\(
    \|x_\perp+h\|_X^2
    =
    \|x_\perp\|_X^2+\|h\|_X^2.
\)
Thus, the minimum-norm element of \(\mathcal A_S\), is uniquely defined by having $h = 0$ i.e. $\overline{x}(S)$ is orthogonal to \(\ker L\).

Since \(L\) is
surjective, the restricted operator
\[
    \widetilde L:(\ker L)^\perp\to Z,
    \qquad
    \widetilde Lx:=Lx,
\]
is a bounded linear bijection. It is injective because if
\(x\in(\ker L)^\perp\) and \(\widetilde Lx=0\), then
\(x\in \ker L\cap(\ker L)^\perp = \{0\}\). It is surjective because, for
any \(z\in Z\), surjectivity of \(L\) gives some \(x\in X\) with \(Lx=z\). If
\[
    x=x_\perp+x_{\ker},
    \qquad
    x_\perp\in(\ker L)^\perp,\quad x_{\ker }\in\ker L,
\]
then
\(
    Lx_\perp=Lx=z.
\)
Thus \(\widetilde L\) is bijective. 

By the bounded inverse theorem \cite[Corollary 2.7]{brezis2010functional}, we deduce that \(\widetilde L^{-1}:Z\to(\ker L)^\perp\) is
bounded. Hence there exists a constant \(C>0\) such that, for every \(z\in Z\),
there exists \(x_z\in(\ker L)^\perp\) satisfying
\[
    Lx_z=z,
    \qquad
    \|x_z\|_X\le C\|z\|_Z.
\]
Using this \(x_z\), we obtain
\[
    \|z\|_Z^2
    =
    \langle z,Lx_z\rangle_Z =
    \langle L^\ast z,x_z\rangle_X \le
    \|L^\ast z\|_X\|x_z\|_X \le C\|L^\ast z\|_X\|z\|_Z.
\]
Therefore, for $z \in Z$,
\(
    \|L^\ast z\|_X
    \ge
    C^{-1}\|z\|_Z,
\)
and it follows that
\[
    \langle LL^\ast z,z\rangle_Z =
    \langle L^\ast z,L^\ast z\rangle_X =
    \|L^\ast z\|_X^2 \ge
    C^{-2}\|z\|_Z^2.
\]
Thus, the bilinear form
\[
    a:Z\times Z \mapsto \bbR \qquad \text{given by} \qquad a(z,w):=\langle LL^\ast z,w\rangle_Z
\]
is coercive and bounded, since
\(
    |a(z,w)|
    =
    |\langle L^\ast z,L^\ast w\rangle_X|
    \le
    \|L^\ast\|^2_{\textrm{op}}\|z\|_Z\|w\|_Z = \|L\|^2_{\textrm{op}}\|z\|_Z\|w\|_Z.
\)

We apply the Lax--Milgram theorem \cite[Corollary 5.8]{brezis2010functional} and obtain that for any $F \in Z^*$, there exists a unique element $z \in Z$ such that \(
a(z,w) = F(w)
\)
for all $w \in Z$. By picking $F_{\overline{z}}(w) = \langle \overline{z}, w \rangle_Z$, this implies that \[
a(z,w) = \langle LL^\ast z,w\rangle_Z = \langle \overline{z}, w \rangle_Z
\]
for all $w \in Z$, or equivalently $LL^\ast z = \overline{z}$. 
This shows that \(LL^\ast:Z\to Z\) is a linear bijection and, by \cite[Corollary 2.7]{brezis2010functional}, therefore boundedly
invertible.

Finally, we set
\(
    \overline{x}(S)=L^\ast(LL^\ast)^{-1}S.
\)
Then, 
\(
    L\overline{x}(S)
    =
    LL^\ast(LL^\ast)^{-1}S
    =
    S,
\)
so \(\overline{x}(S) \in \cA_S \). Moreover, \(\overline{x}(S) \in \operatorname{ran}(L^\ast)\), and
\(\operatorname{ran}(L^\ast)\subseteq(\ker L)^\perp\), because for
\(h\in\ker L\) and \(z\in Z\),
\[
    \langle L^\ast z,h\rangle_X
    =
    \langle z,Lh\rangle_Z
    =
    0.
\]
This implies that \(\overline{x}(S)\) is feasible and orthogonal to \(\ker L\), and therefore it is the
unique minimum-norm solution. 
\item We start by proving \begin{equation}
    \label{eq:normal-equation-regularized-minimum-norm-formula}
    \overline{x}_\gamma(S)
    =
    (L^\ast L+\gamma I_X)^{-1}L^\ast S.
\end{equation} 
Define the functional
\[
    J_\gamma:X\to\bbR,
    \qquad
    J_\gamma(x)
    :=
    \|x\|_X^2+\gamma^{-1}\|Lx-S\|_Z^2.
\]
The functional \(J_\gamma\) is strictly convex, since \(x\mapsto\|x\|_X^2\) is
strictly convex. Hence \(J_\gamma\) has at most one minimizer.

We compute the first-order optimality condition. Let \(h\in X\). For
\(\varepsilon\in\bbR\),
\[
\begin{aligned}
    J_\gamma(x+\varepsilon h)
    &=
    \|x+\varepsilon h\|_X^2
    +
    \gamma^{-1}\|Lx+\varepsilon Lh-S\|_Z^2.
\end{aligned}
\]
Differentiating at \(\varepsilon=0\) gives
\[
    \frac{d}{d\varepsilon}J_\gamma(x+\varepsilon h)\bigg|_{\varepsilon=0}
    =
    2\langle x,h\rangle_X
    +
    2\gamma^{-1}\langle Lx-S,Lh\rangle_Z.
\]
Using the definition of the adjoint,
\(
    \langle Lx-S,Lh\rangle_Z
    =
    \langle L^\ast(Lx-S),h\rangle_X
\)
and therefore the first variation is
\(
    2\left\langle
        x+\gamma^{-1}L^\ast(Lx-S),
        h
    \right\rangle_X.
\)
The minimizer hence satisfies
\(
    x+\gamma^{-1}L^\ast(Lx-S)=0
\)
or equivalently,
\[
    (L^\ast L+\gamma I_X)x=L^\ast S.
\]

We now show that \(L^\ast L+\gamma I_X\) is boundedly invertible. For
\(x\in X\), we have
\[
    \langle (L^\ast L+\gamma I_X)x,x\rangle_X
    =
    \langle Lx,Lx\rangle_Z+\gamma\|x\|_X^2 =
    \|Lx\|_Z^2+\gamma\|x\|_X^2 \ge
    \gamma\|x\|_X^2,
\]
and hence the bilinear form
\[
   a:X\times X \mapsto \bbR \qquad \text{given by} \qquad a_X(x,h)
    :=
    \langle (L^\ast L+\gamma I_X)x,h\rangle_X
\]
is coercive. It is also bounded, since
\(
    |a_X(x,h)|
    \le
    (\|L\|_{\mathrm{op}}^2+\gamma)\|x\|_X\|h\|_X.
\)
By the Lax-Milgram theorem \cite[Corollary 5.8]{brezis2010functional} and the same argument as part 1 of the proof, \(L^\ast L+\gamma I_X:X\to X\) is boundedly
invertible and \eqref{eq:normal-equation-regularized-minimum-norm-formula} therefore has the unique
solution
\[
    \overline{x}_\gamma(S)
    =
    (L^\ast L+\gamma I_X)^{-1}L^\ast S.
\]

It remains to show the equivalent measurement-space formula
\eqref{eq:master-regularized-minimum-norm-formula}. First observe that
\(LL^\ast+\gamma I_Z:Z\to Z\) is also boundedly invertible. For
\(z\in Z\),
\[
    \langle (LL^\ast+\gamma I_Z)z,z\rangle_Z
    =
    \langle L^\ast z,L^\ast z\rangle_X+\gamma\|z\|_Z^2 =
    \|L^\ast z\|_X^2+\gamma\|z\|_Z^2 \ge
    \gamma\|z\|_Z^2.
\]
Thus \(LL^\ast+\gamma I_Z\) is coercive and bounded, and hence boundedly
invertible. Next, set
\(
    x_S:=L^\ast(LL^\ast+\gamma I_Z)^{-1}S.
\)
and 
\(
    c:=(LL^\ast+\gamma I_Z)^{-1}S.
\)
Then, 
\(
    (LL^\ast+\gamma I_Z)c=S
\)
and therefore,
\(
    L^\ast LL^\ast c+\gamma L^\ast c=L^\ast S.
\)
Since \(x_S=L^\ast c\), this becomes
\(
    (L^\ast L+\gamma I_X)x_S=L^\ast S
\)
and thus \(x_S\) solves the normal equation
\eqref{eq:normal-equation-regularized-minimum-norm-formula}. By uniqueness of the solution, we conclude that
\[
    \overline{x}_\gamma(S)
    =
    x_S
    =
    L^\ast(LL^\ast+\gamma I_Z)^{-1}S.
\]
\end{enumerate}
\end{proof}

\subsection{Proofs of the Main Results}

\subsubsection{Proofs for the General Learning Framework} \label{sec:proofs:general}

\begin{proof}[Proof of Theorem \ref{thm:induced-kernel-original-spaces}]
We prove the result by applying two standard operations for
operator-valued reproducing kernels: first a pullback in the input variable,
and then a pushforward in the output variable.

We start with the pullback construction. The pullback theorem \cite[Proposition 7]{Carmeli} with $\Psi = E_X: X \mapsto Z_X$ gives the kernel
\[
    \Gamma_{E_X}:X\times X\to \mathcal L(Z_Y),
    \qquad
    \Gamma_{E_X}(x,x')
    :=
    \Gamma(E_Xx,E_Xx')
\]
with associated RKHS
\begin{equation} \label{eq:kernelRepresentation:RKHSpullback}
        \mathcal H_{\Gamma_{E_X}} =
    \{f\circ E_X:f\in\mathcal H_\Gamma\}
\end{equation}
and minimal-representative norm
\begin{equation} \label{eq:kernelRepresentation:RKHSpullbackNorm}
    \|F_0\|_{\mathcal H_{\Gamma_{E_X}}}
    =
    \inf\left\{
        \|f\|_{\mathcal H_\Gamma}
        :
        F_0=f\circ E_X
    \right\}.
\end{equation}
We note that we only use the Hilbert-space part of the pullback theorem. The locally
compact, second countable assumptions appearing in some formulations of this
result are needed for additional topological conclusions \cite[Page 4]{Carmeli}.

We next apply the output transformation. Since
\(
    D_Y:Z_Y\to Y
\)
is bounded and linear, the output pushforward theorem \cite[Proposition 7]{Carmeli} applied to \(\Gamma_{E_X}\) with \(w=D_Y\) implies that
\[
    K:X\times X\to\mathcal L(Y),
    \qquad
    K(x,x')
    :=
    D_Y\Gamma_{E_X}(x,x')D_Y^\ast = D_Y\Gamma(E_Xx,E_Xx')D_Y^\ast.
\]
is a \(Y\)-valued reproducing kernel on \(X\). The output pushforward theorem also identifies the associated RKHS as
\[
    \mathcal H_K
    =
    \{D_Y\circ F_0:F_0\in\mathcal H_{\Gamma_{E_X}}\},
\]
with norm
\begin{equation} \label{eq:kernelRepresentation:RKHSpullforwardNorm}
    \|F\|_{\mathcal H_K}
    =
    \inf\left\{
        \|F_0\|_{\mathcal H_{\Gamma_{E_X}}}
        :
        F=D_Y\circ F_0
    \right\}.
\end{equation}
Using \eqref{eq:kernelRepresentation:RKHSpullback}, every
\(F_0\in\mathcal H_{\Gamma_{E_X}}\) has the form
\(
    F_0=f\circ E_X
\)
for some $f \in \cH_{\Gamma}$. 
Therefore
\[
    \mathcal H_K
    =
    \{D_Y\circ f\circ E_X:f\in\mathcal H_\Gamma\} 
\]
and, inserting \eqref{eq:kernelRepresentation:RKHSpullbackNorm} into \eqref{eq:kernelRepresentation:RKHSpullforwardNorm}, we obtain
\begin{equation} \label{eq:kernelRepresentation:NormK}
    \|F\|_{\mathcal H_K}
    =
    \inf\left\{
        \|f\|_{\mathcal H_\Gamma}
        :
        F=D_Y\circ f\circ E_X
    \right\}.
\end{equation}

It remains to prove the simplified statement under exact minimum-norm recovery. By Theorem \ref{thm:minimumNorm}, we have
\[
    D_X=E_X^\ast(E_XE_X^\ast)^{-1},
    \qquad
    D_Y=E_Y^\ast(E_YE_Y^\ast)^{-1}.
\]
Next, we assume that 
\[
    D_Y\circ f_1\circ E_X
    =
    D_Y\circ f_2\circ E_X.
\]
Then, applying \(E_Y\) on the left and \(D_X\) on the right gives
\(
    f_1=f_2.
\)
Hence the infimum in the minimal-representative norm in \eqref{eq:kernelRepresentation:NormK} is attained by the unique
representative \(f\), and therefore
\[
    \|D_Y\circ f\circ E_X\|_{\mathcal H_K}
    =
    \|f\|_{\mathcal H_\Gamma}.
\]
\end{proof}

\begin{proof}[Proof of Corollary \ref{cor:induced-measurement-space-equivalence}]
By Theorem~\ref{thm:induced-kernel-original-spaces}, every
\(F\in\mathcal H_K\) can be represented as
\(
    F=D_Y\circ f\circ E_X
\)
for some \(f\in\mathcal H_\Gamma\). 
For any such representative,
\(
    E_YF(x_i)
    =
    E_YD_Yf(E_Xx_i)
    =
    Af(U_i).
\)
Hence the constraints
\(
    E_YF(x_i)=S_i
\)
and
\(
    Af(U_i)=S_i
\)
are equivalent, under the representation \(F=D_Y\circ f \circ E_X\).
Since the norm in \(\mathcal H_K\), i.e. 
\[
\|F\|_{\mathcal H_K}
    =
    \inf\left\{
        \|f\|_{\mathcal H_\Gamma}
        :
        F=D_Y\circ f\circ E_X
    \right\},
\]
is the minimal
\(\mathcal H_\Gamma\)-norm over all representatives inducing the same \(F\),
minimizing over \(F\in\mathcal H_K\) is equivalent to minimizing over
representatives \(f\in\mathcal H_\Gamma\). This proves the equivalence of the
interpolation problems.

We now compute the interpolation formula. The observation map is
\[
    L_{\Gamma,A}:\mathcal H_\Gamma\to Z_Y^N,
    \qquad
    L_{\Gamma,A}f
    =
    (Af(U_1),\ldots,Af(U_N)).
\]
This map is bounded and linear because point evaluation is bounded in the RKHS
\(\mathcal H_\Gamma\) and \(A=E_YD_Y\) is bounded and linear by assumption on $D_Y$. For
\(c=(c_1,\ldots,c_N)\in Z_Y^N\), using the product inner product on \(Z_Y^N\), we compute as follows:
\begin{align}
    \langle L_{\Gamma,A} f,c\rangle_{Z_Y^N}
    &=
    \sum_{j=1}^N
    \langle Af(U_j),c_j\rangle_{Z_Y} \notag \\
    &=
    \sum_{j=1}^N
    \langle f(U_j),A^\ast c_j\rangle_{Z_Y} \notag\\
    &=
    \sum_{j=1}^N
    \left\langle
        f,
        \Gamma(\cdot,U_j)A^\ast c_j
    \right\rangle_{\mathcal H_\Gamma} \label{eq:cor:equivalenceRKHS}\\
    &=
    \left\langle
        f,
        \sum_{j=1}^N
        \Gamma(\cdot,U_j)A^\ast c_j
    \right\rangle_{\mathcal H_\Gamma} \notag
\end{align}
where \eqref{eq:cor:equivalenceRKHS} follows from the reproducing property on
\(\mathcal H_\Gamma\).

Therefore
\[
    L_{\Gamma,A}^\ast c
    =
    \sum_{j=1}^N
    \Gamma(\cdot,U_j)A^\ast c_j
\]
and
\[
    L_{\Gamma,A}L_{\Gamma,A}^\ast
    =
    \Gamma_A(\mathbf U,\mathbf U).
\]
If \(L_{\Gamma,A}\) is surjective, equivalently if
\(\Gamma_A(\mathbf U,\mathbf U)\) is boundedly invertible, the minimum-norm
formula from Theorem \ref{thm:minimumNorm} gives
\[
    \overline f
    =
    L_{\Gamma,A}^\ast
    (L_{\Gamma,A}L_{\Gamma,A}^\ast)^{-1}
    \mathbf S.
\]
Evaluating at \(U\in Z_X\) yields
\[
    \overline f(U)
    =
    \Gamma_A(U,\mathbf U)
    \Gamma_A(\mathbf U,\mathbf U)^{-1}
    \mathbf S.
\]
For ridge regression, Theorem \ref{thm:minimumNorm} yields
\[
    \overline f_\lambda(U)
    =
    \Gamma_A(U,\mathbf U)
    \bigl(\Gamma_A(\mathbf U,\mathbf U)+\lambda \Id_{Z_Y^N}\bigr)^{-1}
    \mathbf S.
\]
We note that these formulas correspond to standard kernel interpolation/ridge regression with kernel $A\Gamma(\cdot,\cdot)A^\ast$. The formulas for \(\overline F_\lambda\) follow by composing with \(D_Y\) and
\(E_X\).

Finally, assume that \(D_X\) and
\(D_Y\) are the exact minimum-norm recovery maps. Then,
Theorem~\ref{thm:minimumNorm} gives
\(
    A = E_YD_Y=I_{Z_Y}
\)
and the encoded residuals reduce to
\(
    Af(U_i)=f(U_i).
\)
The simplified interpolation and ridge-regression formulas follow by setting
\(A=I_{Z_Y}\) in the general formulas. 
\end{proof}

\begin{proof}[Proof of Theorem \ref{thm:encoder-decoder-learning-error}] 

For the first statement, observe that
\[
\begin{aligned}
    G(x)-\overline G(x)
    &=
    G(x)-D_Y\widehat G(E_Xx) \\
    &=
    \bigl(G(x)-D_YG_{\mathrm{enc}}(E_Xx)\bigr)
    +
    D_Y\bigl(G_{\mathrm{enc}}-\widehat G\bigr)(E_Xx).
\end{aligned}
\]
Taking norms and applying the triangle inequality gives
\[
    \|G(x)-\overline G(x)\|_Y
    \le
    \|G(x)-D_YG_{\mathrm{enc}}(E_Xx)\|_Y 
    +
    \|D_Y(G_{\mathrm{enc}}-\widehat G)(E_Xx)\|_Y.
\]
Using the definition of \(G_{\mathrm{enc}}\) gives the first estimate. 

For the second statement, writing
\(
    \mathbf G_{\mathrm{enc}}
    :=
    \bigl(G_{\mathrm{enc}}(U_1),\ldots,G_{\mathrm{enc}}(U_N)\bigr)
    \in Z_Y^N,
\)
we have (analogously to the derivation in the proof of Corollary \ref{cor:induced-measurement-space-equivalence})
\[
    \widehat G_{\mathrm{enc},\lambda}(z)
    =
    \Gamma(z,\mathbf U)
    \l\Gamma(\mathbf U,\mathbf U) + \lambda \Id_{Z_Y^N}\r^{-1}
    \mathbf G_{\mathrm{enc}}
\]
and
\[
    \widehat G_\lambda(z)
    =
    \Gamma(z,\mathbf U)
    \l\Gamma(\mathbf U,\mathbf U) + \lambda \Id_{Z_Y^N}\r^{-1}
    \mathbf S,
\]
where
\(
    \mathbf S=(S_1,\ldots,S_N)\in Z_Y^N.
\)
By definition of the data-consistency residuals, we have
\[
    \mathbf S
    =
    \mathbf G_{\mathrm{enc}}+\eta,
    \qquad
    \eta=(\eta_1,\ldots,\eta_N)\in Z_Y^N.
\]
Therefore, for $z \in Z_X$,
\begin{align*}
     \widehat G_\lambda(z)-\widehat G_{\mathrm{enc}}(z)
    &=
    \Gamma(z,\mathbf U)
    \l \Gamma(\mathbf U,\mathbf U) + \lambda\Id_{Z_Y^N} \r^{-1}
    \bigl(\mathbf S-\mathbf G_{\mathrm{enc}}\bigr) \\
    &=
    \Gamma(z,\mathbf U)
    \l \Gamma(\mathbf U,\mathbf U) + \lambda\Id_{Z_Y^N} \r^{-1}
    \eta \\
    &=
    (\mathcal R_{\mathbf U,\lambda}\eta)(z).
\end{align*}
Therefore,
\(
    G_{\mathrm{enc}}-\widehat G_\lambda
    =
    \bigl(G_{\mathrm{enc}}-\widehat G_{\mathrm{enc},\lambda}\bigr)
    -
    \mathcal R_{\mathbf U,\lambda}\eta
\)
and 
\[
    \|(G_{\mathrm{enc}}-\widehat G_\lambda)(E_Xx)\|_{Z_Y}
    \le
    \|(G_{\mathrm{enc}}-\widehat G_{\mathrm{enc},\lambda})(E_Xx)\|_{Z_Y} 
    +
    \|\mathcal R_{\mathbf U,\lambda}\eta(E_Xx)\|_{Z_Y}.
\]
Plugging this into the first statement proves (a). 

Finally, applying \cite[Theorem 5.4]{OWHADI2023133592} yields
\(
    \|(G_{\mathrm{enc}}-\widehat G_{\mathrm{enc,\lambda}})(z)\|_{Z_Y}
    \le
    Q_{N,\lambda}(z) \|G_{\mathrm{enc}}\|_{\mathcal H_{\Gamma,\lambda}}.
\)
Inserting this with $z = E_X(x)$ into (a) yields (b).
\end{proof}

\subsubsection{Proofs for Kernel Learning for Multi-Input, Multi-Output Operators} \label{sec:proofs:multi}

\begin{proof}[Proof of Proposition \ref{prop:encoder-decoder-reconstruction-error}]
Let \(x\in B_R(X)\). We estimate as follows:
\begin{align}
    \|G(x)-D_YE_YG(D_XE_Xx)\|_Y
&\le
\|G(x)-G(D_XE_Xx)\|_Y +
\|G(D_XE_Xx)-D_YE_YG(D_XE_Xx)\|_Y \notag \\
&\leq \omega\left(
    \|x-D_XE_Xx\|_X
\right) + \delta_Y\left(
    G(D_XE_Xx)
\right) \label{eq:prop:eq1} \\
&\leq \omega\left(
    \delta_X(x)
\right) + \delta_Y\left(
    G(D_XE_Xx)
\right) \label{eq:prop:eq2}
\end{align}
where we used Assumption \ref{assumption:regularityG} for \eqref{eq:prop:eq1} and \eqref{eq:prop:reconstructionError} for \eqref{eq:prop:eq1} and \eqref{eq:prop:eq2}. The proof of the second identity is analogous.
\end{proof}

\begin{proof}[Proof of Lemma \ref{lem:sobolev}]
Throughout the proof, \(C>0\) denotes a generic constant independent of
the sampling fill distances and of the functions being reconstructed.
Its value may change from line to line.

We first establish the reconstruction estimate
for a fixed \(j\)-th component. For \(u_j\in\mathcal H_j\), define
\(
e:=u_j-D_j(E_ju_j).
\)
By Theorem \ref{thm:minimumNorm} and definition of $D_j$, we have
\(
E_j(D_j(E_ju_j))
=
E_ju_j
\)
and therefore
\(
E_je=0,
\)
or equivalently,
\begin{equation} \label{eq:lem:sobolev:restriction}
    e|_{A_j}=0.
\end{equation}

Using \eqref{eq:lem:sobolev:restriction} in \cite[Theorem 4.1]{arcangeli2007} with
\(
r=s_j
\)
and each integer derivative order
\(
\ell=0,\ldots,t_j,
\)
yields
\begin{equation}\label{eq:lem:sobolev:seminorm}
|e|_{\Wkp{\ell}{q_j}(\Omega_j)}
\le
C
h_j^{
s_j-\ell
-
n_j\left(\frac1{p_j}-\frac1{q_j}\right)_+
}
|e|_{\Wkp{s_j}{p_j}(\Omega_j)}
\end{equation}
as soon as $h_j \leq h_j^{0}$. Define
\(
\alpha_j
:=
s_j-t_j
-
n_j
\left(
\frac1{p_j}-\frac1{q_j}
\right)_+
\)
so that, for every \(\ell=0,\ldots,t_j\), 
\(
s_j-\ell
-
n_j
\left(
\frac1{p_j}-\frac1{q_j}
\right)_+
=
\alpha_j+t_j-\ell
\)
and
\[
h_j^{
s_j-\ell
-
n_j
\left(
\frac1{p_j}-\frac1{q_j}
\right)_+
}
=
h_j^{\alpha_j}
h_j^{t_j-\ell} \leq h_j^{\alpha_j} \max\{1,h_j^{0}\}^{t_j-\ell} \leq h_j^{\alpha_j} \max\{1,h_j^{0}\}^{l_{0,j}} \leq C h_j^{\alpha_j}
\]
where we used the fact that \(h_j\le h_j^0\) for the first inequality. By definition of the  Sobolev norms (with the usual modification when
\(q_j=\infty\)) and inserting the latter in \eqref{eq:lem:sobolev:seminorm}, we conclude that
\[
\|e\|_{\Wkp{t_j}{q_j}(\Omega_j)}
\le
C
h_j^{\alpha_j}
\|e\|_{\Wkp{s_j}{p_j}(\Omega_j)} \leq C h_j^{\alpha_j}
\|e\|_{\cH_j}
\]
where we used the fact that $\mathcal H_j
\hookrightarrow
\Wkp{s_j}{p_j}(\Omega_j)$ for the last inequality. Moreover,
\(
e=u_j-D_j(E_ju_j)\in\ker(E_j),
\)
while
\(
D_j(E_ju_j)\in\ker(E_j)^\perp,
\)
as was shown in the proof of Theorem \ref{thm:minimumNorm}.
Therefore,
\(
u_j=D_j(E_ju_j)+e
\)
is an orthogonal decomposition in \(\mathcal H_j\), and
\[
\|u_j\|_{\mathcal H_j}^2
=
\|D_j(E_ju_j)\|_{\mathcal H_j}^2
+
\|e\|_{\mathcal H_j}^2 \geq \|e\|_{\mathcal H_j}^2
\]
and, combining the preceding estimates, we conclude that
\[
\|u_j-D_j(E_ju_j)\|_{\Wkp{t_j}{q_j}(\Omega_j)}
\le
C_j
h_j^{\alpha_j}
\|u_j\|_{\mathcal H_j}.
\]

Since the above argument holds for every
\(j=1,\ldots,J\), the definition of the product norm yields
\[
\begin{aligned}
\|u-DEu\|_X^2
&=
\sum_{j=1}^J
\|u_j-D_j(E_ju_j)\|_{\Wkp{t_j}{q_j}(\Omega_j)}^2 \le
\sum_{j=1}^J
C_j^2
h_j^{2\alpha_j}
\|u_j\|_{\mathcal H_j}^2.
\end{aligned}
\]
\end{proof}

\begin{proof}[Proof of Lemma \ref{lem:finite-dimensional-learning-error}]
Throughout the proof, \(C>0\) denotes a generic constant independent of
the sampling fill distances and of the functions being reconstructed.
Its value may change from line to line.

Since \(f\in\mathcal H_\Gamma\), it is an admissible
competitor in the minimization problem defining
\(\widehat f_\lambda\). Therefore,
\begin{equation} \label{eq:prop:learningTerm:eq1}
    \sum_{i=1}^N
\|\widehat f_\lambda(U_i)-f(U_i)\|_2^2
+
\lambda
\|\widehat f_\lambda\|_{\mathcal H_\Gamma}^2
\le
\lambda
\|f\|_{\mathcal H_\Gamma}^2.
\end{equation}
Since
\(
e_\lambda(U_i)
=
f(U_i)-\widehat f_\lambda(U_i),
\)
it follows that
\(
\sum_{i=1}^N
\|e_\lambda(U_i)\|_2^2
\le
\lambda
\|f\|_{\mathcal H_\Gamma}^2,
\)
or, equivalently,
\begin{equation}
\|e_\lambda\|_{\ell^2(\mathbf U_N;\mathbb R^{d_Y})}
\le
\sqrt{\lambda}
\|f\|_{\mathcal H_\Gamma}.
\label{eq:KRR-discrete-residual}
\end{equation}
Similarly, \eqref{eq:prop:learningTerm:eq1} also gives
\(
\|\widehat f_\lambda\|_{\mathcal H_\Gamma}
\le
\|f\|_{\mathcal H_\Gamma}
\)
and therefore,
\begin{equation} \label{eq:prop:learningTerm:norm}
    \|e_\lambda\|_{\mathcal H_\Gamma} =
\|f-\widehat f_\lambda\|_{\mathcal H_\Gamma}
\le
\|f\|_{\mathcal H_\Gamma}
+
\|\widehat f_\lambda\|_{\mathcal H_\Gamma}
\le
2
\|f\|_{\mathcal H_\Gamma}.
\end{equation}

We now apply the vector-valued sampling inequality of \cite[Theorem 17]{LeGia2026} to
\(
e_\lambda \in \cH_\Gamma
\)
with input dimension \(d_X\), output dimension \(d_Y\), and discrete
exponent \(p=2\). Since
\(
\gamma=\max\{2,2,q\}=\max\{2,q\},
\)
we obtain:
\begin{align}
    |e_\lambda|_{\Wkp{s}{q}(\Upsilon;\mathbb R^{d_Y})} & \le
C\Bigg( h_{\mathrm{tr}}^{
\tau-s-d_X\left(\frac12-\frac1q\right)_+
}
|e_\lambda|_{\Hk{\tau}(\Upsilon;\mathbb R^{d_Y})}
+
h_{\mathrm{tr}}^{d_X/\gamma-s}
\|e_\lambda\|_{\ell^2(\mathbf U_N;\mathbb R^{d_Y})}
\Bigg) \notag \\
&\leq C\Bigg( h_{\mathrm{tr}}^{
\tau-s-d_X\left(\frac12-\frac1q\right)_+
}
\|e_\lambda\|_{\cH_\Gamma}
+
h_{\mathrm{tr}}^{d_X/\gamma-s}
\|e_\lambda\|_{\ell^2(\mathbf U_N;\mathbb R^{d_Y})}
\Bigg)  \label{eq:prop:learningTerm:eq2} \\
&\leq C\Bigg( h_{\mathrm{tr}}^{
\tau-s-d_X\left(\frac12-\frac1q\right)_+
}
\|f\|_{\mathcal H_\Gamma}
+
h_{\mathrm{tr}}^{d_X/\gamma-s}
\sqrt{\lambda}
\|f\|_{\mathcal H_\Gamma}
\Bigg)  \label{eq:prop:learningTerm:eq3}
\end{align}
where we used the continuous embedding
\(
\mathcal H_\Gamma
\hookrightarrow
\Hk{\tau}(\Upsilon;\mathbb R^{d_Y})
\)
for \eqref{eq:prop:learningTerm:eq2} and \eqref{eq:prop:learningTerm:norm} as well as \eqref{eq:KRR-discrete-residual} for \eqref{eq:prop:learningTerm:eq3}.

Taking
\(
q=\infty
\)
and
\(
s=0,
\)
we have
\[
\left(\frac12-\frac1q\right)_+
=
\frac12,
\qquad
\gamma=\infty,
\qquad
\frac{d_X}{\gamma}=0
\]
which implies
\begin{equation} \notag \label{eq:prop:special}
   \|e_\lambda\|_{\Lp{\infty}(\Upsilon;\mathbb R^{d_Y})}
\le
C
\left(
h_{\mathrm{tr}}^{\tau-d_X/2}
+
\sqrt{\lambda}
\right)
\|f\|_{\mathcal H_\Gamma}. 
\end{equation}
Using the latter, we conclude that, for every
\(
x\in X
\)
such that
\(
E_Xx\in\Upsilon,
\)
\begin{align*}
    \|e_\lambda(E_Xx)\|_2 &\leq \|e_\lambda\|_{\Lp{\infty}(\Upsilon;\mathbb R^{d_Y})} \leq C
\left(
h_{\mathrm{tr}}^{\tau-d_X/2}
+
\sqrt{\lambda}
\right)
\|f\|_{\mathcal H_\Gamma}. 
\end{align*}

\end{proof}

\begin{proof}[Proof of Theorem \ref{thm:total-product-sobolev-error}]
For \(j=1,\ldots,J_X\) and \(k=1,\ldots,J_Y\), define
\[
\alpha_{X,j}
:=
s_{X,j}-t_{X,j}
-
n_{X,j}
\left(
\frac{1}{p_{X,j}}-\frac{1}{q_{X,j}}
\right)_+
\]
and
\[
\alpha_{Y,k}
:=
s_{Y,k}-t_{Y,k}
-
n_{Y,k}
\left(
\frac{1}{p_{Y,k}}-\frac{1}{q_{Y,k}}
\right)_+.
\]
Fix
\(
x\in B_R(\mathcal H_X) 
\)
and for brevity, write
\(
E_X:=E_{\mathcal H_X},
\)
\(
D_X:=D_{\mathcal H_X},
\)
\(
E_Y:=E_{\mathcal H_Y},
\)
\(
D_Y:=D_{\mathcal H_Y}.
\)
By Theorem \ref{thm:encoder-decoder-learning-error}, we have: 
\begin{align}
    &\|G(x)-\overline{G}(x)\|_Y \notag \\
    &\le
    \|G(x)-D_YE_YG(D_XE_Xx)\|_Y +
    \|D_Y\|_{\mathrm{op}}\,
    \|(G_{\mathrm{enc}}-\widehat f_\lambda)(E_Xx)\|_{2} + \|D_Y\|_{\mathrm{op}}\,
    \|\mathcal R_{\mathbf U,\lambda}\eta(E_Xx)\|_{2} \notag \\
    &\leq \omega\left(\delta_X(x)\right) + \delta_Y\left(G(D_XE_Xx)\right) +
    \|D_Y\|_{\mathrm{op}}\,
    \|(G_{\mathrm{enc}}-\widehat G_{\mathrm{enc},\lambda})(E_Xx)\|_{2} + \|D_Y\|_{\mathrm{op}}\,
    \|\mathcal R_{\mathbf U,\lambda}\eta(E_Xx)\|_{2} \label{eq:thm:multitaskKernel:eq1} \\
    &\leq \omega \l \left[
\sum_{j=1}^{J_X}
C_{X,j}^2
h_{X,j}^{2\alpha_{X,j}}
\|x_j\|_{\mathcal H_{X,j}}^2
\right]^{1/2} \r +  \left[
\sum_{k=1}^{J_Y}
C_{Y,k}^2
h_{Y,k}^{2\alpha_{Y,k}}
\|(G(D_XE_Xx))_k\|_{\mathcal H_{Y,k}}^2
\right]^{1/2} \notag \\
&+ \|D_Y\|_{\mathrm{op}}\,
    \|(G_{\mathrm{enc}}-\widehat G_{\mathrm{enc},\lambda})(E_Xx)\|_{2} + \|D_Y\|_{\mathrm{op}}\,
    \|\mathcal R_{\mathbf U,\lambda}\eta(E_Xx)\|_{2} \label{eq:thm:multitaskKernel:eq2} \\
&\leq \omega \l  \l \max_{1 \leq j \leq J_X} C_{X,j}
h_{X,j}^{\alpha_{X,j}}  \r \left[
\sum_{j=1}^{J_X}
\|x_j\|_{\mathcal H_{X,j}}^2
\right]^{1/2} \r +  \l \max_{1 \leq k \leq J_Y} C_{Y,k} h_{Y,k}^{\alpha_{Y,k}} \r \left[
\sum_{k=1}^{J_Y}
\|(G(D_XE_Xx))_k\|_{\mathcal H_{Y,k}}^2
\right]^{1/2} \notag \\
&+ \|D_Y\|_{\mathrm{op}}\,
    C
\left(
h_{\mathrm{tr}}^{\tau-d_X/2}
+
\sqrt{\lambda}
\right)
\|G_{\mathrm{enc}}\|_{\mathcal H_\Gamma} + \|D_Y\|_{\mathrm{op}}\,
    \|\mathcal R_{\mathbf U,\lambda}\eta(E_Xx)\|_{2} \label{eq:thm:multitaskKernel:eq3} \\
&= \omega \l \l \max_{1 \leq j \leq J_X} C_{X,j}
h_{X,j}^{\alpha_{X,j}}  \r \Vert x \Vert_{\cH_X} \r +  \l \max_{1 \leq k \leq J_Y} C_{Y,k} h_{Y,k}^{\alpha_{Y,k}} \r 
\|G(D_XE_Xx)\|_{\mathcal H_{Y}}\notag \\
&+ \|D_Y\|_{\mathrm{op}}\,
    C
\left(
h_{\mathrm{tr}}^{\tau-d_X/2}
+
\sqrt{\lambda}
\right)
\|G_{\mathrm{enc}}\|_{\mathcal H_\Gamma} + \|D_Y\|_{\mathrm{op}}\,
    \|\mathcal R_{\mathbf U,\lambda}\eta(E_Xx)\|_{2} \notag \\
&\leq \omega \l R \max_{1 \leq j \leq J_X} C_{X,j}
h_{X,j}^{\alpha_{X,j}}   \r +  M_R  \max_{1 \leq k \leq J_Y} C_{Y,k} h_{Y,k}^{\alpha_{Y,k}} 
+ \|D_Y\|_{\mathrm{op}}\,
    C
\left(
h_{\mathrm{tr}}^{\tau-d_X/2} +
\sqrt{\lambda}
\right)
\|G_{\mathrm{enc}}\|_{\mathcal H_\Gamma}\\
&+ \|D_Y\|_{\mathrm{op}}\,
    \|\mathcal R_{\mathbf U,\lambda}\eta(E_Xx)\|_{2} \label{eq:thm:multitaskKernel:eq4}
\end{align}
where we used Proposition \ref{prop:encoder-decoder-reconstruction-error} for \eqref{eq:thm:multitaskKernel:eq1}, Lemma \ref{lem:sobolev} for \eqref{eq:thm:multitaskKernel:eq2}, Lemma \ref{lem:finite-dimensional-learning-error} for \eqref{eq:thm:multitaskKernel:eq3} and Assumption \ref{assumption:regularityG2} for \eqref{eq:thm:multitaskKernel:eq4}.

\end{proof}

\section{Experiment Setup}

\subsection{Out-of-distribution datasets}
\label{Appendix:ood}

In this section, we summarize the details of out-of-distribution datasets for five PDE benchmarks for both operator-valued and product-space learning frameworks.

We first consider the operator-valued learning framework.
For PDE examples (conservation law, diffusion-reaction-advection equation, and nonlinear Klein Gordon equation) where we only have finite-dimensional parameter vectors, we take the following out-of-distribution parameter vectors:
\begin{itemize}
\item {\bf Conservation law:} The components of parameter vector $\alpha=[\alpha_1,\alpha_2,\alpha_3,\alpha_4]^\top$ are sampled from the ranges $\alpha_i\in[0.8\alpha_i^c, 1.2\alpha_i^c]$ with reference values given by $\alpha^c=[1,1,1,0.1]^\top$ for $i=1,2,3,4$. 
\item {\bf Diffusion-reaction-advection:} The first three components of parameter vector $\alpha=[\alpha_1,\alpha_2,\alpha_3,\alpha_4, \alpha_5]^\top$ are sampled from the ranges $\alpha_i\in[0.8\alpha_i^c, 1.2\alpha_i^c]$ with reference values given by $\alpha^c=[0.01,1,1]^\top$ for $i=1,2,3$. 
\item {\bf Nonlinear Klein Gordon:} The components of parameter vector $\alpha=[\alpha_1,\alpha_2,\alpha_3]^\top$ are sampled from the ranges $\alpha_i\in[0.8\alpha_i^c, 1.2\alpha_i^c]$ with reference values given by $\alpha^c=[1,1,1]^\top$ for $i=1,2,3$. 
\end{itemize}
For the parametric diffusion--reaction equation and the parametric wave equation, we generate out-of-distribution test samples by randomly drawing the parametric function from Gaussian processes whose distributions differ from the one used for training. In particular, we consider the following three out-of-distribution datasets:
\begin{itemize}
\item {\bf OOD\_matern}: Gaussian process with a Mat{\'e}rn kernel (length scale parameter $\gamma=1$ and smoothness parameter $\nu=0.5$) and variance $1$.
\item {\bf OOD\_scale}: Gaussian process with an RBF kernel (length scale parameter $\gamma=0.1$) and variance $1$.
\item {\bf OOD\_var}: Gaussian process with an RBF kernel (length scale parameter $\gamma=1$) and variance $0.1^2$.
\end{itemize}

Next, we consider the product-space learning framework.
For the conservation law, diffusion--reaction--advection equation, and nonlinear Klein--Gordon equation, we consider the following out-of-distribution datasets:
\begin{itemize}
\item {\bf OOD\_par}: the same out-of-distribution of parameter vectors as above.
\item {\bf OOD\_init\_GP}: the initial conditions are randomly drawn from a Gaussian process with an RBF kernel (length scale parameter $\gamma=1$) and variance $1$.
\item {\bf OOD\_init\_amp}: the initial conditions follow the sinusoidal formulation with four sine functions, $n_i$ uniformly sampled integers in $[1,8]$, and amplitudes $A_i$ sampled uniformly from $[-2,2]$.
\item {\bf OOD\_par\_init\_GP}: the combination of {\bf OOD\_par} and {\bf OOD\_init\_GP}.
\item {\bf OOD\_par\_init\_amp}: the combination of {\bf OOD\_par} and {\bf OOD\_init\_amp}.
\end{itemize}
For the parametric diffusion--reaction equation and the parametric wave equation, we consider the following out-of-distribution datasets:
\begin{itemize}
\item {\bf OOD\_par\_kernel}: the same as {\bf OOD\_matern} in the operator-valued learning framework.
\item {\bf OOD\_par\_scale}: the same as {\bf OOD\_scale} in the operator-valued learning framework.
\item {\bf OOD\_par\_var}: the same as {\bf OOD\_var} in the operator-valued learning framework.
\item {\bf OOD\_init}: the same as {\bf OOD\_init\_amp} above.
\item {\bf OOD\_par\_init}: the combination of {\bf OOD\_par\_scale} and {\bf OOD\_init}.
\end{itemize}

\subsection{Hyperparameter choices} \label{sec:hyperparameters}

\begin{table}[H]
\centering
\small
\begin{tabular}{lccccc}
\toprule
Method & $k$ & $k_W$ & $k_U$ & PCA & Configuration \\
\midrule
\KernelMOOV
& \makecell[l]{R ($\gamma=0.1$)\\M ($\gamma=1,\nu=5/2$)}
& --
& --
& $v$ (10 PCs)
& -- \\

\KernelMOPS
& --
& \makecell[l]{R ($\gamma=100$)\\M ($\gamma=1,\nu=5/2$)}
& \makecell[l]{R ($\gamma=100$)\\M ($\gamma=1,\nu=5/2$)}
& $u,v$ (10 PCs)
& -- \\

DeepONet-C
& --
& --
& --
& --
& Small \\

MIONet
& --
& --
& --
& --
& Medium \\

MNO
& --
& --
& --
& --
& Large \\
\bottomrule
\end{tabular}
\caption{Hyperparameter choices and implementation details for the operator-valued learning experiments on the parametric conservation law.}
\label{tab:conservationOV}
\end{table}

\begin{table}[H]
\centering
\small
\begin{tabular}{lccccc}
\toprule
Method & $k$ & $k_W$ & $k_U$ & PCA & Configuration \\
\midrule
\KernelO
& \makecell[l]{R ($\gamma=100$)\\M ($\gamma=1,\nu=5/2$)}
& --
& --
& $u,v$ (10 PCs)
& -- \\

\KernelMOPS
& --
& \makecell[l]{R ($\gamma=1$)\\M ($\gamma=1,\nu=5/2$)}
& \makecell[l]{R ($\gamma=100$)\\M ($\gamma=100,\nu=3/2$)}
& $u,v$ (10 PCs)
& -- \\

DeepONet
& --
& --
& --
& --
& Small \\

DeepONet-C
& --
& --
& --
& --
& Small \\

MIONet
& --
& --
& --
& --
& Medium \\

MNO
& --
& --
& --
& --
& Large \\
\bottomrule
\end{tabular}
\caption{Hyperparameter choices and implementation details for the product-space learning experiments on the parametric conservation law.}
\label{tab:conservationPS}
\end{table}

\begin{table}[H]
\centering
\small
\begin{tabular}{lccccc}
\toprule
Method & $k$ & $k_W$ & $k_U$ & PCA & Configuration \\
\midrule
\KernelMOOV
& \makecell[l]{R ($\gamma=1$)\\M ($\gamma=10,\nu=7/2$)}
& --
& --
& $v$ (10 PCs)
& -- \\

\KernelMOPS
& --
& \makecell[l]{R ($\gamma=100$)\\M ($\gamma=10,\nu=5/2$)}
& \makecell[l]{R ($\gamma=100$)\\M ($\gamma=10,\nu=5/2$)}
& $u,v$ (10 PCs)
& -- \\

DeepONet-C
& --
& --
& --
& --
& Small \\

MIONet
& --
& --
& --
& --
& Small \\

MNO
& --
& --
& --
& --
& Large \\
\bottomrule
\end{tabular}
\caption{Hyperparameter choices and implementation details for the operator-valued learning experiments on the parametric diffusion--reaction--advection PDE.}
\label{tab:diffusionReactionAdvectionOV}
\end{table}

\begin{table}[H]
\centering
\small
\begin{tabular}{lccccc}
\toprule
Method & $k$ & $k_W$ & $k_U$ & PCA & Configuration \\
\midrule
KernelO
& \makecell[l]{R ($\gamma=10$)\\M ($\gamma=1,\nu=5/2$)}
& --
& --
& $u,v$ (10 PCs)
& -- \\

\KernelMOPS
& --
& \makecell[l]{R ($\gamma=1$)\\M ($\gamma=1,\nu=5/2$)}
& \makecell[l]{R ($\gamma=100$)\\M ($\gamma=100,\nu=3/2$)}
& $u,v$ (10 PCs)
& -- \\

DeepONet
& --
& --
& --
& --
& Small \\

DeepONet-C
& --
& --
& --
& --
& Small \\

MIONet
& --
& --
& --
& --
& Small \\

MNO
& --
& --
& --
& --
& Large \\
\bottomrule
\end{tabular}
\caption{Hyperparameter choices and implementation details for the product-space learning experiments on the diffusion--reaction--advection equation.}
\label{tab:diffusionReactionAdvectionPS}
\end{table}

\begin{table}[H]
\centering
\small
\begin{tabular}{lccccc}
\toprule
Method & $k$ & $k_W$ & $k_U$ & PCA & Configuration \\
\midrule
\KernelMOOV
& \makecell[l]{R ($\gamma=1$)\\M ($\gamma=1,\nu=5/2$)}
& --
& --
& $v$ (10 PCs)
& -- \\

\KernelMOPS
& --
& \makecell[l]{R ($\gamma=100$)\\M ($\gamma=1,\nu=5/2$)}
& \makecell[l]{R ($\gamma=100$)\\M ($\gamma=1,\nu=7/2$)}
& $u,v$ (10 PCs)
& -- \\

DeepONet-C
& --
& --
& --
& --
& Small \\

MIONet
& --
& --
& --
& --
& Medium \\

MNO
& --
& --
& --
& --
& Medium \\
\bottomrule
\end{tabular}
\caption{Hyperparameter choices and implementation details for the operator-valued learning experiments on the nonlinear Klein--Gordon equation.}
\label{tab:kleinGordonOV}
\end{table}

\begin{table}[H]
\centering
\small
\begin{tabular}{lccccc}
\toprule
Method & $k$ & $k_W$ & $k_U$ & PCA & Configuration \\
\midrule
KernelO
& \makecell[l]{R ($\gamma=10$)\\M ($\gamma=1,\nu=5/2$)}
& --
& --
& $u,v$ (10 PCs)
& -- \\

\KernelMOPS
& --
& \makecell[l]{R ($\gamma=1$)\\M ($\gamma=1,\nu=5/2$)}
& \makecell[l]{R ($\gamma=10$)\\M ($\gamma=100,\nu=5/2$)}
& $u,v$ (10 PCs)
& -- \\

DeepONet
& --
& --
& --
& --
& Large \\

DeepONet-C
& --
& --
& --
& --
& Medium \\

MIONet
& --
& --
& --
& --
& Medium \\

MNO
& --
& --
& --
& --
& Large \\
\bottomrule
\end{tabular}
\caption{Hyperparameter choices and implementation details for the product-space learning experiments on the nonlinear Klein--Gordon equation.}
\label{tab:kleinGordonPS}
\end{table}

\begin{table}[H]
\centering
\small
\begin{tabular}{lccccc}
\toprule
Method & $k$ & $k_W$ & $k_U$ & PCA & Configuration \\
\midrule
\KernelMOOV
& \makecell[l]{R ($\gamma=0.1$)\\M ($\gamma=0.1,\nu=5/2$)}
& --
& --
& $\alpha,v$ (10 PCs)
& -- \\

\KernelMOPS
& --
& \makecell[l]{R ($\gamma=100$)\\M ($\gamma=1,\nu=5/2$)}
& \makecell[l]{R ($\gamma=100$)\\M ($\gamma=0.1,\nu=5/2$)}
& $\alpha,u,v$ (10 PCs)
& -- \\

DeepONet-C
& --
& --
& --
& --
& Medium \\

MIONet
& --
& --
& --
& --
& Medium \\

MNO
& --
& --
& --
& --
& Large \\
\bottomrule
\end{tabular}
\caption{Hyperparameter choices and implementation details for the operator-valued learning experiments on the parametric diffusion--reaction equation.}
\label{tab:parametricDiffusionReactionOV}
\end{table}

\begin{table}[H]
\centering
\small
\begin{tabular}{lccccc}
\toprule
Method & $k$ & $k_W$ & $k_U$ & PCA & Configuration \\
\midrule
KernelO
& \makecell[l]{R ($\gamma=100$)\\M ($\gamma=100,\nu=3/2$)}
& --
& --
& $u,v$ (10 PCs)
& -- \\

\KernelMOPS
& --
& \makecell[l]{R ($\gamma=1$)\\M ($\gamma=1,\nu=5/2$)}
& \makecell[l]{R ($\gamma=1000$)\\M ($\gamma=1000,\nu=5/2$)}
& $\alpha,u,v$ (10 PCs)
& -- \\

DeepONet
& --
& --
& --
& --
& Small \\

DeepONet-C
& --
& --
& --
& --
& Small \\

MIONet
& --
& --
& --
& --
& Medium \\

MNO
& --
& --
& --
& --
& Large \\
\bottomrule
\end{tabular}
\caption{Hyperparameter choices and implementation details for the product-space learning experiments on the parametric diffusion--reaction equation.}
\label{tab:parametricDiffusionReactionPS}
\end{table}

\begin{table}[H]
\centering
\small
\begin{tabular}{lccccc}
\toprule
Method & $k$ & $k_W$ & $k_U$ & PCA & Configuration \\
\midrule
\KernelMOOV
& \makecell[l]{R ($\gamma=1$)\\M ($\gamma=1,\nu=5/2$)}
& --
& --
& $\alpha,v$ (10 PCs)
& -- \\

\KernelMOPS
& --
& \makecell[l]{R ($\gamma=1$)\\M ($\gamma=1,\nu=5/2$)}
& \makecell[l]{R ($\gamma=1000$)\\M ($\gamma=1,\nu=5/2$)}
& $\alpha,u,v$ (10 PCs)
& -- \\

DeepONet-C
& --
& --
& --
& --
& Medium \\

MIONet
& --
& --
& --
& --
& Small \\

MNO
& --
& --
& --
& --
& Large \\
\bottomrule
\end{tabular}
\caption{Hyperparameter choices and implementation details for the operator-valued learning experiments on the parametric wave equation.}
\label{tab:parametricWaveOV}
\end{table}

\begin{table}[H]
\centering
\small
\begin{tabular}{lccccc}
\toprule
Method & $k$ & $k_W$ & $k_U$ & PCA & Configuration \\
\midrule
KernelO
& \makecell[l]{R ($\gamma=10$)\\M ($\gamma=100,\nu=3/2$)}
& --
& --
& $u,v$ (10 PCs)
& -- \\

\KernelMOPS
& --
& \makecell[l]{R ($\gamma=1$)\\M ($\gamma=1,\nu=5/2$)}
& \makecell[l]{R ($\gamma=1000$)\\M ($\gamma=1000,\nu=5/2$)}
& $\alpha,u,v$ (10 PCs)
& -- \\

DeepONet
& --
& --
& --
& --
& Large \\

DeepONet-C
& --
& --
& --
& --
& Medium \\

MIONet
& --
& --
& --
& --
& Small \\

MNO
& --
& --
& --
& --
& Large \\
\bottomrule
\end{tabular}
\caption{Hyperparameter choices and implementation details for the product-space learning experiments on the parametric wave equation.}
\label{tab:parametricWavePS}
\end{table}

\end{document}